\documentclass{article} % For LaTeX2e
\usepackage{iclr2023_conference,times}

% Optional math commands from https://github.com/goodfeli/dlbook_notation.

\usepackage{amsmath,amsfonts,bm}
\usepackage{amsthm}
\usepackage{mathtools}
\usepackage{algorithm,algorithmic}
\usepackage{hyperref}
\usepackage{url}
\usepackage{hyperref}
\usepackage{url}
\usepackage{microtype}
\usepackage{graphicx}
\usepackage{subfigure}
\usepackage{booktabs} % for professional tables
\usepackage{natbib}
\usepackage{multirow}
\usepackage{xspace}
\usepackage[capitalize]{cleveref}

\usepackage{algorithm}
\usepackage{algorithmic}
\usepackage{wrapfig}
%%%%% NEW MATH DEFINITIONS %%%%%

\usepackage{amsmath,amsfonts,bm}

% Mark sections of captions for referring to divisions of figures

% Highlight a newly defined term

% Figure reference, lower-case.

% Figure reference, capital. For start of sentence

% Section reference, lower-case.

% Section reference, capital.

% Reference to two sections.

% Reference to three sections.

% Reference to an equation, lower-case.
\def\eqref#1{equation~\ref{#1}}
% Reference to an equation, upper case

% A raw reference to an equation---avoid using if possible
\def\plaineqref#1{(\ref{#1})}
% Reference to a chapter, lower-case.

% Reference to an equation, upper case.

% Reference to a range of chapters

% Reference to an algorithm, lower-case.

% Reference to an algorithm, upper case.

% Reference to a part, lower case

% Reference to a part, upper case

\def\1{\bm{1}}

% Random variables

% rm is already a command, just don't name any random variables m

% Random vectors

% Elements of random vectors

% Random matrices

% Elements of random matrices

% Vectors

\def\vq{{\bm{q}}}

% Elements of vectors

% Matrix

% Tensor
\DeclareMathAlphabet{\mathsfit}{\encodingdefault}{\sfdefault}{m}{sl}
\SetMathAlphabet{\mathsfit}{bold}{\encodingdefault}{\sfdefault}{bx}{n}

% Graph

% Sets

% Don't use a set called E, because this would be the same as our symbol
% for expectation.

\def\sH{{\mathbb{H}}}
\def\sI{{\mathbb{I}}}

\def\sR{{\mathbb{R}}}

\def\cA{{\mathcal{A}}}

\def\cD{{\mathcal{D}}}

\def\cL{{\mathcal{L}}}

\def\cS{{\mathcal{S}}}
\def\cT{{\mathcal{T}}}

\def\cX{{\mathcal{X}}}

% Entries of a matrix

% entries of a tensor
% Same font as tensor, without \bm wrapper

% The true underlying data generating distribution

% The empirical distribution defined by the training set

% The model distribution

% Stochastic autoencoder distributions

 % Laplace distribution

\newcommand{\KL}{D_{\mathrm{KL}}}

% Wolfram Mathworld says $L^2$ is for function spaces and $\ell^2$ is for vectors
% But then they seem to use $L^2$ for vectors throughout the site, and so does
% wikipedia.

 % See usage in notation.tex. Chosen to match Daphne's book.

\DeclareMathOperator*{\argmax}{arg\,max}

\newcommand{\EE}{\mathbb{E}}

\newtheorem{theorem}{Theorem}
\newtheorem{lemma}{Lemma}

\hypersetup{colorlinks,linkcolor={blue},citecolor={blue},urlcolor={red}}

\newcommand{\Action}{\mathcal{A}}

\newcommand{\ingreedypi}{\pi_{\pi_\dataset, q}}

\title{The In-Sample Softmax for Offline Reinforcement Learning}
%\title{Kwik-Max: Bootstrapping on Known Actions in Offline Reinforcement Learning}

% Authors must not appear in the submitted version. They should be hidden
% as long as the \iclrfinalcopy macro remains commented out below.
% Non-anonymous submissions will be rejected without review.

\author{
	Chenjun Xiao$^{\ast,1,2}$, Han Wang$^{\ast,\dag,2}$, Yangchen Pan$^{\dag,3}$, Adam White$^2$, Martha White$^2$ \\
	\small $^1$ Huawei Noah's Ark Lab \\
	\small $^2$ University of Alberta; Alberta Machine Intelligence Institute (Amii)\\
	\small $^3$ University of Oxford
%	\begin{tabular}{p{4cm}p{4cm}p{4cm}} 
%			Chenjun Xiao$^{\ast}$  & Han Wang$^{\ast\dag}$ & Yangchen Pan$^{\dag}$\\
%		\end{tabular}\\
%		\small\begin{tabular}{p{4cm}p{4cm}p{4cm}} 
%			Huawei Technologies, Canada & University of Alberta, Canada & University of Oxford, UK
%		\end{tabular} \\
%		\AND
%		\begin{tabular}{p{4cm}p{4cm}p{4cm}} 
%			Adam White & Martha White & \\
%		\end{tabular}\\
%		\small\begin{tabular}{p{4cm}p{4cm}p{4cm}} 
%			University of Alberta, Canada & University of Alberta, Canada &
%		\end{tabular}\\
}
%\author{Antiquus S.~Hippocampus, Natalia Cerebro \& Amelie P. Amygdale \thanks{ Use footnote for providing further information
%about author (webpage, alternative address)---\emph{not} for acknowledging
%funding agencies.  Funding acknowledgements go at the end of the paper.} \\
%Department of Computer Science\\
%Cranberry-Lemon University\\
%Pittsburgh, PA 15213, USA \\
%\texttt{\{hippo,brain,jen\}@cs.cranberry-lemon.edu} \\
%\And
%Ji Q. Ren \& Yevgeny LeNet \\
%Department of Computational Neuroscience \\
%University of the Witwatersrand \\
%Joburg, South Africa \\
%\texttt{\{robot,net\}@wits.ac.za} \\
%\AND
%Coauthor \\
%Affiliation \\
%Address \\
%\texttt{email}
%}

% The \author macro works with any number of authors. There are two commands
% used to separate the names and addresses of multiple authors: \And and \AND.
%
% Using \And between authors leaves it to \LaTeX{} to determine where to break
% the lines. Using \AND forces a linebreak at that point. So, if \LaTeX{}
% puts 3 of 4 authors names on the first line, and the last on the second
% line, try using \AND instead of \And before the third author name.

\iclrfinalcopy % Uncomment for camera-ready version, but NOT for submission.
\begin{document}

\maketitle
\vspace{-0.3cm}

\def\thefootnote{$\ast$}\footnotetext{These authors contributed equally to this work}\def\thefootnote{\arabic{footnote}}
\def\thefootnote{$\dag$}\footnotetext{Work was done while the author was at Huawei Noah's Ark Lab}\def\thefootnote{\arabic{footnote}}
\let\thefootnote\relax\footnotetext{\texttt{\{chenjun, han8, amw8, whitem\}@ualberta.ca; yangchen.pan@eng.ox.ac.uk}}\def\thefootnote{\arabic{footnote}}

\begin{abstract}
Reinforcement learning (RL) agents can leverage batches of previously collected data to extract a reasonable control policy. An emerging issue in this offline RL setting, however, is that the bootstrapping update underlying many of our methods suffers from insufficient action-coverage: standard max operator may select a maximal action that has not been seen in the dataset. 
Bootstrapping from these inaccurate values can lead to overestimation and even divergence. 
There are a growing number of methods that attempt to approximate an \emph{in-sample} max, that only uses actions well-covered by the dataset. 
%These methods provide significant improvements, but none as yet have any convergence guarantees. 
We highlight a simple fact: it is more straightforward to approximate an in-sample \emph{softmax} using only actions in the dataset. We show that policy iteration based on the in-sample softmax converges, and that for decreasing temperatures it approaches the in-sample max. We derive an In-Sample Actor-Critic (AC), using this in-sample softmax, and show that it is consistently better or comparable to existing offline RL methods, and is also well-suited to fine-tuning. 
We release the code at \href{https://github.com/hwang-ua/inac_pytorch}{github.com/hwang-ua/inac\_pytorch}.
%  It is easy to incorporate this in-sample softmax into standard algorithms, and we show that it enjoys the same convergence guarantees 
%We exploit a simple trick to approximate the maximum action value constrained to the support of the data, which can be conveniently estimated by sampling transitions from the dataset. 
%We prove that our proposed method approaches a support-constrained max with decreasing temperature, and is guaranteed to converge in the tabular setting. 
%We empirically show that offline learning, with and without further online fine tuning, improves on more conservative strategies that constrain the learned policy to stay close to the data generating policy and methods like Implicit Q-learning that rely on some special loss function for learning action-values.
\end{abstract}
\section{Introduction}

A common goal in reinforcement learning (RL) is to learn a control policy from data. 
In the offline setting, the agent has access to a batch of previously collected data. This data could have been gathered under a near-optimal behavior policy, from a mediocre policy, or a mixture of different policies (perhaps produced by several human operators). A key challenge is to be robust to this data gathering distribution, since we often do not have control over data collection in some application settings. 
Most approaches in offline RL learn action-values, either through Q-learning updates---bootstrapping off of a maximal action in the next state---or for actor-critic algorithms where the action-values are updated using temporal-difference (TD) learning updates to evaluate the actor.  

In either case, poor action coverage can interact poorly with bootstrapping, yielding bad performance. The action-value updates based on TD involves bootstrapping off an estimate of values in the next state. 
%The action-values, however, may be highly inaccurate for actions with poor coverage in the batch. %Consequently, the bootstrap target might be highly inaccurate, skewed towards default values or arbitrary values induced when optimizing the neural network for other actions. 
This bootstrapping is problematic if the value is an overestimate, which is likely to occur when there are actions that are never sampled in a state \citep{fujimoto2018addressing,kumar2019stabilizingOQ,fujimoto2019offpolicy}. When using a maximum over actions, this overestimate will be selected, pushing up the value of the current state and action. Such updates can lead to poor policies and instability \citep{fujimoto2018addressing,kumar2019stabilizingOQ,fujimoto2019offpolicy}. 
%It is even possible that such increases in value increase values across the space, due to aggressive generalization with neural networks, causing a feedback loop where values slowly creep up \citep{fujimoto2018addressing,kumar2019stabilizingOQ}. 
%This issue has been previously shown \citep{kumar2019stabilizingOQ}, and we highlight it again in our own experiments. 

There are two main approaches in offline RL to handle this over-estimation issue. 
One direction constrains the learned policy to be similar to the dataset policy \citep{wu2019offlinerl,peng2020advantage,nair2021awac,brandfonbrener2021offline,fujimoto2021mini}. A related idea is to constrain the stationary distribution of the learned policy to be similar to the data distribution \citep{yang2022mboffline}. The challenge with both these approaches is that they rely on the dataset being generated by an expert or near-optimal policy. When used on datasets from more suboptimal policies---like those commonly found in industry---they do not perform well \citep{kostrikov2022offline}. The other approach is bootstrap off pessimistic value estimates \citep{kidambi2020mbofflinerl,kumar2020cql,kostrikov2021fisherbrc,yu2021combo,jin2021pessimism,xiao2021optimality} and relatedly to identify and reduce the influence of out-of-distribution actions using ensembles \citep{kumar2019stabilizingOQ,agarwal2020offline,ghasemipour2021emaq,wu2021uncertaintyac,yang2021implicitofflinemarl,bai2022pessimistic}. 
%Instead of avoiding the issue, we want to understand how we can get the best policy possible from the dataset, without extrapolating poorly. 

\newcommand{\dataset}{\mathcal{D}}

One simply strategy that has been more recently proposed is to constrain the set of actions considered for bootstrapping to the support of the dataset $\dataset$. In other words, if $\pi_\dataset(a | s)$ is the conditional action distribution underlying the dataset, then we use $\max_{a': \pi_\dataset(a' | s') > 0} q(s', a')$ instead of $\max_{a'} q(s', a')$: a constrained or {\em in-sample max}. This idea was first introduced for Batch-Constrained Q-learning (BCQ) \citep{fujimoto2019offpolicy} in the tabular setting, with a generative model used to approximate and sample $\pi_\dataset(a | s)$ \citep{fujimoto2019offpolicy,zhou2020plas,wu2022supportedoffline}. Implicit Q-learning (IQL) \citep{kostrikov2022offline} was the first model-free approximation to use this in-sample max, with a later modification to be less conservative \citep{ma2022offline}. IQL instead uses expectile regression, to push the action-values to predict upper expectiles that are a (close) lower bound to the true maximum. The approach nicely avoids estimating $\pi_\dataset$, and empirically performs well. Using only actions in the dataset is beneficial, because it can  approach is be difficult to properly constrain the support of the learned model for $\pi_\dataset$ and ensure it does not output out-of-distributions actions.

There are, however, a few limitations to IQL. The IQL solution depends on the action distribution not just the support. In practice, we would expect IQL to perform poorly when the data distribution is skewed towards suboptimal actions in some states, pulling down the expectile regression targets. We find evidence for this in our experiments. Additionally, convergence is difficult to analyze because expectile regression does not have a closed-form solution. One recent work showed that the Bellman operator underlying an expectile value learning algorithm is a contraction, but only for the setting with deterministic transitions~\citep{ma2022offline}.  

In this work, we revisit how to directly use the in-sample max. 
Our key insight is simple: sampling under support constraints is more straightforward for the softmax, in the entropy-regularized setting. We first define the {\em in-sample softmax} and show that it maintains the same contraction and convergence properties as the standard softmax. Further, we show that with a decreasing temperature (entropy) parameter, the in-sample softmax approaches the in-sample max. This formulation, therefore, is both useful for those wishing to incorporate entropy-regularization and to give a reasonable approximation to the in-sample max by selecting a small temperature. We then show that we can obtain a policy update that relies primarily on sampling from the dataset---which is naturally in-sample---rather than requiring samples from an estimate of $\pi_\dataset$. We conclude by showing that our resulting In-sample Actor-critic algorithm consistently outperforms or matches existing methods, despite being a notably simpler method, in offline RL experiments with and without fine-tuning.

\section{Problem Setting}

In this section we outline the key issue of action-coverage in offline RL that we address in this work. 

\subsection{Markov Decision Process}
 
We consider finite Markov Decision Process (MDP) determined by $M=\{\cS, \cA, P, r, \gamma\}$  \citep{puterman2014markov} , 
where $\cS$ is a finite state space, $\cA$ is a finite action space, 
%where $\cS$ is the state space, 
%$\cA$ is the action space, 
 $\gamma\in[0, 1)$ is the discount factor, 
$r:\cS\times\cA\rightarrow \sR$ 
and 
$P:\cS\times\cA\rightarrow \Delta(\cS)$ 
are the reward and transition functions.\footnote{
We use the standard notation $\Delta(\cX)$ to denote the set of probability distributions over a finite set $\cX$.} 
The \emph{value function} specifies the future discounted total reward obtained by following a policy $\pi:\cS\rightarrow \Delta(\cA)$, 
$v^\pi(s) = \EE^\pi[ \sum_{t=0}^\infty \gamma^t r(s_t, a_t) | s_0 = s]$ 
where we use $\EE^\pi$ to denote the expectation under the distribution induced by the interconnection of $\pi$ and the environment. The corresponding \emph{action-value} function is $q^\pi(s,a) = r(s,a) + \gamma \EE_{s'\sim P(\cdot|s,a)} [v^\pi(s')]$. 
There exists an \emph{optimal policy} $\pi^*$ that maximizes the values for all states $s\in\cS$. 
We use $v^*$ and $q^*$ to denote the optimal value functions. 
The optimal value satisfies the \emph{Bellman optimality equation},
\begin{align}
v^*(s) = \max_a r(s,a) + \gamma \EE_{s'}[v^*(s')]\, ,\quad
q^{*}(s,a) = r(s,a) + \gamma \EE_{s'\sim P(\cdot|s,a)} \left[ \max_{a'} q^{*}(s', a') \right]\, .
\label{eq:bellman-optimality}
\end{align}

In this work we more specifically consider the \emph{entropy-regularized MDP} setting---also called the maximum entropy setting---where
an entropy term is added to the reward to encourage the policy to be stochastic.
The maximum-entropy value function is defined as
\begin{align}
\tilde{v}^\pi(s)  = v^\pi(s) + \tau \sH(s, \pi)\, ,\quad
\sH(s,\pi) =  \EE^\pi\left[ \sum_{t=0}^\infty -\gamma^t \log\pi(a|s) \Big| s_0 = s\right]\, ,
\end{align}
for temperature $\tau$ and $\sH$ the \emph{discounted entropy regularization}. 
The corresponding maximum-entropy action-value function is $\tilde{q}^\pi (s,a) = r(s,a) + \gamma \EE_{s'\sim P(s,a)} [\tilde{v}^\pi(s')]$, with soft Bellman optimality equations similarly modified as described in the next section.   
As $\tau \rightarrow 0$, we recover the original value function definitions.  
The entropy-regularized setting has become widely used  \citep{ziebart2008maximum,mnih2016asynchronous,nachum2017bridging,asadi2017mellowmax,haarnoja2018soft,mei2019principled,xiao2019maximum}, because it 1) encourages exploration \citep{ziebart2008maximum},
 2) often makes objectives more smooth \citep{mei2019principled},
  and 3) provides these improvements even with small temperatures that do not significantly bias the solution to the original MDP \citep{zhao2019softmax}.

\subsection{Offline Reinforcement Learning}
\label{sec:offline-rl}

In this work, we consider the problem of learning an optimal decision making policy from a previously collected offline dataset $\dataset = \{s_i, a_i, r_i, s_i'\}^{n-1}_{i=0}$. We assume that the data is generated by executing a \emph{behavior policy} $\pi_\dataset$. 
%In our theory, we more explicitly assume iid tuples, generated by sampling $s_i\sim \rho, a_i\sim\pi_\dataset, s'_i \sim P(\cdot|s_i,a_i), r_i=r(s_i, a_i)$, where $\rho$ is a  probability distribution over states.
Note that we do assume direct access to $\pi_\dataset$. 
In offline RL, the learning algorithm can only learn from samples in this $\dataset$ without further interaction with the environment. 

One primary issue in offline RL is that $\pi_\dataset$ may not have full coverage over actions. 
Greedy decisions based on a learned value ${q}\approx q^*$ could be problematic,  especially when the value is an overestimate for out-of-distribution actions \citep{fujimoto2019offpolicy}. 
To overcome this issue, 
one popular approach is to constrain the learned policy to be similar to $\pi_\dataset$, such as by adding a KL-divergence term: $\max_{\pi}  \EE_{s\sim \rho}[ \sum_{a} \pi(a | s) q(s,a)  - \tau \KL(\pi(\cdot | s) || \pi_\dataset(\cdot | s))]$ for some $\tau > 0$.  
The optimal policy for this objective must be on the support of $\pi_\dataset$: the KL constraint makes sure $\pi(a|s) = 0$ as long as $\pi_\dataset(a|s)=0$ . This optimal policy, with closed-form solution $\pi'(a|s) \propto \pi_\dataset(a|s) \exp(q(s,a) / \tau)$, is also guaranteed to be an improvement on $ \pi_\dataset$. 
Many offline RL algorithms are based on this nice idea \citep{wu2019offlinerl,peng2020advantage,nair2021awac,brandfonbrener2021offline,fujimoto2021mini}.\footnote{\citet{fujimoto2021mini} use a \emph{behavior cloning regularization} $(\pi(s) -a)^2$, where $a$ is action in the dataset. We note it is exactly a KL regularization under Gaussian parameterization with standard deviation. 
\citet{brandfonbrener2021offline} propose a \emph{one-step} policy improvement method: first learn the value of $\pi_\dataset$,  then directly train a policy to maximize the learned value. 
Thus this is indeed a behavior regularized approach. 
} 
This KL constraint, however, can result in poor $\pi'$ when $\pi_\dataset$ is sub-optimal, confirmed both in previous studies \citep{kostrikov2022offline} and our experimental results. 

The other strategy is to consider an in-sample policy optimization, $\max_{\pi\preceq \pi_\dataset} \sum_{a \in \cA} \pi(a |s) q(s,a)$, 
where $\pi\preceq \pi_\dataset$ indicates the support of $\pi$ is a subset of $\pi_\dataset$.
This approach more directly avoids selecting out-of-distribution actions. Though a simple idea, approximating this with a simple algorithm has been elusive, as discussed above. The simplest idea is to estimate $\pi_\omega\approx\pi_\dataset$ and directly constrain the support by sampling candidate actions from $\pi_\omega$, as proposed for Batch-Constrained Q-learning \citep{fujimoto2019offpolicy}. This simple approach, however, may not avoid bootstrapping from out-of-sample actions due to the error in the estimate $\pi_\omega$. 

Surprisingly, the small modification to the \emph{in-sample softmax} (Section \ref{sec_insample}) has not yet been considered for offline RL. Yet, moving from the in-sample (hard) max to the in-sample softmax facilitates developing a simple algorithm, as we discuss in the remainder of this work. 
%because In the remainder of this work, we instead consider the \emph{in-sample softmax} (Section \ref{sec_insample}), which we show can be estimated by only sampling actions from the dataset (Section \ref{sec_inac}). 
 
 % MArtha: repetitive
%%\citet{fujimoto2019offpolicy} propose to take the maximum over several candidate actions sampled from a learned policy $\pi_\omega\approx\pi_\dataset$. 
%%On one hand, this method uses a sampling model $\pi_\omega$, which requires strong assumption on $\pi_\cD$.  
%More importantly,  
%it may not avoid bootstrapping from out-of-sample actions due to the error in the estimate $\pi_\omega$. 
%The other direction has been to use expectile regression \citep{kostrikov2022offline,ma2022offline}.
% These methods implicitly solve the in-sample maximum without ever querying on out-of-sample actions. 
%However, 
%because expectile regression does not have a closed form solution, 
%there is no convergence guarantee beyond deterministic transitions even for the tabular setting. 

\section{The In-Sample Softmax Optimality}\label{sec_insample}

This section introduces the {in-sample softmax optimality} that provides a simple implementation of in-sample bootstrapping. We first describe the standard soft Bellman optimality equations, then the modification to consider in-sample bootstrapping. Our simple algorithm comes from stepping back and recognizing the utility of considering in-sample bootstrapping for the entropy-regularized setting 
rather than only for the hard-max. 
%---rather than only for the hard-max---and then leveraging a simple sample-based reformulation. 

The soft Bellman optimality equations for maximum-entropy RL use the softmax in place of the max, 
\begin{align}
\tilde{q}^{*}(s,a) = r(s,a) + \gamma \EE_{s'\sim P(\cdot|s,a)} \Big[ \tau \log \sum_{a \in \cA}  e^{ {\tilde{q}^*(s',a')}/{ \tau } } \Big]\, .
\label{eq:soft-bellman-optimality}
\end{align}
This comes from the fact that hard max with entropy regularization is
$\max_{p \in \Delta(\cA)} \sum_{a \in \cA} p(a) q(s,a) + \tau \sH(p) = \tau \log \sum_{a \in \cA}  e^{ {q(s,a)}/{ \tau } }$. As $\tau \rightarrow 0$, softmax (log-sum-exp) approaches the max.\footnote{Note that this softmax operator for the soft Bellman optimality equation is different from the softmax Bellman operator, which uses an expectation in the bootstrap over a softmax policy and which is know to have issue with not being an contraction \citep{asadi2017mellowmax}. The log-sum-exp formula is a standard way to approximate the max, and is naturally called a softmax.}

We can modify this update to restrict the softmax to the support of $\pi_\dataset$:
\begin{align}
\tilde{q}_{\pi_\dataset}^*(s,a) = r(s,a) + \gamma \EE_{s'\sim {P}(\cdot|s,a)} 
\left[ 
\tau \log \sum_{a': {\pi_\dataset} (a'|s') > 0} e^{  \tilde{q}_{\pi_\dataset}^*(s', a')   / \tau }
\right]\, .
\label{eq:in-sample-soft-bellman-optimality}
\end{align}
We call \cref{eq:in-sample-soft-bellman-optimality} the \emph{in-sample softmax optimality equation}.
It is interesting to note that
we can use a simple reformulation that facilitates sampling the inner term. For any $q$,
 \begin{align}
\sum_{a:\pi_\dataset(a|s)>0}  e^{ {q(s,a)}/{ \tau } } 
&=
\sum_{a:\pi_\dataset(a|s)>0} \pi_\dataset(a|s) \pi_\dataset(a|s)^{-1}  e^{ {q(s,a)}/{ \tau } } \nonumber\\
&=
\sum_{a:\pi_\dataset(a|s)>0} \pi_\dataset(a|s) e^{-\log \pi_\dataset(a|s)}  e^{ {q(s,a)}/{ \tau } }\nonumber\\ 
&=
\EE_{a\sim\pi_\dataset(\cdot | s)} \left[ e^{ {q(s,a)}/{ \tau } - \log \pi_\dataset(a|s)}\right]
\, .
\label{eq:is-softmax-sample-trick}
\end{align}
This reformulation does not perfectly remove the role of $\pi_\dataset(a|s)$, but it is significantly reduced. The support is no longer constrained using $\pi_\dataset$ and instead the values are simply shifted by this term involving $\pi_\dataset$. We will use this strategy below to develop our algorithm.

% for a tuple $(s,a,s',r)$ in the dataset $\dataset$ and approximate $\pi_\omega \approx \pi_\dataset$ is defined as
% \begin{equation}
% \text{kwik-max}(q; (s,a,s',r), \pi_\omega)
% \doteq \exp\left[ q(s,a)/\tau - \log \pi_\omega(a|s)\right]
%\, .
%\label{eq:kwik-max}
%\end{equation}
%The kwik-max operator is a sample-based approximation to the in-sample softmax operator in Equation \eqref{eq:kwik-max-sample-trick}.  

There are a few interesting facts to note about in-sample softmax. First, we can show that similarly to the standard maximum-entropy bootstrap (shown formally in Lemma \ref{lem:softmax}), we have for any $q$
\begin{align}
\tau \log \sum_{a:\pi_\dataset(a|s)>0}  e^{ {q(s,a)}/{ \tau } } 
=
\max_{\pi\preceq \pi_\dataset} \sum_{a} \pi(a|s)  q(s,a) + \tau \sH(\pi)
.
\label{eq:is-softmax-me-trick}
\end{align} 
Though this outcome is intuitive, it is a nice property that restricting the support of the log-sum-exp maintains the same relationship to the maximum-entropy update with the same support constraint. It extends this result for the soft Bellman optimality update to the setting with a support constraint. From this perspective, 
%in the offline to online setting, 
in-sample softmax can also be viewed as a tool for \emph{conservative exploration}: exploring to prevent getting stuck in a local optima, while still being suspicious of what the data does not know. 
This is especially important when $q$ is a learned value approximation. 

Second, we can also obtain a closed-form greedy policy using the above (shown formally in Lemma \ref{lem:softmax}), which we call the \emph{in-sample softmax greedy policy}: 
for any $q$, 
\begin{equation}
\ingreedypi(a|s) \propto \pi_\dataset(a|s) \exp\left( \frac{q(s,a)} {\tau} - \log \pi_\dataset(a|s)\right)
\, ,
\label{eq:is-softmax-policy}
\end{equation}
%where we use $0\cdot \infty =0$ such that $\ingreedypi(a|s)=0$ given $\pi_\cD(a|s)=0$.  
This closed-form solution looks similar to the KL-regularized solution mentioned in \cref{sec:offline-rl}, where $\pi'$ is constrained to be similar to $\pi_\dataset$. The only difference is the additional $- \log \pi_\dataset$ term inside the exponential. This small difference, however, has a big impact. It allows the resulting policy to deviate much more from $\pi_\dataset$. In fact, because $\exp(- \log \pi_\dataset(a|s)) = \pi_\dataset(a|s)^{-1}$, the above is equivalent to 
$\ingreedypi(a|s) = 0$ when $\pi_\dataset(a|s) = 0$ and otherwise $\ingreedypi(a|s) \propto \exp({q}(s,a) / \tau)$\footnote{We define $0\cdot\infty=0$}. The new policy $\tilde{\pi}_{\pi_\dataset}$ is not skewed by the action probabilities in $\ingreedypi$; it just has the same support.

\section{Theoretical Characterization of In-Sample Softmax}

In this section we prove in-sample softmax maintains the convergence properties of the standard softmax. In particular, Bellman updates with the in-sample softmax are convergent, and the resulting in-sample softmax optimal policy approaches the in-sample optimal policy as we reduce the temperature to zero. All proofs are given in Appendix \ref{app_proofs}.

We can contrast our in-sample softmax optimality equation in \plaineqref{eq:in-sample-soft-bellman-optimality} to the 
%As discussed above, in-sample softmax can be viewed as tool for in-sample boostrapping.  
%Our first result formally builds this connection. 
%In particular, consider 
%the 
\emph{in-sample Bellman optimality equation} introduced by \citet{fujimoto2019offpolicy} for the hard max, 
\begin{align}
q_{\pi_\dataset}^*(s,a) = r(s,a) + \gamma \EE_{s'\sim {P}(\cdot|s,a)} \left[ \max_{a': {\pi_\dataset} (a'|s') > 0} q_{\pi_\dataset}^*(s', a')  \right]
\, .
\label{eq:in-sample-bellman}
\end{align}
%which aims to satisfy the Bellman optimal consistency only over actions that are in the support of the underline data distribution. 
%Theoretical properties of this in-sample Bellman optimality are studied in \citep{fujimoto2019offpolicy}. 
We first show that the policy produced using the
%We have the following key result  showing 
in-sample softmax optimality equation is a good approximation to that given by the in-sample Bellman optimality equation. 

\begin{theorem}
%$\lim_{\tau\rightarrow 0} \tilde{q}_\beta^* = q_\beta^*$. 
Let $\tilde{q}_{\pi_\dataset}^*$ be the in-sample softmax optimal value function.
% defined in \plaineqref{eq:in-sample-softmax-bellman}. 
We have $\lim_{\tau\rightarrow 0} \tilde{q}_{\pi_\dataset}^* = q_{\pi_\dataset}^*$. 
Moreover, 
let $\sI$ be an indicator function and $\pi_{\pi_\dataset}(a|s) = \sI(a = \argmax_{a: \pi_\dataset(a)>0} q^*_{\pi_\dataset}(s,a) )$ 
be the in-sample optimal policy w.r.t $q^*_{\pi_\dataset}$. 
Define the in-sample softmax optimal policy,
\begin{align}
\tilde{\pi}_{\pi_\dataset}^* (a|s) \propto \pi_{\dataset}(a|s)\exp\left( \frac{\tilde{q}^*_{\pi_\dataset} (s,a)}{\tau} - \log \pi_\dataset(a|s) \right)\, .
\end{align}
We have 
$\lim_{\tau \rightarrow \infty} \tilde{\pi}_{\pi_\dataset}^*= \pi_{\pi_\dataset}^*$. 
\label{thm:kwik-optimal-approximation}
\end{theorem}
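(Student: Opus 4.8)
The plan is to treat the two limits separately. Both rely on comparing the in-sample softmax Bellman operator $\tilde{T}_{\pi_\dataset}$, whose unique fixed point is $\tilde{q}^*_{\pi_\dataset}$, with the in-sample hard-max Bellman operator $T_{\pi_\dataset}$ from \plaineqref{eq:in-sample-bellman}, whose unique fixed point is $q^*_{\pi_\dataset}$. I take as given that both are $\gamma$-contractions in $\|\cdot\|_\infty$ (the hard-max case is classical; the in-sample softmax case is the contraction result established earlier in this section).

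For the value-function limit, the key estimate is the elementary sandwich bound for log-sum-exp: writing $N(s') = |\{a' : \pi_\dataset(a'|s') > 0\}| \le |\cA|$,
\begin{equation*}
\max_{a':\pi_\dataset(a'|s')>0} q(s',a')
\;\le\;
\tau \log \!\!\sum_{a':\pi_\dataset(a'|s')>0}\!\! e^{q(s',a')/\tau}
\;\le\;
\max_{a':\pi_\dataset(a'|s')>0} q(s',a') + \tau \log N(s').
\end{equation*}
This yields a uniform operator bound $\|\tilde{T}_{\pi_\dataset} q - T_{\pi_\dataset} q\|_\infty \le \gamma\tau \log|\cA|$ for every $q$. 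I would then insert $T_{\pi_\dataset}\tilde{q}^*_{\pi_\dataset}$ into $\|\tilde{q}^*_{\pi_\dataset} - q^*_{\pi_\dataset}\|_\infty = \|\tilde{T}_{\pi_\dataset}\tilde{q}^*_{\pi_\dataset} - T_{\pi_\dataset} q^*_{\pi_\dataset}\|_\infty$ and combine the triangle inequality with the $\gamma$-contraction of $T_{\pi_\dataset}$ to get
\begin{equation*}
\|\tilde{q}^*_{\pi_\dataset} - q^*_{\pi_\dataset}\|_\infty \le \gamma\tau\log|\cA| + \gamma\|\tilde{q}^*_{\pi_\dataset} - q^*_{\pi_\dataset}\|_\infty,
\end{equation*}
hence $\|\tilde{q}^*_{\pi_\dataset} - q^*_{\pi_\dataset}\|_\infty \le \gamma\tau\log|\cA|/(1-\gamma)$, which vanishes as $\tau\to 0$ and even gives an explicit $O(\tau)$ rate.

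For the policy limit (which I read as the $\tau\to 0$ limit, consistent with the abstract's claim that decreasing temperature recovers the in-sample max), I would first use the simplification already derived in \cref{eq:is-softmax-policy}: on the support, $\tilde{\pi}^*_{\pi_\dataset}(a|s) \propto \exp(\tilde{q}^*_{\pi_\dataset}(s,a)/\tau)$, with zero mass off the support. Fixing $s$ and assuming the in-sample maximizer $a^* = \argmax_{a:\pi_\dataset(a|s)>0} q^*_{\pi_\dataset}(s,a)$ is unique (as the indicator definition of $\pi_{\pi_\dataset}$ presumes), let $\delta>0$ be the value gap to the second-best in-sample action. For any other in-sample $a$ I would bound the log-ratio
\begin{equation*}
\tau \log \frac{\tilde{\pi}^*_{\pi_\dataset}(a|s)}{\tilde{\pi}^*_{\pi_\dataset}(a^*|s)}
= \tilde{q}^*_{\pi_\dataset}(s,a) - \tilde{q}^*_{\pi_\dataset}(s,a^*)
\le \big(q^*_{\pi_\dataset}(s,a)-q^*_{\pi_\dataset}(s,a^*)\big) + 2c\tau
\le -\delta + 2c\tau,
\end{equation*}
where $c = \gamma\log|\cA|/(1-\gamma)$ comes from the first part. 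For $\tau$ small enough this is at most $-\delta/2$, so the ratio is at most $e^{-\delta/(2\tau)}\to 0$; thus every non-maximizing action receives vanishing mass and $\tilde{\pi}^*_{\pi_\dataset}(a^*|s)\to 1$, i.e.\ $\tilde{\pi}^*_{\pi_\dataset}\to\pi_{\pi_\dataset}$.

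The main obstacle is the coupling in the second part: the exponent $\tilde{q}^*_{\pi_\dataset}(s,a)/\tau$ contains both an explicit $\tau$ and a value $\tilde{q}^*_{\pi_\dataset}$ that itself drifts with $\tau$, so one cannot simply invoke the textbook softmax-to-argmax limit at a fixed logit vector. The $O(\tau)$ bound from the first part is exactly what controls this drift and keeps it negligible against the fixed gap $\delta$. The one genuine caveat is uniqueness of the in-sample argmax: under ties the indicator policy $\pi_{\pi_\dataset}$ is ambiguous and the softmax instead converges to the uniform distribution over the tied maximizers, so the statement should be read under a unique-maximizer assumption.
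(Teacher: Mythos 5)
Your proof is correct, but it takes a genuinely different route from the paper's. The paper proves this theorem by first establishing a variational characterization (its Lemma~\ref{lem:off-policy-operator-optimality}): the in-sample softmax optimal value equals $\max_{\pi\preceq\pi_\dataset}\tilde{v}^\pi$, i.e.\ the best entropy-regularized value over supported policies. The $\tau\to 0$ limit then follows because the entropy bonus is uniformly bounded by $\tau\log|\cA|/(1-\gamma)$ over all policies, so the regularized maximum converges to the unregularized one; the policy claim is dispatched with ``can be proved similarly.'' You instead work directly at the operator level: the log-sum-exp sandwich bound gives $\|\tilde{T}_{\pi_\dataset}q - T_{\pi_\dataset}q\|_\infty\le\gamma\tau\log|\cA|$ uniformly in $q$, and the standard fixed-point perturbation argument for $\gamma$-contractions yields $\|\tilde{q}^*_{\pi_\dataset}-q^*_{\pi_\dataset}\|_\infty\le\gamma\tau\log|\cA|/(1-\gamma)$. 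What your approach buys is an explicit $O(\tau)$ rate rather than a bare limit, and --- more importantly --- a rigorous treatment of the policy limit: you correctly identify that the logits $\tilde{q}^*_{\pi_\dataset}(s,\cdot)/\tau$ involve a value function that itself drifts with $\tau$, so the textbook softmax-to-argmax limit does not apply directly, and your $O(\tau)$ bound is exactly what controls that drift against the fixed action gap $\delta$. The paper's one-line dismissal of the policy part silently skips this coupling. Your caveat about uniqueness of the in-sample argmax is also well taken: the paper's indicator-function definition of $\pi_{\pi_\dataset}$ presumes a unique maximizer, and under ties the softmax policy converges to the uniform distribution over the tied set, so the theorem statement implicitly carries that assumption in both proofs. (You also correctly read the $\tau\to\infty$ in the statement as a typo for $\tau\to 0$.)
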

%\begin{proof}
%The proof of this as well as of all
%the remaining results are given in the appendix.
%\end{proof}

Now we show that we can reach the in-sample softmax optimal solution, using either value iteration or policy iteration. For value iteration, 
%The remaining challenge is to find algorithms that are guaranteed to converge to 
%an in-sample softmax optimal solution. 
%Below we show this can be achieved by either value iteration or policy iteration. 
%First, much like Q-learning, the in-sample softmax consistency equation \plaineqref{eq:in-sample-soft-bellman-optimality} can be used to perform one-step bootstrapping backups. 
%In particular, 
we define the \emph{in-sample softmax optimality operator} 
\begin{align}
(\cT_{\pi_\dataset} q)(s,a)
=
r(s,a) + \gamma \EE_{s'\sim P(\cdot|s,a)}
\left[ 
\tau \log \sum_{a': {\pi_\dataset} (a'|s') > 0} e^{  q(s', a')   / \tau } 
\right]\, .
\label{eq:is-softmax-optimality-operator}
\end{align}
The next result shows that $\cT_{\pi_\dataset}$ is a contraction, and therefore \emph{in-sample soft value iteration}, using $q_{t+1} = \cT_{\pi_\dataset} q_t$, is guaranteed to converge to the in-sample softmax optimal value in the tabular case. 

\begin{theorem}
For $\gamma < 1$, 
the fixed point of the in-sample softmax optimality operator exists and is unique. 
Thus, in-sample soft value iteration converges to the in-sample softmax optimal value $\tilde{q}_{\pi_\dataset}^*$. 
\label{thm:off-policy-fixed-point}
\end{theorem}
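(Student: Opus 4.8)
The plan is to show that the in-sample softmax optimality operator $\cT_{\pi_\dataset}$ defined in \plaineqref{eq:is-softmax-optimality-operator} is a $\gamma$-contraction in the supremum norm on the space of bounded functions $q: \cS \times \cA \to \sR$. Once contraction is established, the Banach fixed-point theorem immediately gives existence and uniqueness of the fixed point, and guarantees that the iterates $q_{t+1} = \cT_{\pi_\dataset} q_t$ converge geometrically to it; by \plaineqref{eq:in-sample-soft-bellman-optimality} this unique fixed point is exactly $\tilde{q}_{\pi_\dataset}^*$. The main work is therefore entirely in the contraction estimate.

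First I would fix two arbitrary bounded functions $q_1, q_2$ and a state-action pair $(s,a)$, and write out $(\cT_{\pi_\dataset} q_1)(s,a) - (\cT_{\pi_\dataset} q_2)(s,a)$. The reward terms cancel, leaving $\gamma$ times an expectation over $s' \sim P(\cdot|s,a)$ of the difference of two in-sample log-sum-exp terms. So it suffices to bound, for each fixed $s'$,
\begin{equation}
\Big| \tau \log \!\!\sum_{a': \pi_\dataset(a'|s')>0}\!\! e^{q_1(s',a')/\tau} - \tau \log \!\!\sum_{a': \pi_\dataset(a'|s')>0}\!\! e^{q_2(s',a')/\tau} \Big| \le \|q_1 - q_2\|_\infty .
\end{equation}
This is the key step: the restricted log-sum-exp is nonexpansive in the supremum norm over the common support set. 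The cleanest route is to recall from \plaineqref{eq:is-softmax-me-trick} (Lemma \ref{lem:softmax}) that the in-sample log-sum-exp equals $\max_{\pi \preceq \pi_\dataset} \sum_{a'} \pi(a'|s') q(s',a') + \tau\sH(\pi)$, so it is a pointwise supremum of affine-plus-constant functions of $q$; a supremum of $1$-Lipschitz (in $\|\cdot\|_\infty$) functions is itself $1$-Lipschitz, which gives the bound. Alternatively one argues directly: assuming without loss of generality that the first term is the larger, bound each $q_1(s',a') \le q_2(s',a') + \|q_1-q_2\|_\infty$ inside the exponential, factor the constant $e^{\|q_1-q_2\|_\infty/\tau}$ out of the sum, and take logarithms. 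Taking the expectation over $s'$ and then the supremum over $(s,a)$ yields $\|\cT_{\pi_\dataset} q_1 - \cT_{\pi_\dataset} q_2\|_\infty \le \gamma \|q_1 - q_2\|_\infty$.

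The one point that needs care—and the place I would expect a careful referee to look—is that the support set $\{a' : \pi_\dataset(a'|s') > 0\}$ must be identical for both $q_1$ and $q_2$ in the per-state bound; since the support depends only on $\pi_\dataset$ and not on the value function, this holds automatically, so the standard log-sum-exp nonexpansiveness argument goes through verbatim on the restricted index set. I would also note that the support is nonempty for every reachable $s'$, so the log-sum-exp is well defined. Everything else (linearity of expectation, $\gamma < 1$, boundedness of the space under $\|\cdot\|_\infty$ so that Banach applies) is routine, and the contraction constant $\gamma < 1$ delivers the conclusion.
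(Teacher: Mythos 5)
Your proposal is correct and follows essentially the same route as the paper: the paper also establishes that $\cT_{\pi_\dataset}$ is a $\gamma$-contraction in the supremum norm by reducing to the nonexpansiveness of the restricted log-sum-exp $F_{\beta,\tau}$ (\cref{lem:F-contraction}), which it proves exactly via the variational characterization in \cref{lem:softmax} (supremum of $1$-Lipschitz functions), and then invokes the Banach fixed-point theorem. Your alternative elementary argument (shifting $q_1 \le q_2 + \|q_1-q_2\|_\infty$ inside the exponential) and your remark about the common support set are fine but not needed beyond what the paper already does.
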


As highlighted by Equation \plaineqref{eq:is-softmax-me-trick}, 
the in-sample softmax policy corresponds to the solution of the maximum entropy policy optimization.  
This implies that similarly to Soft Actor-Critic \citep{haarnoja2018soft}, we can apply policy iteration to find this policy. 
Let $\pi_t$ be the policy at iteration $t$. 
The algorithm first learns the value function $\tilde{q}^{\pi_t}$, 
then updates the policy $\pi_{t+1}$ such that $\tilde{q}^{\pi_t} \leq \tilde{q}^{\pi_{t+1}}$. 
The following result shows that this procedure guarantees policy improvement. 

\begin{lemma}
Let $\pi_t$ be a policy such that $\pi_t \preceq {\pi_\dataset}$. 
%\chenjun{TODO: maybe we don't need this}
Define 
\begin{align}
\pi_{t+1}(a|s) \propto {\pi_\dataset(a|s)  \exp\left( \frac{\tilde{q}^{\pi_t} (s,a)}{\tau} - \log \pi_\dataset (a|s) \right)}\, .
\label{eq:policy-improvement}
\end{align}
Then
$\pi_{t+1}\preceq \pi_\dataset $ and 
$\tilde{q}^{\pi_{t+1}} \geq \tilde{q}^{\pi_t}$. 
\label{lem:soft-policy-improvement}
\end{lemma}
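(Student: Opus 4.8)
The plan is to follow the soft policy-improvement template of Soft Actor-Critic \citep{haarnoja2018soft}, adapted to the support constraint, relying on the maximum-entropy characterization \plaineqref{eq:is-softmax-me-trick} to identify $\pi_{t+1}$ as an \emph{exact} constrained maximizer. I would first dispatch the support claim, then establish a one-step inequality, and finally telescope it to obtain the value improvement. For the support claim, I use the simplification recorded after \plaineqref{eq:is-softmax-policy}: since $\exp(-\log\pi_\dataset(a|s)) = \pi_\dataset(a|s)^{-1}$, with the convention $0\cdot\infty = 0$, the update collapses to $\pi_{t+1}(a|s) = 0$ whenever $\pi_\dataset(a|s) = 0$ and $\pi_{t+1}(a|s)\propto\exp(\tilde q^{\pi_t}(s,a)/\tau)$ otherwise. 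As the exponential is strictly positive and finite, the support of $\pi_{t+1}(\cdot|s)$ equals that of $\pi_\dataset(\cdot|s)$, so $\pi_{t+1}\preceq\pi_\dataset$ is immediate.

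For the one-step inequality, applying \plaineqref{eq:is-softmax-me-trick} with $q = \tilde q^{\pi_t}$ identifies $\pi_{t+1}(\cdot|s)$ as the maximizer of $\sum_a\pi(a|s)\tilde q^{\pi_t}(s,a) + \tau\sH(\pi)$ over all $\pi\preceq\pi_\dataset$. The hypothesis $\pi_t\preceq\pi_\dataset$ is exactly what makes $\pi_t$ a feasible competitor, so evaluating the objective at $\pi_t$ gives
\begin{align*}
\sum_a \pi_{t+1}(a|s)\big[\tilde q^{\pi_t}(s,a) - \tau\log\pi_{t+1}(a|s)\big]
\;\geq\;
\sum_a \pi_t(a|s)\big[\tilde q^{\pi_t}(s,a) - \tau\log\pi_t(a|s)\big]
\;=\;
\tilde v^{\pi_t}(s)\, ,
\end{align*}
where the final equality is the soft Bellman consistency identity $\tilde v^\pi(s) = \sum_a\pi(a|s)[\tilde q^\pi(s,a) - \tau\log\pi(a|s)]$, which I would prove by unrolling the definitions of $v^\pi$ and $\sH$ (both satisfy the usual one-step recursions, and their $\tau$-weighted sum yields the claim).

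To telescope, I substitute $\tilde q^{\pi_t}(s,a) = r(s,a) + \gamma\EE_{s'}[\tilde v^{\pi_t}(s')]$ into the left side and repeatedly re-apply the one-step inequality to each bootstrapped $\tilde v^{\pi_t}(s')$ term. After $n$ unrollings this produces a lower bound consisting of the first $n$ discounted entropy-augmented rewards collected under $\pi_{t+1}$ plus a residual $\gamma^n\EE^{\pi_{t+1}}[\tilde v^{\pi_t}(s_n)]$. Since $\gamma < 1$ and both rewards and per-step entropy are bounded (finite $\cS,\cA$), the residual vanishes as $n\to\infty$, leaving $\tilde v^{\pi_t}(s)\leq\tilde v^{\pi_{t+1}}(s)$ for every $s$. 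Then $\tilde q^{\pi_{t+1}}\geq\tilde q^{\pi_t}$ follows because each action-value equals $r(s,a) + \gamma\EE_{s'}[\tilde v^{(\cdot)}(s')]$ and $\tilde v^{\pi_{t+1}}\geq\tilde v^{\pi_t}$ pointwise.

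The main obstacle is conceptual rather than computational: one must recognize that the hypothesis $\pi_t\preceq\pi_\dataset$ is precisely the feasibility condition needed to compare $\pi_t$ against the constrained maximizer $\pi_{t+1}$—without it the comparison underlying the one-step inequality is invalid, which is why the lemma states this assumption explicitly. The only genuinely technical point is justifying the vanishing tail in the telescoping limit, where $\gamma < 1$ and boundedness are used; the soft Bellman consistency identity is routine bookkeeping.
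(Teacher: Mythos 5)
Your proposal is correct and follows essentially the same route as the paper: both establish the support claim directly from the form of the update, then use the fact that $\pi_{t+1}(\cdot|s)$ exactly maximizes the support-constrained entropy-regularized objective (the paper's Lemma~\ref{lem:softmax}, your Equation~\plaineqref{eq:is-softmax-me-trick}) with $\pi_t$ as a feasible competitor, and finally unroll the resulting one-step inequality recursively, with the $\gamma^n$ residual vanishing. The only cosmetic difference is that you telescope at the level of $\tilde v$ and then lift to $\tilde q$, whereas the paper telescopes $\tilde q$ directly via its on-policy operator.
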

Note that $\pi_{t+1}$ not only ensures policy improvement, but also stays  in the support of $\pi_\dataset$.  
Now let us define the \emph{on-policy entropy-regularized operator}, 
\begin{align}
(\cT^\pi q)(s,a)
=
r(s,a) + \gamma \EE_{s', a' \sim P^\pi(\cdot|s,a)}[{q}(s', a') - \tau \log \pi(a'|s')]\, .
\label{eq:policy-evaluation}
\end{align}
Since this operator is a contraction (shown formally in \cref{lem:on-policy-contraction}), we can evaluate $\tilde{q}^\pi$ by repeatedly applying $\cT^\pi q$ from any $q$ until converge.  
These updates give rise to the \emph{in-sample soft policy iteration} algorithm that iteratively updates the policy using \plaineqref{eq:policy-improvement} and evaluate its by using $\cT^\pi$. 
The convergence for the tabular case is given below. 

\begin{theorem}
%Consider the KWIK policy iteration algorithm, that iteratively update policy by \plaineqref{eq:policy-improvement} and value by \plaineqref{eq:policy-evaluation}.  
For $\gamma < 1$, starting from any initial policy $\pi$ such that $\pi\preceq \pi_\dataset$, in-sample soft policy iteration converges to the in-sample softmax optimal policy $\tilde{\pi}^*_{\pi_\dataset}$.
\label{thm:policy-iteration-fixed-point}
\end{theorem}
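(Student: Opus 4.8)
The plan is to follow the classical soft policy iteration convergence argument (as in Soft Actor-Critic), specialized to the support-constrained setting. The scheme alternates policy evaluation---running $\cT^{\pi_t}$ from \cref{eq:policy-evaluation} to its fixed point $\tilde{q}^{\pi_t}$, which is well-defined because $\cT^{\pi_t}$ is a contraction by \cref{lem:on-policy-contraction}---with the closed-form policy improvement step \cref{eq:policy-improvement} that produces $\pi_{t+1}$. I would first establish that the sequence of action-value functions $\{\tilde{q}^{\pi_t}\}$ is monotonically non-decreasing and bounded above, hence convergent, and then identify its limit as the unique fixed point of $\cT_{\pi_\dataset}$.

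For monotonicity, \cref{lem:soft-policy-improvement} directly gives $\pi_{t+1}\preceq \pi_\dataset$ and $\tilde{q}^{\pi_{t+1}} \geq \tilde{q}^{\pi_t}$ pointwise, provided the initial policy satisfies $\pi\preceq\pi_\dataset$; so by induction every iterate stays in the support of $\pi_\dataset$ and the values increase monotonically. Boundedness follows from finiteness of $\cS$ and $\cA$: rewards are bounded so $v^{\pi}$ is uniformly bounded, and the discounted entropy $\sH(s,\pi)$ is at most $\log|\cA|/(1-\gamma)$ because the per-state Shannon entropy of any distribution over the finite set $\cA$ is at most $\log|\cA|$. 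A monotone sequence bounded above converges pointwise, so $\tilde{q}^{\pi_t} \to \tilde{q}^\infty$ for some limit $\tilde{q}^\infty$.

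It remains to show $\tilde{q}^\infty$ is a fixed point of $\cT_{\pi_\dataset}$. In the limit the improvement step can no longer strictly increase the value, so the limit policy $\pi^\infty$ (the softmax-greedy image of $\tilde{q}^\infty$ under \cref{eq:is-softmax-policy}) attains the maximum-entropy objective with respect to its own value. Invoking the identity \cref{eq:is-softmax-me-trick}, this says $\tilde{v}^{\pi^\infty}(s) = \tau \log \sum_{a': \pi_\dataset(a'|s)>0} e^{\tilde{q}^\infty(s,a')/\tau}$, and substituting into $\tilde{q}^{\pi^\infty}(s,a) = r(s,a) + \gamma \EE_{s'}[\tilde{v}^{\pi^\infty}(s')]$ shows $\tilde{q}^\infty = \cT_{\pi_\dataset}\tilde{q}^\infty$, i.e.\ $\tilde{q}^\infty$ solves the in-sample softmax optimality equation \cref{eq:in-sample-soft-bellman-optimality}. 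Since \cref{thm:off-policy-fixed-point} guarantees $\cT_{\pi_\dataset}$ has a unique fixed point $\tilde{q}^*_{\pi_\dataset}$, we conclude $\tilde{q}^\infty = \tilde{q}^*_{\pi_\dataset}$, and feeding this back through \cref{eq:is-softmax-policy} gives $\pi^\infty = \tilde{\pi}^*_{\pi_\dataset}$.

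The hard part will be making this limiting fixed-point argument rigorous: monotone convergence of the values does not by itself tell us that the corresponding policies converge, nor that the limit value is a fixed point. I would bridge this using continuity in finite spaces---the evaluation map $\pi\mapsto\tilde{q}^\pi$ and the softmax-greedy map $q\mapsto\ingreedypi$ are both continuous, so $\tilde{q}^{\pi_t}\to\tilde{q}^{\pi^\infty}$ forces $\tilde{q}^{\pi^\infty} = \tilde{q}^\infty$, and passing to the limit in \cref{eq:policy-improvement} yields the self-consistency (greedy-with-respect-to-own-value) condition that drives the optimality equation above. An alternative, avoiding explicit discussion of $\pi^\infty$, is to argue directly at the level of values: in the limit the tightness of \cref{eq:is-softmax-me-trick} forces $\tilde{q}^\infty$ to satisfy the in-sample softmax optimality equation, which I find cleaner to carry out and would adopt as the primary route.
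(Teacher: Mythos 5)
Your proposal is correct and follows essentially the same route as the paper: monotone improvement via \cref{lem:soft-policy-improvement}, boundedness of the entropy-regularized values, and identification of the limit with the unique fixed point of $\cT_{\pi_\dataset}$ guaranteed by \cref{thm:off-policy-fixed-point}. Your explicit continuity argument (in finite $\cS\times\cA$, the maps $q\mapsto\ingreedypi$ and $\pi\mapsto\tilde{q}^{\pi}$ are continuous, so the limit policy is softmax-greedy with respect to its own value and hence satisfies \cref{eq:in-sample-soft-bellman-optimality}) carefully fills in the limiting step that the paper handles only implicitly through \cref{lem:off-policy-operator-dominate} and \cref{lem:off-policy-operator-optimality}.
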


\section{Policy Optimization using the In-sample Softmax}\label{sec_inac}

In this section, we develop an In-sample Actor-critic (AC) algorithm based on the in-sample softmax. This is the first time we see the utility of the in-sample softmax, to facilitate sampling actions from $\pi_\dataset$ using only actions in the dataset. This contrasts other direct methods, like BCQ that approximate the in-sample max by sampling from an approximate $\pi_\omega$ \citep{fujimoto2019offpolicy}. Throughout this section we generically develop the algorithm for continuous and discrete actions. Instead of using sums, therefore, we primarily write formulas using expectations, which allow for either discrete or continuous actions. 

The In-sample AC algorithm is similar to SAC \citep{haarnoja2018soft}, except that we carefully consider out-of-sample actions. We similarly learn an actor $\pi_\psi$ with parameters $\psi$, action-values $q_\theta$ with parameters $\theta$ and a value function $v_\phi$ with parameters $\phi$. Additionally, we learn $\pi_\omega \approx \pi_\dataset$. We need this to define the greedy policy shown above in Equation \plaineqref{eq:policy-improvement}, but do not directly use it to constrain the support over actions.

The first step in the algorithm is to extract $\pi_\omega \approx \pi_\dataset$. We do so using a simple maximum likelihood loss on the dataset: $\cL_{\text{behavior}}(\omega)  = -\EE_{(s,a)\sim \cD} \left[ \log \pi_\omega(a|s) \right]$. We do not add any additional tricks to try to ensure action probabilities are zero where $\pi_\dataset(a|s) = 0$, because this $\pi_\omega$ only plays a smaller role in our update. It will only be used to adjust the greedy policy, and will only be queried on actions in the dataset. 

Then we use a similar approach to SAC, where we alternate between estimating $q_\theta$ and $v_\phi$ for the current policy and improving the policy by minimizing a KL-divergence to the soft greedy policy. The main difference here to SAC is that we update towards the in-sample soft greedy. We cannot directly use Equation \plaineqref{eq:policy-improvement}, which involves $\pi_\dataset$ in the update, but can replace $\pi_\dataset$ in the update with our approximation $\pi_\omega$. We therefore update towards an approximate in-sample soft greedy policy
  \begin{equation*}
  \hat{\pi}_{\pi_\dataset, q_\theta}(a|s) = \pi_\dataset(a|s) \exp\left(\frac{q_\theta(s,a) - Z(s)}{ \tau} - \log \pi_\omega(a|s)\right)
  %\label{eq:in-sample-soft-greedy-for-kl}
  \end{equation*}
 where $Z(s) = \tau\log \int_a {\pi_\dataset(a|s)  \exp( \frac{{q_\theta(s,a)} }{\tau} - \log \pi_\omega(a|s)  )} d a$ is the normalizer to give a valid distribution.  
 We minimize a forward KL to this in-sample soft greedy policy, because that allows us to sample the KL by only sampling actions from the dataset.  
To see why, notice that 
\begin{align}
\KL&( \hat{\pi}_{\pi_\dataset, q_\theta}(\cdot | s) || \pi_\psi(\cdot | s))
= -\EE_{a \sim\hat {\pi}_{\pi_\dataset, q_\theta}(\cdot | s)} [ \log \pi_\psi(a | s) - \log \hat{\pi}_{\pi_\dataset, q_\theta}(a | s)] \label{eq_kl-insample}\\
&= \EE_{a \sim \pi_\dataset(\cdot | s)} \left[ \exp\left(\frac{q_\theta(s,a) - Z(s)}{\tau} - \log \pi_\omega(a|s)\right)(\log \pi_\psi(a | s) + \log \hat{\pi}_{\pi_\dataset, q_\theta}(a | s))\right] \nonumber
\end{align} 
The expectation is now over samples $a \sim \pi_\dataset(\cdot |s)$; the actions in the dataset are precisely sampled from $\pi_\dataset$. To sample the gradient for this loss, we also need an estimate for $Z(s)$. We use our parameterized $v_\phi$ to estimate $Z$; we discuss why this is reasonable below.
The final loss function for the actor $\pi_\psi$ is
\begin{align}
\cL_{\text{actor}}(\psi)  
= 
- \EE_{s,a\sim \cD} \left[  \exp\left( \frac{q_\theta(s,a) - v_\phi(s)}{\tau} - \log \pi_\omega(a|s) \right) \log \pi_\psi(a|s) \right]\, .
\label{eq:policy-loss}
\end{align}

For the value function we use standard value function updates for the entropy-regularized setting. The objectives are
\begin{align}
\cL_{\text{baseline}}(\phi) 
&= \EE_{s\sim \cD, a\sim \pi_\psi(s)}
\left[ 
\frac{1}{2}
\left(
v_\phi(s) - \left(
q_\theta(s,a)  - \tau \log \pi_{\psi}(a|s)
\right)
\right)^2
\right]
\label{eq:v-loss}
\\
\cL_{\text{critic}}(\theta) 
&= \EE_{s,a,r,s'\sim\cD}
\left[ 
\frac{1}{2}
\left( 
r + \gamma v_\phi(s')
- q_\theta(s,a)
\right)^2
\right]
\label{eq:q-loss}
.
\end{align}
The action-values use the estimate of $v_\phi$ in the next state, and so avoids using out-of-distribution actions. The update to the value function, $v_\phi$, uses only actions sampled from $\pi_\psi$, which is being optimized to stay in-sample. Periodically, however, $v_\phi$ may bootstrap off of out-of-distribution actions because we do not guarantee that $\pi_\psi \preceq \pi_\dataset$. In fact, in early learning we expect $\pi_\psi$ will not satisfy this property. Despite this, the actor update will progressively reduce the probability of these out-of-distribution actions, even if temporarily the action-values overestimate their value, because the actor update pushes $\pi_\psi$ towards the in-sample greedy policy. This means that the overestimate is unlikely to significantly skew the actor, and progressively the overestimate should be reduced as the support of $\pi_\psi$ is reduced. 

%First note that true soft value for $\pi_\psi$ is not the same as $Z$. 
Finally, instead of learning a separate approximation for $Z$, we opt for the simpler approach of using $v_\phi$. The reason is that $v_\phi$ should provide a reasonable approximation to $Z$ because of the relationship between soft values and $Z$. 
%is indeed a good approximation of $Z$ under some conditions. 
From Equations \plaineqref{eq:is-softmax-policy} and \plaineqref{eq:in-sample-bellman} (formally proved in \cref{lem:softmax}), we know that the soft values for the in-sample soft greedy policy $\tilde{\pi}_{\pi_\dataset, q_\theta}$ correspond to the normalizer $Z$ for that policy. Therefore, 
given that $\pi_\omega\approx\pi_\dataset$, the soft values of approximate in-sample soft greedy policy $\hat{\pi}_{\pi_\dataset, q_\theta}$ should also be similar to $Z$. 
%$\hat{\pi}_{\pi_\dataset, q_\theta}$ and $Z$ are the solutions of a maximum entropy optimization given the current $q_\theta$:
%$
%Z(s) = \max_{\pi\preceq \pi_\dataset} \EE_{\pi}[ q_\theta(s,a) - \tau \log \pi_\dataset(a|s)]
%$. 
Since we optimize our policy to approximate $\hat{\pi}_{\pi_\dataset, q_\theta}$,  
we expect its entropy-regularized value, 
which is the learning target of $v_\phi$ as shown in Equation \plaineqref{eq:v-loss},  
 to be a good approximation of $Z$.  
\newcommand{\newMethod}{KWIK\,} 
\newcommand{\newMethodAC}{KWIK AC\,} 
\newcommand{\newMethodVI}{KWIK Q-learning\,} 
\newcommand{\han}[1]{{\color{green} [Han: #1]}}

\section{Experiments}

In this section, we investigate three primary questions. First, in the tabular setting, can our algorithm InAC converge to a policy found by an oracle method that exactly eliminates out-of-distribution (OOD) actions when bootstrapping? Second, in Mujoco benchmarks, how does our algorithm compare with several baselines using different offline datasets with different coverage? Third, how does InAC compare with other baselines when used for online fine-tuning after offline training? We refer readers to Appendix~\ref{appendix:exp} for additional details and supplementary experiments. 

\textbf{Baseline algorithms:} Oracle-Max: completely eliminates OOD actions when bootstrapping in tabular domains, by using counts to exactly estimate $\pi_\dataset$. FQI: the regular Q-learning update applied to batch offline data. CQL~\citep{kumar2020cql}: conservative Q-learning. IQL~\citep{kostrikov2022offline}: implicit Q-learning. TD3+BC~\citep{fujimoto2021mini}: TD3 with behavior cloning regularization. AWAC~\citep{nair2021awac}: Advantage Weighted Actor-Critic. 

% evaluate \newMethod, we conducted a series of experiments on various controlling tasks with multiple data collection policies. The controlling tasks covered both discrete and continuous action spaces, and the data collection policies varied from random to expert level. 
\subsection{Sanity Check: approaching oracle performance in the Tabular Setting}\label{sec:exp_linearFA}
In this experiment we demonstrate that InAC finds the same policy as found by an oracle algorithm that completely removes out-of-distribution (OOD) actions.
% Marthac: no space
%In the tabular setting, our in-sample softmax operator is a contraction and has theoretical convergence guarantee.  % which eliminates generalization and thus is easy to analyse. 
We use the Four Rooms environment, where the agent starts from the bottom-left and needs to navigate through the four rooms to reach the goal in the up-right corner in as few steps as possible. There are four actions: $\Action=\{up, down, right, left\}$. The reward is zero on each time step until the agent reaches the goal-state where it receives +1. Episodes are terminated after 100 steps, and $\gamma$ is 0.9.
We use three different behavior policies to collect three datasets from this environment called \textbf{Expert, Random, and Missing-Action}. The Expert dataset contains data collected by the optimal policy. In Random dataset, the behavior policy takes each action with equal probability. %Mixed dataset had $1\%$ optimal trajectories and $99\%$ random trajectories. 
For the Missing-Action dataset, we removed all transitions taking $down$ actions in the upper-left room from the Mixed dataset.   

To magnify the impact of bootstrapping from OOD actions we used optimistic initialization for each algorithm (i.e., initialized all action values to be larger than the actual values under the optimal policy). This ensures overestimation occurs in some states and we can observe how well the algorithms mitigate poor bootstrap targets. 

%\begin{wrapfigure}{r}{0.4\linewidth}
%\begin{minipage}{\linewidth}
\begin{figure}[htb!]
	\centering
	\subfigure[Policy evaluation performance]{\includegraphics[width=0.7\linewidth]{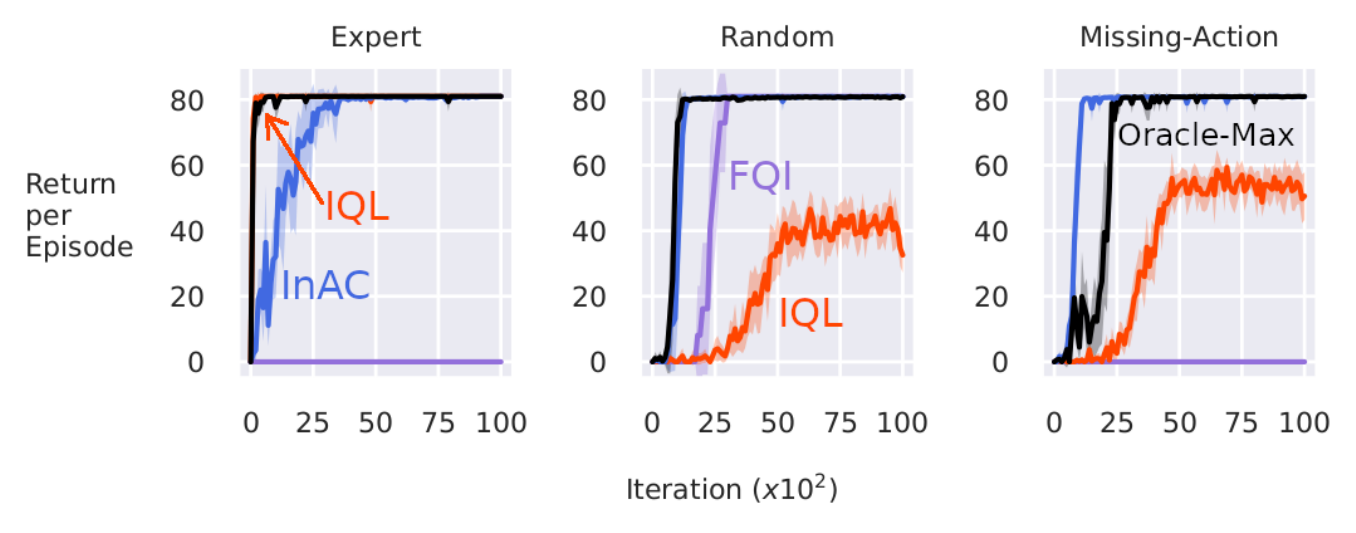}}
	\subfigure[Four Room]{\includegraphics[width=0.185\linewidth]{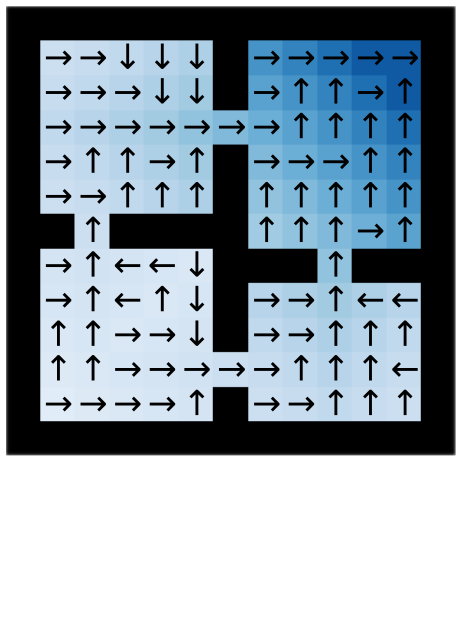}}
		\vspace{-0.3cm}
	\caption{\small
		Policy evaluation performance (return per episode) v.s. number of updates on Expert, Random, and Missing-Action datasets. Each curve is averaged over $10$ runs, and shaded areas show a $95\%$ confidence interval. % InAC stayed close to the Oracle-Max baseline and both of them found the optimal policy. %The x-axis is the number of iterations, and the y-axis is the return per episode. 
	% The optimal trajectory takes $19$ steps from starting state to the goal and each episode has $100$ steps, therefore, having return equals to $81$ means the agent has found the optimal policy.
    %Note that InAC and Brute-Force found optimal policy. % which only takes $19$ per episode.
     }\label{fig:exp_tabular}
\end{figure}
%\end{minipage}
%\end{wrapfigure}

The results in Figure~\ref{fig:exp_tabular} are exactly as expected. InAC converges to the same policy as found by Oracle-Max. The FQI baseline cannot effectively remove OOD actions when bootstrapping, and so performs poorly and sometimes completely fails when the dataset has poor action coverage (i.e., there are many OOD actions). Finally, IQL performs poorly when the offline data is highly skewed towards suboptimal policies. It is likely because the upper expectile of the state value provides a poor approximation to the in-sample maximum action value. 
% These observations effectively avoids bootstrapping from unseen state-action pairs in the given dataset and achieves the
%; and 3) InQI is usually worse than InAC as it uses a biased loss function~\ref{eq:v-loss-inac} to learn the state value which potentially hinders the performance, as we observed in later experiments. 

%However, both our method InAC and Oracle-Max can converge to the same optimal policy, so we can be confident that in-sample softmax does avoid the issue of bootstrapping from OOD actions.
% Martha: space, and not the point of this section 
%In an additional experiment, we found that using pessimistic initialization for FQI can help stabilize learning but can still diverge if the pessimistic enough. We report this result in Appendix~\ref{appendix:additional_exp}.  

\subsection{OOD effects in continous action problems} \label{sec:exp_nonlinear}

In this section we provide a suite of results from four Mujoco environments from D4RL~\citep{fu2020d4rl}, now standard datasets for evaluating offline RL algorithms. Each dataset (named as Expert, Medium-Expert, Medium-Replay, and Medium) was designed to mimic different deployment scenarios. In the Expert dataset all trajectories were collected using a policy learned by a SAC agent. In Medium, all trajectories were collected with the policy learned by a SAC agent halfway thought training. Medium-Expert combines the expert and medium datasets together, and similarly Medium-Replay combines Medium with the replay buffer used during learning.%; Random: all trajectories were collected with a randomly initialized policy.

\begin{wrapfigure}[22]{r}{0.35\linewidth}
%\begin{minipage}{\linewidth}
%\begin{figure}
	\centering
	\vspace{-0.5cm}
	\includegraphics[width=0.8\linewidth]{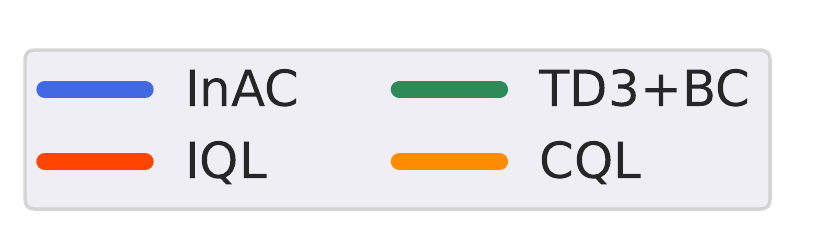}
	\includegraphics[width=\linewidth]{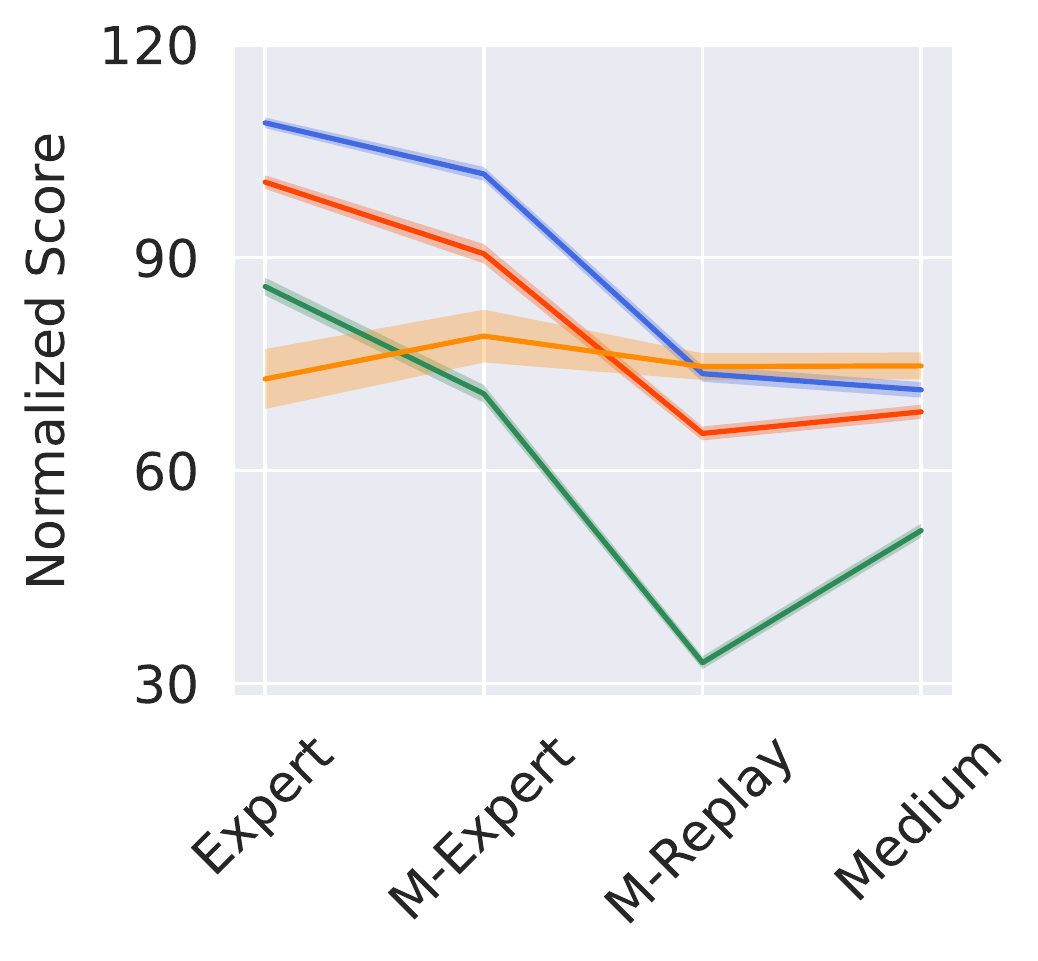}
	\vspace{-0.7cm}
	\caption{\small
		Averaged score over environments v.s. different offline datasets. 
		%\emph{M} denote medium in the axis labels. 
		We averaged the normalized score over four Mujoco tasks and $10$ runs. The shaded area indicates the $95\%$ confidence interval. Comparing InAC and IQL with the \emph{sign test} over 40 runs, InAC was significantly better in all datasets. Expert, M-Expert, and M-Replay had $p$-value close to $0$, while Medium dataset gave $p=0.002$.%The x-axis indicates the dataset, and the y-axis indicates the normalized score, which is higher the better. As the dataset quality reduced from expert to random, InAC always had higher score, though all algorithms had a dropping performance.
	}
	\label{fg:exp_change}
%\end{figure}
%\end{minipage}
\end{wrapfigure}
Figure~\ref{fg:exp_change} summarizes each algorithm's performance averaged over all environments under different datasets. Our algorithm's performance dominates the others across datasets. 
In Figure~\ref{fg:exp_continuous} we provide a more detailed view of the data with learning curves in each environment. Overall InAC performs best or nearly so across all domains. In Hopper M-Expert, the result is likely a three-way tie, while in HalfCheetah M-Expert TD3+BC learns faster initially, but the quickly converges to much lower final performance compared with InAC. Naturally, all methods are dependent on the quality of the dataset. For example, when shifting from the higher quality (medium-expert) to the lower quality (medium-replay) data, TD3+BC---which regularizes the policy to stay close to the behavior policy---exhibits a significant performance drop. Overall, TD3+BC and CQL's performance is problem dependent: in some problems performing well and in others basically failing to learn. 
%CQL achieves the worst performance on HalfCheetah M-Expert and the best performance on HalfCheetah M-Replay, perhaps indicating there is an issue with the HalfCheetah M-Replay dataset as all algorithms perform similarily. 
Finally IQL performs nearly as well as InAC on many problems, but notably not on Walker2D and Hopper M-Replay.

\begin{figure}[t]
	\centering
	\vspace{-0.4cm}
	\includegraphics[width=0.55\linewidth]{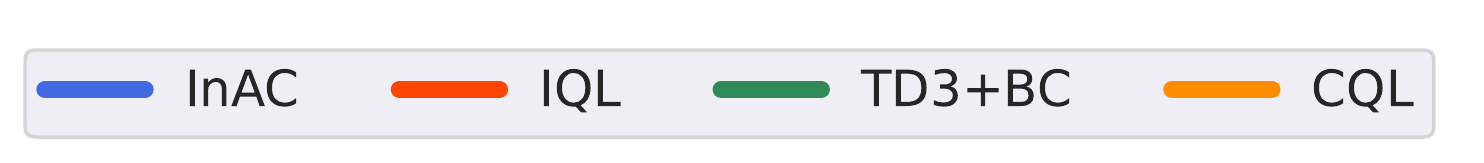}
	\includegraphics[width=0.9\linewidth]{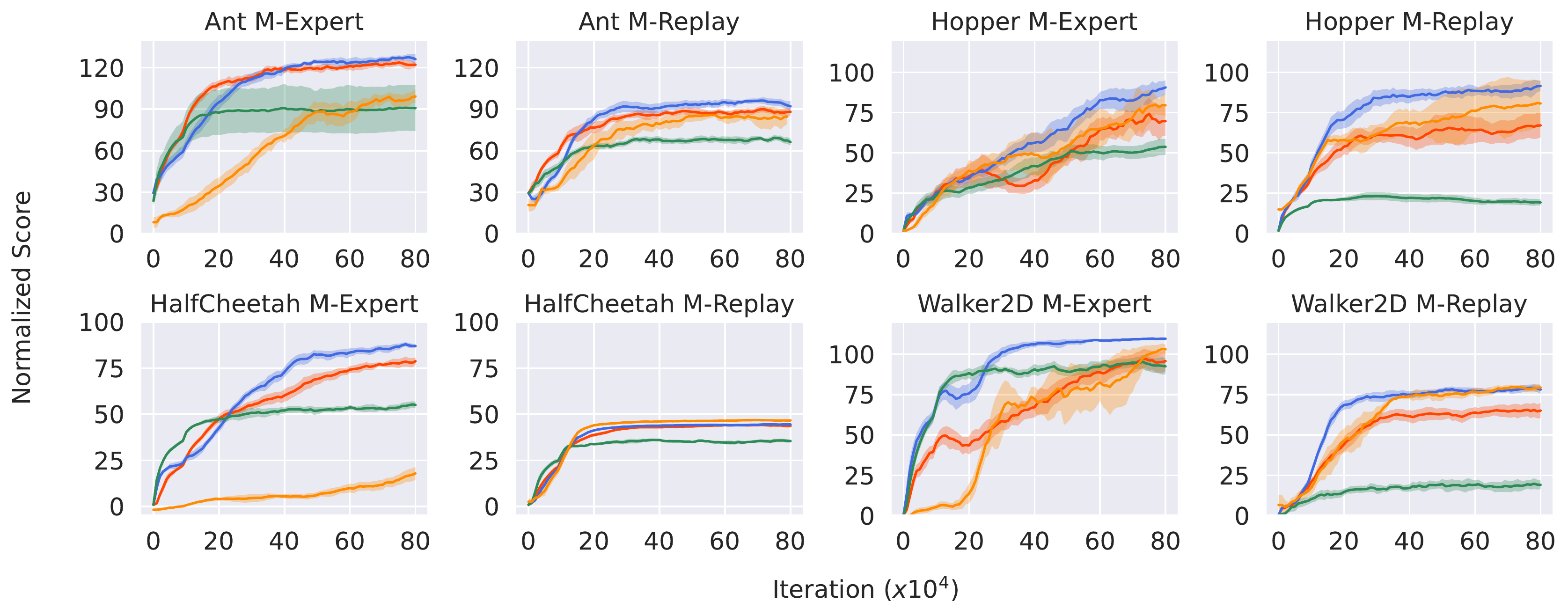}
		\vspace{-0.3cm}
	\caption{\small
		Policy evaluation performance (normalized score) v.s. number of updates. \emph{M} denotes Medium. %In continuous control task, InAC was better than others or comparable to the strongest baseline. We show policy evaluation performance in normalized score (y-axis) v.s. number of training updates (x-axis) in each subplot. 
	% The score normalization method is provided by~\cite{fu2020d4rl}.
	The results are averaged over $10$ runs, after using a smoothing window of size $10$. Results on additional offline datasets are %in Figure \ref{fg:apdx_continuous_curve} and \ref{fg:apdx_continuous_table} 
	in Appendix~\ref{appendix:additional_exp}, showing that InAC still learned the best policy.}
	\label{fg:exp_continuous}
\end{figure}

%Several of the baselines performance varies considerably from environement to en are sensitive to either environment or the data quality. For example, when shifting from the higher quality (medium-expert) to the lower quality (medium-replay) data, TD3+BC---which attempts to regularize the policy to stay close to the behavior policy---has a significant performance drop. IQL performs reasonably well on Ant but can be much worse than InAC on Hopper and Walker. %TD3+BC---which attempts to regularize the policy to stay close to the behavior policy---can learn quickly on Expert dataset, but it is quite sensitive to different data quality. 
%CQL's performance varies a lot across both environment and offline data quality. %In contrast, one can note that the kwik algorithms almost learn fastest and also achieve the best performances among baselines on the Medium-expert datasets across all environments. 

These results provide evidence that explicitly avoiding bootstrapping from OOD actions provides a significant benefit, but that regularizing the learned policy to stay close to the behavior policy can be problematic.

\subsection{From Offline training to Online fine-tuning} \label{sec:exp_finetune}
\begin{figure}[t]
	\centering
	\vspace{-0.4cm}
	\includegraphics[width=0.6\linewidth]{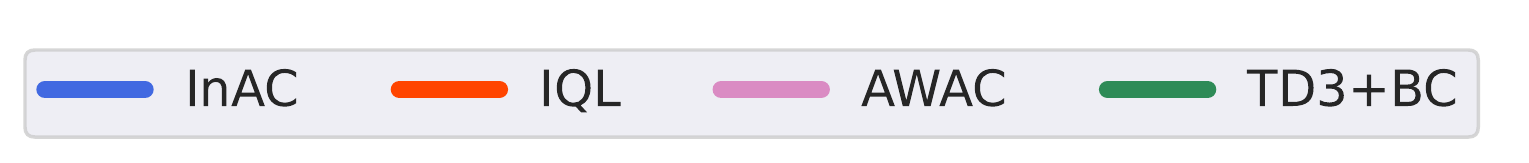}	\includegraphics[width=\linewidth]{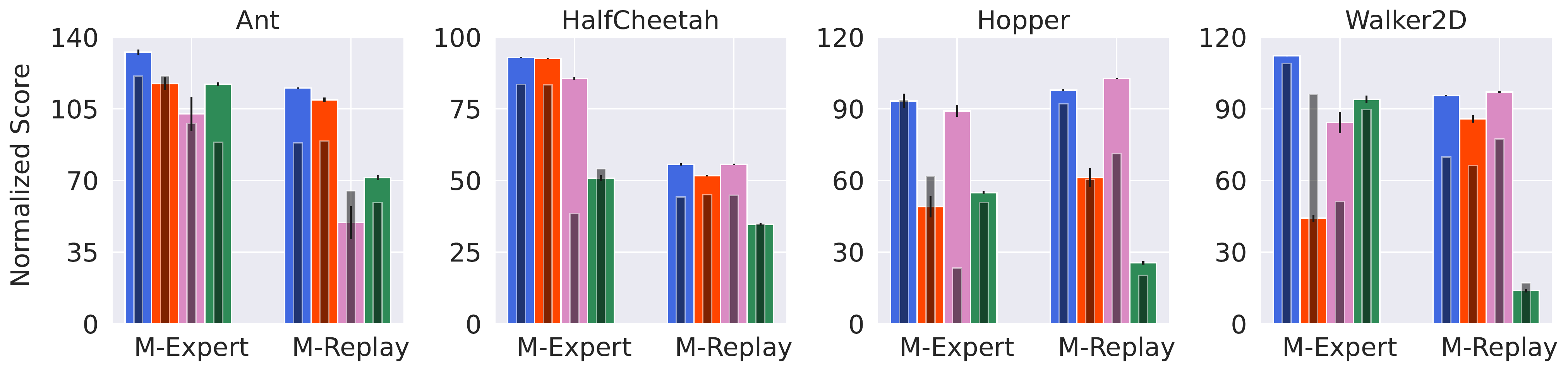}
	\caption{Online fine-tuned performance on Medium-Expert and Medium-Replay datasets across four Mujoco environments. \emph{M} represents Medium in this Figure. The results were averaged over $10$ random seeds. The short vertical line indicates the range of $3$ times standard error. Each \textbf{colored bar } shows the performance after $0.8$M steps of fine-tuning. The \textbf{thinner black bar inside the colored bar} indicates the performance immediately after offline training (i.e., before online fine-tuning). We also report numerical numbers in the Table~\ref{fg:apdx_finetune_table} in Appendix~\ref{appendix:additional_exp}.
	}
	\label{fg:exp_finetune}
\end{figure}

In real-world applications, it can be useful to take an offline-trained deep RL agent and fine-tune it online. In this section, we investigate how the performance of different baselines changes in fine-tuning. At the beginning of fine-tuning, the agent's policy is initialized with the policy learned offline and the buffer is filled with that same offline dataset. During online interactions, the agent continually adds its new experience into the buffer.

Figure \ref{fg:exp_finetune} shows the policy performance before and after online fine-tuning. We see that InAC is consistently one of the best algorithms across these environments and datasets. There are a few particularly notable outcomes in these experiments. In Hopper and Walker2d for the Medium-Expert data, the performance for IQL drops significantly after fine-tuning. This contrasts all the other algorithms, which maintained or improved performance when fine-tuning. The cause of this drop is as yet unclear. There is one new algorithm in this set, called AWAC, which was originally proposed specifically for online fine-tuning setting~\citep{nair2021awac}. We do in-fact see that this algorithm can have quite poor offline performance, but significantly improve after fine-tuning. Despite being designed for this fine-tuning setting, however, it does not outperform the offline algorithms, except in Hopper with Medium-Replay and more minorly on Walker2d with Medium-Replay. Overall, we find that InAC performs well in both the fully offline setting as well as when incorporating fine-tuning.    
%This result It should be noted that although we do not make any modifications to online fine-tuning, InAC achieves the best performance across all environments when the data quality is reasonable (Medium-Expert). In contrast, the algorithm AWAC---which was originally proposed for online fine-tuning~\citep{nair2021awac}---can achieve the best performance only on Hopper and Walker2d when data is Medium-Replay after fine-tuning. The results indicate the potential of our algorithm to be a reasonable baseline for online fine-tuning, and we leave it as a future work regarding how to modify it to further enhance its online fine-tuning performance. 

% Theoretically, \newMethod should have a risen score during fine-tuning. The new online interactions improved the empirical MDP constructed from the buffer. Thus \newMethod learned from an MDP closer to the true environment, so that was more likely to find a better policy. Empirically, the score of \newMethod did improve in all tasks after fine-tuning. When the offline dataset was suboptimal, the improvement tended to be larger.

\section{Conclusion}
In this paper we considered the problem of learning action-values and corresponding policies from a fixed batch of data. The algorithms designed for this setting need to account for the fact that action-coverage may be partial: certain actions may never be taken in certain regions of the state space. This complicates learning with our algorithms that rely on action-values estimates $q(s,a)$ and bootstrapping. In particular, if an action $a$ is not visited in $s$ or similar states, the $q(s,a)$ can be an arbitrary value. If this arbitrary value is high, it is likely to be used in the max in the bootstrap target and used to update the policy, which increases probability for high-value actions. This agent is chasing hallucinations, that can produce poor policies or even divergence. We focused on a simple approach to mitigate this issue: redefining the objectives to use an in-sample softmax and finally obtaining an approach to update towards an in-sample soft greedy policy that only uses actions sampled from the dataset. The resulting In-sample AC algorithm avoids these hallucinated values when updating the actor, and so correspondingly avoids them when updating the values. 

We had two clear findings from this work. First, the move to an in-sample softmax was a key step towards a simple implementation of in-sample learning. Previous work, like BCQ, tried to produce a simple algorithm built on an in-sample max algorithm, but needed to incorporate several tricks and later algorithms significantly improve on it. In-sample AC, on the other hand, required only minor modifications to existing AC approaches. The actor update was modified to consider the in-sample softmax, but the resulting update was no more complex than typical actor updates. Second, our results indicate that overall Implicit Q-learning (IQL) is quite a good algorithm. Like In-sample AC, it also avoids relying on actions sampled from an approximation of $\pi_\dataset$, but does so using expectile regression. Nonetheless, we find that In-sample AC is always competitive with IQL, and in some cases significantly outperforms it when the dataset is generated by a more suboptimal behavior policy. IQL can still be skewed by too many suboptimal actions in the dataset. In-sample AC provides a simple, easy-to-use approach, for both discrete and continuous actions, with an update designed to match only the support of $\pi_\dataset$ and not the action probabilities. 
%\subsubsection*{Author Contributions}
%If you'd like to, you may include  a section for author contributions as is done
%in many journals. This is optional and at the discretion of the authors.

%\subsubsection*{Acknowledgments}
%Use unnumbered third level headings for the acknowledgments. All
%acknowledgments, including those to funding agencies, go at the end of the paper.

\bibliography{iclr2023_conference}
\bibliographystyle{iclr2023_conference}

\newpage

\appendix
% \section{Appendix}

% \input{tex/related.tex}

\section{Appendix: Proofs}\label{app_proofs}

This section includes the proof of all main results.

\subsection{Results for one-step decision making}

We first introduce some results for one-step decision making that will be used in the derivations of main results.

\paragraph{Maximum Entropy Optimization}
 We consider a $k$-armed one-step decision making problem. 
Let $\Delta$ be a $k$-dimensional simplex and 
 $\vq=(q(1),\dots,q(k)) \in\sR^k$ be the reward vector. 
Maximum entropy optimization considers 
\begin{align}
\max_{\pi \in\Delta}\ \pi\cdot \vq + \tau \sH(\pi)\, .
\end{align}
The next result characterizes the solution of this problem (Lemma 4 of \citep{nachum2017bridging}). 
\begin{lemma}
\label{lem:nachum-softmax}
For $\tau > 0$, 
let  
\begin{align}
F_{\tau}(\vq) = \tau \log  \sum_{a} e^{ q(a) / \tau} \, ,
\quad
f_{ \tau}(\vq) = \frac{e^{\vq / \tau}}{\sum_{a} e^{q(a) / \tau}} = e^{\frac{\vq - F_{\tau}(\vq)}{\tau}}\, .
\end{align}
Then there is
\begin{align}
F_{ \tau}(\vq) = \max_{\pi\in\Delta}\ \pi\cdot \vq + \tau \sH(\pi)
=
f_{\tau}(\vq)\cdot \vq + \tau \sH(f_{\tau}(\vq))\, .
\end{align}
\end{lemma}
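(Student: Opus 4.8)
The plan is to establish both equalities at once by rewriting the entropy-regularized objective as a Kullback--Leibler divergence from the candidate maximizer $f_\tau(\vq)$. The key observation is that the defining formula for $f_\tau$ can be inverted: writing $p^* = f_\tau(\vq)$, the identity $p^*(a) = e^{(q(a) - F_\tau(\vq))/\tau}$ rearranges to $q(a) = \tau \log p^*(a) + F_\tau(\vq)$ for every arm $a$. This lets me substitute the reward vector in terms of $p^*$ and the scalar $F_\tau(\vq)$, which is exactly what collapses the maximization into a divergence.

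Concretely, for an arbitrary $\pi \in \Delta$ I would substitute this expression into $\pi \cdot \vq + \tau \sH(\pi)$ and use $\sum_a \pi(a) = 1$ to pull out the constant $F_\tau(\vq)$. The remaining terms combine into $\tau \sum_a \pi(a) \log(p^*(a)/\pi(a)) = -\tau \KL(\pi \| p^*)$, giving
\begin{align}
\pi \cdot \vq + \tau \sH(\pi) = F_\tau(\vq) - \tau \KL(\pi \| p^*)\, .
\end{align}
Because $\tau > 0$ and $\KL(\pi \| p^*) \geq 0$ by Gibbs' inequality, with equality if and only if $\pi = p^*$, the maximum over the simplex equals $F_\tau(\vq)$ and is attained uniquely at $\pi = p^* = f_\tau(\vq)$. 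Evaluating the objective at this maximizer then yields the second equality $F_\tau(\vq) = f_\tau(\vq)\cdot\vq + \tau \sH(f_\tau(\vq))$ immediately.

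I expect no serious obstacle here, since exhibiting the maximizer explicitly sidesteps the existence/compactness arguments one would otherwise need in a Lagrange-multiplier derivation (which would also have to argue that the non-negativity constraints are inactive, using the divergence of the entropy gradient at the boundary). The only point requiring a little care is the boundary convention $0\log 0 = 0$, which keeps $\sH$ continuous on the compact simplex and preserves the divergence identity when some $\pi(a) = 0$; since $p^*(a) > 0$ for all $a$, no division by zero arises. For completeness I might mention the alternative Lagrangian route---maximizing $\sum_a \pi(a) q(a) - \tau \sum_a \pi(a)\log\pi(a)$ under $\sum_a \pi(a) = 1$, setting $q(a) - \tau(\log\pi(a)+1) - \lambda = 0$ to get $\pi(a) \propto e^{q(a)/\tau}$, and normalizing to recover $f_\tau(\vq)$---but the KL argument is cleaner and delivers uniqueness for free.
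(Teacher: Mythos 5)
Your proof is correct. Note that the paper does not actually prove this lemma at all: it is imported verbatim as Lemma~4 of \citet{nachum2017bridging}, so there is no in-paper argument to compare against. Your KL decomposition $\pi \cdot \vq + \tau \sH(\pi) = F_\tau(\vq) - \tau \KL(\pi \,\|\, f_\tau(\vq))$ is the standard and cleanest self-contained derivation (and is essentially the one used in the cited source); it exhibits the maximizer explicitly, gets uniqueness from Gibbs' inequality, and correctly handles the boundary via the $0\log 0 = 0$ convention, with no division-by-zero issue since $f_\tau(\vq)$ has full support. Nothing further is needed.
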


 \paragraph{In-Sample Maximum Entropy Optimization}
 
 Let  $\beta\in\Delta$ be an arbitrary  policy. In-sample maximum entropy optimization considers
\begin{align}
\max_{\pi\preceq \beta}\ \pi\cdot \vq + \tau \sH(\pi)\, .
\end{align}

We now characterize the solution of this problem. 
For $\tau > 0$
define the \emph{in-sample softmax value},
\begin{align}
F_{\beta, \tau}(\vq) = \tau \log \left( \sum_{a:\beta(a)>0} e^{ q(a) / \tau} \right)\, ,
\end{align}
and the \emph{in-sample softmax policy},
\begin{align}
f_{\beta, \tau}(\vq) = \frac{ \beta e^{ \vq  / \tau - \log \beta}}{\sum_{a:\beta(a)>0} e^{q(a) / \tau}} = \beta e^{\frac{\vq -  F_{\beta,\tau}(\vq)}{\tau} -  \log \beta}\, .
\end{align}

\begin{lemma}
\begin{align}
F_{\beta, \tau}(\vq) = \max_{\pi\preceq \beta}\ \pi\cdot \vq + \tau \sH(\pi)
=
f_{\beta,\tau}(\vq)\cdot \vq + \tau \sH(f_{\beta,\tau}(\vq))\, .
\end{align}
\label{lem:softmax}
\end{lemma}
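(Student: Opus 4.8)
The plan is to reduce this constrained optimization problem to the unconstrained case already handled in Lemma~\ref{lem:nachum-softmax}. The key observation is that the constraint $\pi \preceq \beta$ forces $\pi(a) = 0$ on every coordinate where $\beta(a) = 0$, so the optimization really takes place over the smaller simplex indexed by the support $S = \{a : \beta(a) > 0\}$. First I would argue that the objective $\pi \cdot \vq + \tau \sH(\pi)$, restricted to the face of the simplex supported on $S$, is exactly the maximum entropy objective for a $|S|$-armed problem with reward vector $\vq$ restricted to $S$. Applying Lemma~\ref{lem:nachum-softmax} to this reduced problem immediately yields that the optimal value is $\tau \log \sum_{a \in S} e^{q(a)/\tau} = F_{\beta,\tau}(\vq)$, and that the optimizer is the softmax over $S$, namely $\pi^*(a) \propto e^{q(a)/\tau}$ for $a \in S$ and $\pi^*(a) = 0$ otherwise.

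The remaining step is to check that this optimizer coincides with the claimed $f_{\beta,\tau}(\vq)$. I would verify this directly: since $\beta(a) e^{-\log \beta(a)} = 1$ for every $a \in S$, the formula $f_{\beta,\tau}(\vq)(a) = \beta(a)\, e^{(q(a) - F_{\beta,\tau}(\vq))/\tau - \log\beta(a)}$ collapses to $e^{(q(a) - F_{\beta,\tau}(\vq))/\tau}$ on the support and to $0$ off the support (using the convention $0 \cdot \infty = 0$ from the footnote). This is precisely the support-restricted softmax $\pi^*$, confirming $f_{\beta,\tau}(\vq)$ is the maximizer. The final equality $F_{\beta,\tau}(\vq) = f_{\beta,\tau}(\vq) \cdot \vq + \tau \sH(f_{\beta,\tau}(\vq))$ then follows either by plugging the optimizer back into the objective, or simply by invoking the corresponding identity from Lemma~\ref{lem:nachum-softmax} on the reduced problem.

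I do not expect any serious obstacle here; the lemma is essentially a bookkeeping exercise that lifts Lemma~\ref{lem:nachum-softmax} through the support restriction. The one point requiring mild care is the handling of the $\log \beta$ terms and the degenerate factors $\beta(a)^{-1}$ on the support: I would make explicit that these cancel so that the $\beta$-weighting in the definition of $f_{\beta,\tau}$ does \emph{not} skew the resulting policy, it only enforces the correct support. A clean way to present this is to note $\beta(a) e^{-\log\beta(a)} = 1$ once at the outset and then treat the support-restricted problem as an ordinary softmax, so that the algebra in the statement is merely an alternate, sampling-friendly way of writing the same object.
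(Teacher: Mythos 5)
Your proposal is correct and follows essentially the same route as the paper, which simply states that the result ``is directly implied by Lemma~\ref{lem:nachum-softmax}''; you are filling in exactly the intended reduction (restrict to the support of $\beta$, apply the unconstrained lemma there, and note that $\beta(a)e^{-\log\beta(a)}=1$ so the $\beta$-weighting only enforces the support). Your write-up is more explicit than the paper's one-line proof, but there is no substantive difference in approach.
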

\begin{proof}
This result is directly implied by \cref{lem:nachum-softmax}. 
\end{proof}

The next result shows that $F_{\beta,\tau}$ is a contractor. 
\begin{lemma}
For any two vectors $\vq_1, \vq_2\in\sR^k$, 
\begin{align}
\left\vert F_{\beta,\tau}(\vq_1) - F_{\beta,\tau}(\vq_2) \right\vert 
\leq
\Vert \vq_1 - \vq_2 \Vert_{\infty}\, .
\end{align}
\label{lem:F-contraction}
\end{lemma}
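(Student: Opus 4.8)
The plan is to recognize that $F_{\beta,\tau}$ is simply the log-sum-exp (softmax) map restricted to the support of $\beta$, and that log-sum-exp is the canonical example of a function that is $1$-Lipschitz with respect to $\Vert\cdot\Vert_\infty$. I would prove this directly with an elementary monotonicity argument rather than appealing to calculus, since it makes the role of the support restriction transparent and avoids any smoothness bookkeeping.

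Concretely, set $c = \Vert \vq_1 - \vq_2 \Vert_\infty$, so that $q_1(a) \leq q_2(a) + c$ for every $a$, and in particular for every $a$ in the support $\{a : \beta(a) > 0\}$. Summing the exponentials over the support and using monotonicity of $t \mapsto e^{t/\tau}$ gives $\sum_{a:\beta(a)>0} e^{q_1(a)/\tau} \leq e^{c/\tau}\sum_{a:\beta(a)>0} e^{q_2(a)/\tau}$, where the factor $e^{c/\tau}$ pulls out of the sum because it does not depend on $a$. Taking logarithms and multiplying by $\tau$ (both monotone operations) yields $F_{\beta,\tau}(\vq_1) \leq F_{\beta,\tau}(\vq_2) + c$. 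The same argument with the roles of $\vq_1$ and $\vq_2$ exchanged gives $F_{\beta,\tau}(\vq_2) \leq F_{\beta,\tau}(\vq_1) + c$, and combining the two one-sided bounds gives $\vert F_{\beta,\tau}(\vq_1) - F_{\beta,\tau}(\vq_2)\vert \leq c$, which is the claim.

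An equivalent route, if one prefers, is the mean-value approach: write $F_{\beta,\tau}(\vq_1) - F_{\beta,\tau}(\vq_2) = \nabla F_{\beta,\tau}(\vq_\xi)\cdot(\vq_1 - \vq_2)$ for some point $\vq_\xi$ on the segment joining $\vq_1$ and $\vq_2$. The gradient of $F_{\beta,\tau}$ is exactly the in-sample softmax policy $f_{\beta,\tau}$ defined above, which is a probability vector supported on $\{a : \beta(a) > 0\}$; since its entries are nonnegative and sum to one, it has unit $\ell^1$ norm, and H\"older's inequality (the $\ell^1/\ell^\infty$ duality) bounds the inner product by $\Vert \vq_1 - \vq_2\Vert_\infty$.

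There is really no hard step here: the only thing to be careful about is that the support set $\{a:\beta(a)>0\}$ is held fixed across both $\vq_1$ and $\vq_2$, so the restricted sum behaves exactly like an ordinary log-sum-exp over that fixed index set and all of the standard estimates go through verbatim. The lemma is the technical ingredient that, combined with $\gamma < 1$, will make the in-sample softmax optimality operator $\cT_{\pi_\dataset}$ a $\gamma$-contraction in Theorem \ref{thm:off-policy-fixed-point}.
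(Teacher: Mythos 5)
Your proof is correct, but it takes a genuinely different route from the paper's. The paper derives the bound from the variational characterization in Lemma~\ref{lem:softmax}: it writes $F_{\beta,\tau}(\vq_1)-F_{\beta,\tau}(\vq_2)$ as a difference of suprema of $\pi\cdot\vq+\tau\sH(\pi)$ over policies $\pi\preceq\beta$, bounds this by choosing the same policy in both suprema so that the entropy terms cancel, and then invokes the $\ell^1/\ell^\infty$ duality to get $\sup_{\pi\preceq\beta}\pi\cdot(\vq_1-\vq_2)\leq\max_{a}\,(q_1(a)-q_2(a))$, with the reverse inequality by symmetry. Your primary argument instead works directly on the log-sum-exp formula: from $q_1(a)\leq q_2(a)+c$ you pull the factor $e^{c/\tau}$ out of the restricted sum and take logarithms, which is entirely elementary and does not rely on Lemma~\ref{lem:softmax} at all. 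What the paper's approach buys is coherence with the rest of the appendix, where the variational (maximum-entropy) view of $F_{\beta,\tau}$ is the workhorse; what yours buys is self-containedness and transparency about why the fixed support set $\{a:\beta(a)>0\}$ is harmless. Your alternative mean-value route is essentially the smooth version of the paper's argument, since $\nabla F_{\beta,\tau}=f_{\beta,\tau}$ is exactly the probability vector the paper pairs with $\vq_1-\vq_2$ before applying H\"older. All three arguments are sound.
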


\begin{proof}

\begin{align}
F_{\beta,\tau}(\vq_1) - F_{\beta,\tau}(\vq_2)
& =
\sup_{\pi_1\preceq \beta} \left\{ \pi_1 \cdot \vq_1 + \tau \sH(\pi_1) \right\}- 
\sup_{\pi_2\preceq \beta} \left\{ \pi_2 \cdot \vq_2 + \tau \sH(\pi_2) \right\}
\\
& = 
\sup_{\pi_1\preceq \beta} 
\left\{ 
\inf_{\pi_2\preceq \beta}
\pi_1 \cdot \vq_1 - \pi_2\cdot \vq_2 + \tau \sH(\pi_1) - \tau \sH(\pi_2)
\right\}
\\
& \leq
\sup_{\pi\preceq \beta} 
\left\{ 
\pi \cdot \vq_1 - \pi\cdot \vq_2 
\right\}
\\
&\leq
\max_{a:\beta(a)>0}\ q_1(a) - q_2(a)
\\
&\leq
\max_{a}\ q_1(a) - q_2(a)
\, ,
\end{align}
where the first step follows by \cref{lem:softmax}, 
the third step follows by choosing $\pi_2 = \pi_1$. 
This finishes the proof. 

\end{proof}

\subsection{Result for on-policy entropy-regularized backup}
\label{sec:on-policy-proof}

In this section we show some basic results for on-policy entropy-regularized backup. 
We note that most results are generalized from Section C.2 of \citep{nachum2017bridging} (which states for $\tilde{v}$) to $\tilde{q}$.

Recall that the entropy-regularized value functions are defined as
\begin{align}
\tilde{q}^\pi(s,a) = r(s,a) + \gamma \EE_{s'}[\tilde{v}^\pi(s')]\, ,
\ \ 
\tilde{v}^\pi(s) = \EE^\pi\left[ \sum_{t=0}^\infty \gamma^t (r(s_t, a_t) - \tau \log \pi(a_t|s_t) ) \Big| s_0 = s\right]\, .
\end{align}

Define the \emph{on-policy entropy-regularized Bellman operator}
\begin{align}
(\cT^\pi q)(s,a)
=
r(s,a) + \gamma \EE_{s', a' \sim P^\pi(\cdot|s,a)}[{q}(s', a') - \tau \log \pi(a'|s')]\, .
\end{align}

\begin{lemma}
For any policy $\pi$, 
$\tilde{q}^\pi$ satisfies that 
$
\tilde{q}^\pi = \cT^\pi \tilde{q}^\pi 
$. 
Moreover, suppose that $|\cA|<\infty$, $\cT^\pi$ is a contraction mapping. 
\label{lem:on-policy-contraction}
\end{lemma}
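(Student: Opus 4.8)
The plan is to prove the two assertions of \cref{lem:on-policy-contraction} separately, first the fixed-point identity and then the contraction property.

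For the fixed-point claim $\tilde{q}^\pi = \cT^\pi \tilde{q}^\pi$, I would start directly from the definition of $\tilde{q}^\pi$ in terms of $\tilde{v}^\pi$, namely $\tilde{q}^\pi(s,a) = r(s,a) + \gamma \EE_{s'}[\tilde{v}^\pi(s')]$. The key is to unroll $\tilde{v}^\pi(s')$ by one step. Writing out the definition of $\tilde{v}^\pi(s')$ as the expected discounted sum of entropy-regularized rewards and peeling off the $t=0$ term, I would obtain $\tilde{v}^\pi(s') = \EE_{a'\sim\pi(\cdot|s')}[ -\tau \log \pi(a'|s') + r(s',a') + \gamma \EE_{s''}[\tilde{v}^\pi(s'')]]$. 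Recognizing that $r(s',a') + \gamma \EE_{s''}[\tilde{v}^\pi(s'')] = \tilde{q}^\pi(s',a')$, this gives the soft Bellman consistency relation $\tilde{v}^\pi(s') = \EE_{a'\sim\pi}[\tilde{q}^\pi(s',a') - \tau \log\pi(a'|s')]$. Substituting this back into the expression for $\tilde{q}^\pi$ and identifying the result with $(\cT^\pi \tilde{q}^\pi)(s,a)$ completes the first part. This is a routine unrolling argument and should present no real difficulty.

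For the contraction property, I would take any two functions $q_1, q_2$ and bound $|(\cT^\pi q_1)(s,a) - (\cT^\pi q_2)(s,a)|$. The reward terms and the entropy terms $-\tau \log \pi(a'|s')$ cancel exactly since they do not depend on $q$, leaving $\gamma |\EE_{s',a'\sim P^\pi}[q_1(s',a') - q_2(s',a')]|$. Bounding the expectation by the supremum over its argument, this is at most $\gamma \Vert q_1 - q_2 \Vert_\infty$, establishing that $\cT^\pi$ is a $\gamma$-contraction in the sup-norm. Finiteness of $\cA$ (together with $\gamma < 1$) ensures the expectations are well-defined and the sup-norm is attained, so that Banach's fixed-point theorem applies and the fixed point of the first part is unique.

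I do not anticipate a genuine obstacle here: both parts are standard soft-policy-evaluation arguments, essentially the entropy-regularized analogue of the classical contraction proof for $\cT^\pi$, lifted from $\tilde v$ to $\tilde q$ as the paper notes. The only point requiring mild care is the bookkeeping in the one-step unrolling of $\tilde{v}^\pi$—ensuring the entropy term at the first step is correctly separated and that the inner quantity is recognized as $\tilde{q}^\pi(s',a')$ rather than $\tilde{v}^\pi$. Once that identification is made cleanly, both the fixed-point identity and the contraction estimate follow immediately.
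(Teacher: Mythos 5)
Your proposal is correct and follows essentially the same route as the paper's proof: a one-step unrolling of $\tilde{v}^\pi$ to obtain the soft consistency relation $\tilde{v}^\pi(s) = \EE_{a\sim\pi}[\tilde{q}^\pi(s,a) - \tau\log\pi(a|s)]$, substitution back to get the fixed-point identity, and the standard $\gamma$-contraction estimate in the sup-norm (which the paper simply cites as standard, with $|\cA|<\infty$ invoked to keep $\log\pi$ bounded). No gaps.
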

\begin{proof}
By the definition of $\tilde{v}^\pi$ and $\tilde{q}^\pi$, 
\begin{align}
& \tilde{v}^\pi(s) 
 = \EE^\pi\left[ \sum_{t=0}^\infty \gamma^t (r(s_t, a_t) - \tau \log \pi(a_t|s_t) ) \Big| s_0 = s\right]
\\
& = \EE^\pi\left[ r(s_0, a_0) - \tau \log \pi(a_0 | s_0) + \gamma \sum_{i=0}^\infty \gamma^i (r(s_{i+1}, a_{i+1}) - \tau \log \pi(a_{i+1}|s_{i+1}) ) \Big| s_0 = s\right]
\\
& = 
\EE_{a\sim\pi(s)}\left[ r(s,a) - \tau \log \pi(a|s) + \gamma \EE_{s'}
\left[
 \EE^\pi \left[ \sum_{t=0}^\infty \gamma^t (r(s_t, a_t) - \tau \log \pi(a_t|s_t) ) \Big| s_0 = s' \right]   
 \right]
 \right]
 \\
 & = 
 \EE_{a\sim\pi(s)}\left[ r(s,a) - \tau \log \pi(a|s) + \gamma \EE_{s'\sim P(s,a)} [\tilde{v}^\pi(s')]
 \right]
 \\
 & = 
  \EE_{a\sim\pi(s)}\left[ \tilde{q}^\pi(s,a) - \tau \log \pi(a|s)  \right]\, .
\end{align}
Thus
\begin{align}
\tilde{q}^\pi(s,a) 
& =
r(s,a) + \EE_{s'}[\tilde{v}^\pi(s')]
\\
& =
r(s,a) + \EE_{s'}[ \EE_{a\sim\pi(s')}\left[ \tilde{q}^\pi(s',a') - \tau \log \pi(a'|s')  \right] ]
\\
& = 
r(s,a) + \EE_{s',a'\sim P^\pi(s,a)}[ \tilde{q}^\pi(s',a') - \tau \log \pi(a'|s')   ]
\\
& = \cT^\pi \tilde{q}^\pi\, .
\end{align}
This finishes the proof of the first part. 
Since $|\cA|<\infty$, $\log \pi(a|s)$ is bounded for any $s,a$. 
Then that 
$\cT^\pi$ is a contraction mapping  follows directly from standard argument \citep{puterman2014markov}. 
\end{proof}

This shows that $\tilde{q}^\pi$ is a fixed of $\cT^\pi$. 
That is, starting from any value $q$, we can learn $\tilde{q}^\pi$ by repeatedly applying $q = \cT^\pi q$. 
The next result characterizes the convergence rate of this algorithm.  

\begin{lemma}
For any $\pi$ and $q$, we have
\begin{align}
\Vert (\cT^\pi)^k q - \tilde{q}^\pi \Vert_\infty \leq \gamma^k \Vert q - \tilde{q}^\pi \Vert_\infty\, .
\end{align}
\label{lem:on-policy-convergence-rate}
\end{lemma}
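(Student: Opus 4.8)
The plan is to leverage the two facts already established in \cref{lem:on-policy-contraction}: that $\tilde{q}^\pi$ is a fixed point of $\cT^\pi$, and that $\cT^\pi$ is a contraction. The only quantitative ingredient I need beyond the mere existence of a contraction is its precise modulus, which I expect to be exactly $\gamma$, so the first task is to pin this down explicitly.

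First I would difference the operator applied to two arbitrary value functions $q_1, q_2$. Writing out $(\cT^\pi q_1)(s,a) - (\cT^\pi q_2)(s,a)$, the reward term $r(s,a)$ and the entropy term $-\tau \log \pi(a'|s')$ both cancel, since neither depends on the input value function. What remains is
\begin{align}
(\cT^\pi q_1)(s,a) - (\cT^\pi q_2)(s,a) = \gamma \EE_{s',a'\sim P^\pi(\cdot|s,a)}\left[ q_1(s',a') - q_2(s',a') \right]\, .
\end{align}
Bounding the expectation by the supremum of its integrand and then taking the sup over $(s,a)$ gives $\Vert \cT^\pi q_1 - \cT^\pi q_2 \Vert_\infty \leq \gamma \Vert q_1 - q_2 \Vert_\infty$, establishing that $\cT^\pi$ is a $\gamma$-contraction in the sup norm.

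Next I would specialize to $q_1 = q$ and $q_2 = \tilde{q}^\pi$. Using the fixed-point identity $\cT^\pi \tilde{q}^\pi = \tilde{q}^\pi$, a single application of the $\gamma$-contraction bound yields $\Vert \cT^\pi q - \tilde{q}^\pi \Vert_\infty \leq \gamma \Vert q - \tilde{q}^\pi \Vert_\infty$. I would then finish by induction on $k$: assuming the bound for the $(k-1)$-th iterate, I apply the contraction once more to $(\cT^\pi)^{k-1} q$ against the fixed point to pick up one additional factor of $\gamma$, giving the claimed $\gamma^k$ rate.

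There is no genuine obstacle here; the statement is a textbook consequence of the Banach-style argument once the contraction constant is known. The one point worth care is simply verifying the cancellation of the $-\tau \log \pi(a'|s')$ term, so that the modulus is exactly $\gamma$ rather than something temperature-dependent — this is precisely what makes the entropy-regularized backup contract at the same rate as the standard case.
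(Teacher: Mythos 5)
Your proposal is correct and follows essentially the same route as the paper: the paper's inductive step likewise rewrites $(\cT^\pi)^k q - \tilde{q}^\pi$ as $\cT^\pi(\cT^\pi)^{k-1}q - \cT^\pi\tilde{q}^\pi$ via the fixed-point identity, cancels the reward and entropy terms to leave $\gamma\,\EE_{s',a'\sim P^\pi}[\cdot]$, and bounds the expectation by the sup norm. The only cosmetic difference is that you first isolate the $\gamma$-contraction inequality as a standalone fact and then iterate it, whereas the paper performs the same cancellation inline within the induction.
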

\begin{proof}
We prove the result by induction. 
For the base case, $k=0$, the result trivially follows. 
Now suppose that the result holds for $k-1$. 
Then 
\begin{align}
\Vert (\cT^\pi)^k q - \tilde{q}^\pi \Vert_\infty 
& =
\max_{s,a} \left\vert(\cT^\pi)^k q (s,a)- \tilde{q}^\pi (s,a) \right\vert 
\\
& = 
\max_{s,a} \left\vert  \cT^\pi   (\cT^\pi)^{k-1} q (s,a)-  \cT^\pi  \tilde{q}^\pi (s,a) \right\vert 
\\
& =
\gamma \max_{s,a} \left\vert  \EE_{s',a'\sim P^\pi(s,a)}  \left[ (\cT^\pi)^{k-1} q (s',a') - \tilde{q}^\pi (s',a') \right]  \right\vert
\\
&\leq
\gamma \max_{s,a}\left\vert (\cT^\pi)^{k-1} q (s,a) - \tilde{q}^\pi (s,a) \right\vert  
\\
&=
\gamma^k  \Vert q - \tilde{q}^\pi \Vert_\infty\, ,
\end{align}
where the second step uses \cref{lem:on-policy-contraction}, 
the third step uses the definition of $\cT^\pi$, 
the fourth step uses the Holder's inequality, 
the last step uses the induction hypothesis.  
This finishes the proof. 
\end{proof}

Finally, we also need the monotonicity property of the on-policy Bellman operation. 

\begin{lemma}
For any $\pi$, if $q_1 \geq q_2$, then $\cT^\pi q_1 \geq \cT^\pi q_2$. 
\label{lem:on-policy-monotonicity}
\end{lemma}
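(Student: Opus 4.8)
The plan is to establish the inequality pointwise: I fix an arbitrary state-action pair $(s,a)$ and show that $(\cT^\pi q_1)(s,a) \geq (\cT^\pi q_2)(s,a)$. The essential observation is that the input value function enters the operator $\cT^\pi$ defined in \plaineqref{eq:policy-evaluation} \emph{only} through the term $q(s',a')$ inside the expectation; the reward $r(s,a)$ and the entropy correction $-\tau \log \pi(a'|s')$ are independent of the value function being plugged in.

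First I would subtract the two applications of the operator. Writing out the definition for both $q_1$ and $q_2$, the reward term and the entropy term appear identically in each and cancel, leaving
\begin{align}
(\cT^\pi q_1)(s,a) - (\cT^\pi q_2)(s,a)
= \gamma\, \EE_{s',a'\sim P^\pi(\cdot|s,a)}\bigl[q_1(s',a') - q_2(s',a')\bigr]\, .
\end{align}
Next I would invoke the hypothesis $q_1 \geq q_2$, which gives $q_1(s',a') - q_2(s',a') \geq 0$ for every $(s',a')$. Since the expectation of a nonnegative function is nonnegative and $\gamma \geq 0$, the right-hand side is nonnegative, so $(\cT^\pi q_1)(s,a) \geq (\cT^\pi q_2)(s,a)$. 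As $(s,a)$ was arbitrary, this yields $\cT^\pi q_1 \geq \cT^\pi q_2$.

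There is no substantive obstacle: the statement is a routine consequence of the affinity of $\cT^\pi$ in its value-function argument together with the monotonicity of expectation. The only point requiring any care is to confirm that the reward and entropy-regularization terms are genuinely independent of the input value, so that they cancel exactly and do not interfere with the preserved inequality.
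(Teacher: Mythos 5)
Your argument is correct and is exactly the paper's proof: subtracting the two operator applications cancels the reward and entropy terms, leaving $\gamma\,\EE_{s',a'\sim P^\pi(\cdot|s,a)}[q_1(s',a') - q_2(s',a')] \geq 0$ by monotonicity of expectation. No differences to note.
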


\begin{proof}
Assume $q_1 \geq q_2$ and note that for any state-action $s,a$
\begin{align}
(\cT^\pi q_1)(s,a) - (\cT^\pi q_2)(s,a)
=
\gamma \EE_{s',a'\sim P^\pi(s,a)} \left[ q_1(s',a') - q_2(s',a') \right]
\geq 0\, .
\end{align}
\end{proof}

Policy improvement lemma. 

\begin{lemma}
Let $\pi$ be a policy such that $\pi\preceq \beta$. 
Define $\pi'$ 
\begin{align}
\pi'(\cdot | s) \propto {\beta(\cdot | s) \exp\left( \frac{\tilde{q}^\pi(s, :)}{\tau} - \log \beta(\cdot|s) \right)}\, .
\end{align}
Then
$\pi'\preceq \beta$ and 
$\tilde{q}^{\pi'} \geq \tilde{q}^\pi$. 
\label{lem:soft-policy-improvement}
\end{lemma}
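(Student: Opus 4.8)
The plan is to run the classical soft policy-improvement argument, but carried out entirely inside the support-constrained family $\{\pi : \pi \preceq \beta\}$ and assembled from the one-step and on-policy lemmas already in hand. The support claim $\pi' \preceq \beta$ is immediate: since $\pi'(a|s) \propto \beta(a|s)\exp(\cdots)$, any action with $\beta(a|s)=0$ also has $\pi'(a|s)=0$, so the support of $\pi'$ is contained in that of $\beta$.

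The heart of the proof is a one-step improvement inequality. I would first observe that, fixing $s$ and writing $\vq = \tilde{q}^\pi(s,\cdot)$, the new policy is exactly $\pi'(\cdot|s) = f_{\beta,\tau}(\vq)$, the in-sample softmax policy. By \cref{lem:softmax}, $f_{\beta,\tau}(\vq)$ is the maximizer of $\max_{p\preceq\beta} p\cdot\vq + \tau\sH(p)$. Because $\pi(\cdot|s)\preceq\beta$ is a feasible competitor, comparing objective values gives
\[
\sum_a \pi'(a|s)\,\tilde{q}^\pi(s,a) + \tau\sH(\pi'(\cdot|s)) \;\ge\; \sum_a \pi(a|s)\,\tilde{q}^\pi(s,a) + \tau\sH(\pi(\cdot|s)) \;=\; \tilde{v}^\pi(s),
\]
where the final equality is precisely the identity $\tilde{v}^\pi(s) = \EE_{a\sim\pi(s)}[\tilde{q}^\pi(s,a) - \tau\log\pi(a|s)]$ extracted in the proof of \cref{lem:on-policy-contraction} (matching the convention $\sH(p)=-\sum_a p(a)\log p(a)$).

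Next I would feed this bound into the Bellman recursion $\tilde{q}^\pi(s,a) = r(s,a) + \gamma\EE_{s'}[\tilde{v}^\pi(s')]$. Upper-bounding each $\tilde{v}^\pi(s')$ by the left-hand side above, and noting that this left-hand side equals $\EE_{a'\sim\pi'(s')}[\tilde{q}^\pi(s',a') - \tau\log\pi'(a'|s')]$, yields the pointwise operator inequality $\tilde{q}^\pi \le \cT^{\pi'}\tilde{q}^\pi$. Finally I would iterate: by the monotonicity of $\cT^{\pi'}$ (\cref{lem:on-policy-monotonicity}), the sequence $(\cT^{\pi'})^k\tilde{q}^\pi$ is nondecreasing in $k$ and bounded below by $\tilde{q}^\pi$, and by the contraction and convergence-rate results (\cref{lem:on-policy-contraction,lem:on-policy-convergence-rate}) it converges to the fixed point $\tilde{q}^{\pi'}$. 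Passing to the limit gives $\tilde{q}^{\pi'} \ge \tilde{q}^\pi$, completing the proof.

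The main obstacle is the first inequality: the argument hinges on recognizing $\pi'(\cdot|s)$ as the \emph{exact} in-sample softmax maximizer so that \cref{lem:softmax} delivers dominance of the max-entropy objective at $\pi'$ over its value at $\pi$, and on lining up the entropy conventions so that the objective evaluated at $\pi$ collapses cleanly to $\tilde{v}^\pi(s)$. Everything downstream—the $\cT^{\pi'}$ inequality and the monotone-limit chain—is routine given the on-policy lemmas.
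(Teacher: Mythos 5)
Your proposal is correct and follows essentially the same route as the paper: the support claim is immediate from the proportionality to $\beta$, the one-step inequality comes from recognizing $\pi'(\cdot|s)$ as the maximizer in \cref{lem:softmax} with $\pi(\cdot|s)$ as a feasible competitor, and the conclusion follows by unrolling the Bellman recursion under $\cT^{\pi'}$. Your write-up is in fact slightly more careful than the paper's, since you explicitly invoke \cref{lem:on-policy-monotonicity} and \cref{lem:on-policy-convergence-rate} to justify the monotone chain and the passage to the limit, where the paper simply writes the recursive expansion with ``$\leq\dots\leq$''.
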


\begin{proof}
The first part trivially holds by the definition of $\pi'$. 

For the second part, 
 note that by \cref{lem:softmax}, 
for any state $s\in\cS$,
\begin{align}
\pi'( \cdot | s) \cdot \left( \tilde{q}^\pi(s, :) - \tau \log \tilde{\pi}(\cdot | s)  \right)
\geq
\pi(\cdot | s) \cdot \left( \tilde{q}^\pi(s, :) - \tau \log \pi(\cdot | s) \right) \, .
\label{eq:soft-policy-improvement-eq1}
\end{align}
Then by \cref{lem:on-policy-contraction}, for any $s,a\in\cS\times\cA$, 
\begin{align}
\tilde{q}^{\pi}(s,a)
& =
r(s,a) + \gamma \EE_{s', a' \sim P^{\pi}(\cdot|s,a)}  [  \tilde{q}^{\pi}(s', a') - \tau \log \pi(a'|s') ]
\\
& \leq 
r(s,a) + \gamma \EE_{s', a' \sim P^{\pi'}(\cdot|s,a)}  [  \tilde{q}^{\pi}(s', a') - \tau \log \pi'(a'|s') ]
\\
& \leq
\dots
\\
&\leq
\tilde{q}^{\pi'}(s,a) \, ,
\end{align}
where we recursively apply \cref{lem:on-policy-contraction} to expand the definition of $\tilde{q}^\pi$ and apply \cref{eq:soft-policy-improvement-eq1}. 

\end{proof}

\subsection{Result for off-policy entropy-regularized backup}
\label{sec:off-policy-proof}

Given an arbitrary policy $\beta$, 
consider the following problem
\begin{align}
\max_{\pi\preceq \beta}
\tilde{v}^\pi (s)
\quad
\text{for all $s\in\cS$}
\end{align}

For $\tau>0$, define the \emph{In-sample softmax  Bellman operator}
\begin{align}
(\cT^*_\beta q) (s,a)
& =
r(s,a) + \gamma \EE_{s'\sim P(\cdot|s,a)}\left[  \tau \log \sum\nolimits_{a':\beta(a'|s')>0}  \exp\left( {{q}(s', a')}/{ \tau } \right)  \right]
\\
& = 
r(s,a) + \gamma \EE_{s'\sim P(\cdot|s,a)}\left[  F_{\beta,\tau} ( q(s', :) ) \right]
\end{align}

\begin{lemma}
For $\gamma < 1$, 
the fixed point of the in-sample softmax  Bellman operator, $q^* = \cT^*_\beta q^*$, exists and is unique. 
\label{lem:off-policy-fixed-point}
\end{lemma}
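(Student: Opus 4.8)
The plan is to show that the in-sample softmax Bellman operator $\cT^*_\beta$ is a $\gamma$-contraction in the supremum norm, and then invoke the Banach fixed-point theorem to conclude both existence and uniqueness of the fixed point. Since the space of bounded functions on the finite set $\cS\times\cA$ equipped with $\Vert\cdot\Vert_\infty$ is a complete metric space, a contraction mapping on it has a unique fixed point, which is exactly the statement we want.

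First I would take two arbitrary value functions $q_1, q_2$ and estimate $\Vert \cT^*_\beta q_1 - \cT^*_\beta q_2\Vert_\infty$. Writing out the operator in its second form, the reward terms cancel and we are left with
\begin{align}
\left\vert (\cT^*_\beta q_1)(s,a) - (\cT^*_\beta q_2)(s,a) \right\vert
=
\gamma \left\vert \EE_{s'\sim P(\cdot|s,a)}\left[ F_{\beta,\tau}(q_1(s',:)) - F_{\beta,\tau}(q_2(s',:)) \right] \right\vert\, .
\end{align}
Pulling the absolute value inside the expectation by the triangle inequality (Jensen's inequality for $|\cdot|$) bounds this by $\gamma \EE_{s'}[ | F_{\beta,\tau}(q_1(s',:)) - F_{\beta,\tau}(q_2(s',:)) | ]$. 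The key tool here is \cref{lem:F-contraction}, already established in the excerpt, which gives $| F_{\beta,\tau}(\vq_1) - F_{\beta,\tau}(\vq_2)| \le \Vert \vq_1 - \vq_2\Vert_\infty$ for any two vectors. Applying it with $\vq_i = q_i(s',:)$ yields the per-state bound $\Vert q_1(s',:) - q_2(s',:)\Vert_\infty \le \Vert q_1 - q_2\Vert_\infty$, and since this holds uniformly over $s'$ the expectation is also bounded by $\Vert q_1 - q_2\Vert_\infty$.

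Combining these steps gives $|(\cT^*_\beta q_1)(s,a) - (\cT^*_\beta q_2)(s,a)| \le \gamma \Vert q_1 - q_2\Vert_\infty$ for every $(s,a)$, and taking the supremum over $(s,a)$ on the left establishes $\Vert \cT^*_\beta q_1 - \cT^*_\beta q_2\Vert_\infty \le \gamma \Vert q_1 - q_2\Vert_\infty$. Because $\gamma < 1$, the operator is a contraction, so by the Banach fixed-point theorem the fixed point $q^*$ exists and is unique. I do not anticipate a serious obstacle here: the only nontrivial ingredient is the contraction property of $F_{\beta,\tau}$, but that is exactly what \cref{lem:F-contraction} provides, and it is worth noting that this lemma is the natural analogue of the standard log-sum-exp being $1$-Lipschitz, restricted to the support of $\beta$. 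The remaining work is the routine verification that the expectation and supremum manipulations preserve the bound, which is standard for Bellman-operator contraction arguments.
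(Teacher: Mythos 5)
Your proof is correct and follows essentially the same route as the paper's: both establish that $\cT^*_\beta$ is a $\gamma$-contraction in $\Vert\cdot\Vert_\infty$ by passing the absolute value through the expectation and invoking \cref{lem:F-contraction} on the per-state vectors, then conclude via the contraction-mapping principle. Your write-up is if anything slightly more explicit than the paper's, since you spell out the appeal to the Banach fixed-point theorem and the completeness of the space, which the paper leaves implicit.
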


\begin{proof}
We first show that $\cT^*_\beta$ is a contraction. 
Let $q_1$ and $q_2$ be two value functions. Then
\begin{align}
\left\Vert \cT_\beta^* q_1 -\cT_\beta^* q_2 \right\Vert_\infty 
& =
\gamma \max_{s,a}
\left\vert \cT_\beta^* q_1(s,a) -\cT_\beta^* q_2(s,a) \right\vert 
\\
& = 
\gamma \max_{s,a}
\left\vert \EE_{s'\sim P(\cdot|s,a)}\left[  F_{\beta,\tau} ( q_1(s', :) ) -  F_{\beta,\tau} ( q_2(s', :) ) \right] \right\vert 
\\
& \leq
\gamma \max_{s} 
\left\vert   F_{\beta,\tau} ( q_1(s, :) ) -  F_{\beta,\tau} ( q_2(s, :) )  \right\vert 
\\
&\leq
\gamma \max_{s,a} \left\vert q_1(s,a) - q_2(s,a) \right\vert 
\\
& = 
\gamma  \left\Vert q_1 - q_2 \right\Vert  \, ,
\end{align}
where the second step uses the definition of $\cT_\beta^*$, the third step uses Holder's inequality, the fourth step uses \cref{lem:F-contraction}.

\end{proof}

Note that by definition, $\tilde{q}^*_\beta$ is the fixed point of $\cT^*_\beta$. 

\begin{lemma}
If $q$ is bounded and $q\geq \cT^*_\beta q$, then for any $\pi$, $q\geq \tilde{q}^\pi$.
% then $q\geq (\cT^\pi)^k q$ for $k\geq 0$. 
\label{lem:off-policy-operator-dominate}
\end{lemma}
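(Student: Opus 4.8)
The plan is to sandwich the on-policy entropy-regularized operator $\cT^\pi$ below the in-sample softmax operator $\cT^*_\beta$, and then use monotonicity to propagate the hypothesis $q \geq \cT^*_\beta q$ down to a statement about the fixed point $\tilde{q}^\pi$. Throughout I read the conclusion as holding for any $\pi \preceq \beta$, since this is the only regime in which the two operators can be meaningfully compared: for a policy with support outside $\beta$, the value $\tilde{q}^\pi$ may exploit out-of-sample actions and need not be dominated by $q$.

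First I would establish the pointwise inequality $\cT^*_\beta q \geq \cT^\pi q$ for every $\pi \preceq \beta$. This is the crux, and it follows directly from the variational characterization in \cref{lem:softmax}: for each next state $s'$ we have $F_{\beta,\tau}(q(s',:)) = \max_{\pi'\preceq\beta}\{\pi'(\cdot|s')\cdot q(s',:) + \tau\sH(\pi'(\cdot|s'))\}$, so choosing the feasible competitor $\pi' = \pi$ gives $F_{\beta,\tau}(q(s',:)) \geq \EE_{a'\sim\pi(\cdot|s')}[q(s',a') - \tau\log\pi(a'|s')]$. Applying $r(s,a) + \gamma\,\EE_{s'\sim P(\cdot|s,a)}[\,\cdot\,]$ to both sides turns this into $(\cT^*_\beta q)(s,a) \geq (\cT^\pi q)(s,a)$ for all $(s,a)$.

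Combining this with the hypothesis $q \geq \cT^*_\beta q$ yields $q \geq \cT^\pi q$. Next I would iterate: by the monotonicity of $\cT^\pi$ (\cref{lem:on-policy-monotonicity}), applying $\cT^\pi$ preserves the inequality, so $\cT^\pi q \geq (\cT^\pi)^2 q$, and inductively $q \geq \cT^\pi q \geq (\cT^\pi)^2 q \geq \cdots \geq (\cT^\pi)^k q$ for every $k$. Finally, since $q$ is bounded, \cref{lem:on-policy-convergence-rate} gives $(\cT^\pi)^k q \to \tilde{q}^\pi$ in $\|\cdot\|_\infty$ as $k \to \infty$, and passing to the limit in $q \geq (\cT^\pi)^k q$ delivers $q \geq \tilde{q}^\pi$.

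The only delicate point I anticipate is the first step: one must check that the competitor $\pi$ used inside the maximum defining $F_{\beta,\tau}$ is actually feasible, which is exactly the assumption $\pi \preceq \beta$, and that the entropy bookkeeping $\EE_{a'\sim\pi(\cdot|s')}[-\tau\log\pi(a'|s')] = \tau\sH(\pi(\cdot|s'))$ lines the two operators up correctly. Everything after that is a routine monotone-iteration-plus-limit argument built on results already established.
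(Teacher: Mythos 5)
Your proposal is correct and follows essentially the same route as the paper's proof: both use the variational characterization of $F_{\beta,\tau}$ from \cref{lem:softmax} to get $\cT^*_\beta q \geq \cT^\pi q$, then iterate via \cref{lem:on-policy-monotonicity} and pass to the limit with \cref{lem:on-policy-convergence-rate}. Your explicit restriction to $\pi \preceq \beta$ is a sound (and arguably necessary) reading of the statement, since the comparison step via \cref{lem:softmax} only applies to feasible competitors, which is also implicitly how the lemma is invoked later in \cref{lem:off-policy-operator-optimality}.
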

\begin{proof}

We first prove that for any $\pi$, $q\geq\cT_\beta^* q$ implies that $q\geq (\cT^\pi)^k q$ for $k\geq 0$. 
Then the result follows by applying \cref{lem:on-policy-convergence-rate}. 
According to the assumption, 
\begin{align}
q\geq \cT^*_\beta q 
& = 
r(s,a) + \gamma \EE_{s'}[F_{\beta,\tau} (q(s', :) )]
\\
& \geq
r(s,a) + \gamma \EE_{s'} \left[  \sum_{a'} \pi(a'|s')   (q(s', a')  - \tau \log \pi(a'|s') ) \right]
\\
 & = 
 \cT^\pi q\, ,
\end{align}
where the second inequality follows by \cref{lem:softmax}. 
Then by \cref{lem:on-policy-monotonicity},
\begin{align}
q \geq  \cT^\pi q \geq  \cT^\pi \cT^*_\beta q \geq \cT^\pi \cT^\pi q \geq \cdots \geq (\cT^\pi)^k q\, .
\end{align}
This finishes the proof. 

\end{proof}

We have the following key result. 

\begin{lemma}
For any $s\in\cS$, $\tilde{v}^*_\beta(s) = \max_{\pi\preceq \beta} \tilde{v}^*(s)$. 
\label{lem:off-policy-operator-optimality}
\end{lemma}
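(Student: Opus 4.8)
The plan is to prove the two inequalities $\tilde v^*_\beta(s)\ge\max_{\pi\preceq\beta}\tilde v^\pi(s)$ and $\tilde v^*_\beta(s)\le\max_{\pi\preceq\beta}\tilde v^\pi(s)$ separately, reading $\tilde v^*_\beta(s)$ as the soft value $F_{\beta,\tau}(\tilde q^*_\beta(s,:))$ attached to the unique fixed point $\tilde q^*_\beta=\cT^*_\beta\tilde q^*_\beta$ whose existence and uniqueness are guaranteed by \cref{lem:off-policy-fixed-point}.

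For the upper-bound direction, I would start from the fact that, being a fixed point, $\tilde q^*_\beta$ trivially satisfies $\tilde q^*_\beta\ge\cT^*_\beta\tilde q^*_\beta$, so \cref{lem:off-policy-operator-dominate} immediately gives the pointwise domination $\tilde q^*_\beta\ge\tilde q^\pi$ for every policy $\pi$. Using the identity $\tilde v^\pi(s)=\EE_{a\sim\pi(\cdot|s)}[\tilde q^\pi(s,a)-\tau\log\pi(a|s)]$ derived inside the proof of \cref{lem:on-policy-contraction}, I then substitute this domination and invoke the variational form of $F_{\beta,\tau}$ from \cref{lem:softmax}:
\begin{align*}
\tilde v^\pi(s)
= \sum_a \pi(a|s)\big(\tilde q^\pi(s,a)-\tau\log\pi(a|s)\big)
\le \sum_a \pi(a|s)\big(\tilde q^*_\beta(s,a)-\tau\log\pi(a|s)\big)
\le F_{\beta,\tau}(\tilde q^*_\beta(s,:))=\tilde v^*_\beta(s),
\end{align*}
which is valid for every $\pi\preceq\beta$ and hence yields the upper bound.

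For the achievability direction the key step is to exhibit a policy $\pi^\star\preceq\beta$ whose soft value equals $\tilde v^*_\beta$. I take the in-sample greedy policy $\pi^\star(\cdot|s)=f_{\beta,\tau}(\tilde q^*_\beta(s,:))$, which is supported on $\beta$ by construction. The crucial identity is the second equality in \cref{lem:softmax}, namely $F_{\beta,\tau}(\vq)=f_{\beta,\tau}(\vq)\cdot\vq+\tau\sH(f_{\beta,\tau}(\vq))$, which lets me rewrite the in-sample softmax backup as an on-policy backup under $\pi^\star$; applying it inside the definition of $\cT^*_\beta$ gives $\cT^*_\beta\tilde q^*_\beta=\cT^{\pi^\star}\tilde q^*_\beta$. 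Since $\tilde q^*_\beta$ is a fixed point of $\cT^*_\beta$, it is therefore also a fixed point of $\cT^{\pi^\star}$; and because $\cT^{\pi^\star}$ is a contraction with unique fixed point $\tilde q^{\pi^\star}$ by \cref{lem:on-policy-contraction}, I conclude $\tilde q^{\pi^\star}=\tilde q^*_\beta$, hence $\tilde v^{\pi^\star}=\tilde v^*_\beta$. Combining the two directions closes the argument.

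I expect the main obstacle to be the bookkeeping in the achievability step: one must check that $f_{\beta,\tau}$ is applied statewise to the vector $\tilde q^*_\beta(s',:)$, that its support is contained in that of $\beta$, and that the rewrite $F_{\beta,\tau}(\tilde q^*_\beta(s',:))=\EE_{a'\sim\pi^\star}[\tilde q^*_\beta(s',a')-\tau\log\pi^\star(a'|s')]$ holds at every bootstrap state $s'$ simultaneously, so that $\cT^*_\beta\tilde q^*_\beta=\cT^{\pi^\star}\tilde q^*_\beta$ holds as an identity of functions rather than merely pointwise at a single state. The upper-bound direction is comparatively routine once \cref{lem:off-policy-operator-dominate} is in hand.
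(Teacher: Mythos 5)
Your proposal is correct and follows essentially the same route as the paper's proof: both directions are obtained from the variational characterization of $F_{\beta,\tau}$ in \cref{lem:softmax} together with the domination result \cref{lem:off-policy-operator-dominate} for the upper bound, and a fixed-point identification of the in-sample softmax greedy policy for achievability. The one small difference is in the achievability step, where you show $\cT^*_\beta\tilde q^*_\beta=\cT^{\pi^\star}\tilde q^*_\beta$ and invoke uniqueness of the fixed point of $\cT^{\pi^\star}$, while the paper instead verifies $\cT^*_\beta\tilde q^{\tilde\pi^*_\beta}=\tilde q^{\tilde\pi^*_\beta}$ and invokes uniqueness of the fixed point of $\cT^*_\beta$; your ordering is arguably cleaner since the softmax identity is applied to the value function whose greedy policy $\pi^\star$ actually is, avoiding any appearance of circularity.
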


\begin{proof}
We first show $\tilde{v}_\beta^* \geq \max_{\pi\preceq \beta} \tilde{v}^\pi$. 
Using the definitions,
\begin{align}
\tilde{v}_\beta^* 
& =
F_{\beta, \tau} (\tilde{q}_{\beta}^*)
\\
& = 
\tilde{\pi}_\beta^* \cdot \left( \tilde{q}_\beta^* - \tau \log \tilde{\pi}_\beta^* \right)
\\
&\geq
\pi\cdot \left( \tilde{q}_\beta^* - \tau \log \pi \right) \quad (\pi\preceq \beta)
\\
&\geq
\pi\cdot \left( \tilde{q}^\pi - \tau \log \pi \right)  \quad (\pi\preceq \beta)
\\
& = 
\tilde{v}^\pi\, ,
\end{align}
where the second and third steps follow by \cref{lem:softmax}, the fourth step follows by \cref{lem:off-policy-operator-dominate}, 
the last step follows by the definition. 

We then prove $\max_{\pi\preceq \beta} \tilde{v}^\pi\geq \tilde{v}_\beta^*$ by first showing that $\tilde{q}_\beta^* = \tilde{q}^{\tilde{\pi}_\beta^*}$. 
Since $ \tilde{q}_\beta^*$ is the fixed point $\cT^*_\beta$, 
by the uniqueness of the fixed point (\cref{lem:off-policy-fixed-point}), we only need to show that $\cT^*_\beta \tilde{q}^{\tilde{\pi}_\beta^*} = \tilde{q}^{\tilde{\pi}_\beta^*}$. 
This holds because for any $(s,a)$, 
\begin{align}
\cT^*_\beta \tilde{q}^{\tilde{\pi}_\beta^*}(s,a) 
& =
r(s,a) + \gamma \EE_{s'} \left[ F_{\beta,\tau} (  \tilde{q}^{\tilde{\pi}_\beta^*}(s', :) )  \right]
\\
& = 
r(s,a) + \gamma \EE_{s'} \left[ \sum_{a'} \tilde{\pi}_\beta^*(a'|s') \left( \tilde{q}^{\tilde{\pi}_\beta^*}(s', a') -\tau \log \tilde{\pi}_\beta^*(a'|s') \right)\right]
\\
& = 
\cT^{\tilde{\pi}_\beta^*} \tilde{q}^{\tilde{\pi}_\beta^*}(s,a)
\\
& = 
\tilde{q}^{\tilde{\pi}_\beta^*}(s,a)\, ,
\end{align}
where the second step uses \cref{lem:softmax} and the last step uses \cref{lem:on-policy-contraction}. 
Then,
\begin{align}
\max_{\pi\preceq \beta} \tilde{v}^\pi\geq \tilde{v}^{\tilde{\pi}_\beta^*}
=
\tilde{\pi}_\beta^* \cdot \left(  \tilde{q}^{\tilde{\pi}_\beta^*} - \tau \log \tilde{\pi}_\beta^*  \right)
=
\tilde{\pi}_\beta^* \cdot \left( \tilde{q}_\beta^* - \tau \log \tilde{\pi}_\beta^*  \right)
=
\tilde{v}^*_\beta\, ,
\end{align}
where the second equality uses that $\tilde{q}_\beta^* = \tilde{q}^{\tilde{\pi}_\beta^*}$, the last step uses \cref{lem:softmax}. 
This finishes the proof. 

\end{proof}

\subsection{Proof of \cref{thm:kwik-optimal-approximation}}

\begin{theorem}[Restatement of \cref{thm:kwik-optimal-approximation}]
Let $\tilde{q}_\beta^*$ be a value function recursively defined as
\begin{align}
\tilde{q}^*_\beta(s,a) 
=
r(s,a) + \gamma \EE_{s'\sim P(\cdot|s,a)}\left[  \tau \log \sum_{a':\beta(a'|s')>0}  \exp\left( {\tilde{q}_\beta^*(s', a')}/{ \tau } \right)  \right]\, ,
\end{align}
and $\tilde{\pi}^*_\beta$ be a policy defined as
\begin{align}
\tilde{\pi}_\beta^*(a|s) 
\propto
\beta(a|s) \exp\left( {\tilde{q}^*_\beta(s,a)} / {\tau} - \log   \beta(a|s) \right) \, . 
\end{align}
Then we have $\tilde{q}^*_\beta\rightarrow q^*_\beta$  and $\tilde{\pi}_\beta^*\rightarrow \pi_\beta^*$ as $\tau\rightarrow 0$. 
\end{theorem}

\begin{proof}

By \cref{lem:off-policy-operator-optimality}, we have $\tilde{q}^*_\beta(s,a) = \max_{\pi\preceq \beta} \tilde{q}^*(s,a)$ for any $s,a$. 
The result directly follows by definition of $\tilde{q}^*$. 
The result for policy can be proved similarly. 

\end{proof}

\section{Appendix for experiments}  \label{appendix:exp}

\subsection{Additional Experiments} \label{appendix:additional_exp}

This section includes additional experiments to investigate the following questions. 

\begin{enumerate}
%	\item Our in-sample bootstrapping method involves a temperature parameter $\tau$, how sensitive is this parameter? We show this in Figure~\ref{fg:tau_sensit}. 
	\item We used optimistic initialization for all algorithms in the tabular domain. How do the algorithms perform when using a zero/pessimistic initialization? We show this in Figure~\ref{fg:apdx_tabular_curve}. In the meanwhile, we added Mixed dataset, which has $1\%$ optimal trajectories and $99\%$ random trajectories. 
	\item How do our algorithms work on the discrete action domains in the deep learning setting? We show the learning curves on Mountain Car, Lunar Lander, Acrobot in Figure~\ref{fg:apdx_discrete_curve}. The final performance is listed in Figure~\ref{fg:apdx_discrete_table} with a normalized score, while the absolute score can be found in Figure~\ref{fg:apdx_discrete_table_abs}.
	\item How do our algorithms work on more datasets in the continuous action domains? We put learning curves for the expert and medium dataset in Figure \ref{fg:apdx_continuous_curve}, then list the performance of policies learned with all baselines and all datasets in Figure \ref{fg:apdx_continuous_table} with a normalized score, while the absolute score can be found in Figure~\ref{fg:apdx_continuous_table_abs}. For more Fine-tuning results, we put them in Figure \ref{fg:apdx_finetune_table} with a normalized score, while the absolute score can be found in Figure~\ref{fg:apdx_finetune_table_abs}.
	% \item How do our algorithms work in terms of computation cost v.s. performance. We show this result in Figure~\ref{}. 
	\item Will longer run reduce the gap between InAC and baselines? Can InAC still learn better or similar policy compared to baselines, if we use another common batch size setting in Mujoco tasks (256)? We changed the batch size to 256 and increased the number of iteration to 1.2 million, then show the performance in Figure \ref{fg:apdx_longer_offline_normalized}.
	\item How does InAC perform in AntMaze? We used antmaze-umaze-v0 and antmaze-umaze-diverse-v0 to test InAC, then added the learning curve comparing InAC to IQL in Figure \ref{fg:apdx_antmaze}. We followed the set up in previous work~\citep{kostrikov2022offline}.
\end{enumerate}

%\begin{figure}[t]
%	\centering
%	\includegraphics[width=0.7\linewidth]{}
%	\caption{Parameter $\tau$ values v.s. normalized score achieved by InAC (LHS) and InQI (RHS) on Mujoco domains. Our methods prefers small $\tau$ in general. When the behavior policy is closer to the optimal policy, the method turns out to have a stronger preference to a relatively small $\tau$. In each subplot, the x-axis shows the $\tau$ value we tested, and the y-axis is the normalized score over the $4$ mujoco environments and $5$ random seeds.}
%	\label{fg:tau_sensit}
%\end{figure}

% \subsection{Tabular} \label{apdx:add_tabular}

\begin{figure}[t]
	\centering
	
       \includegraphics[width=0.6\linewidth]{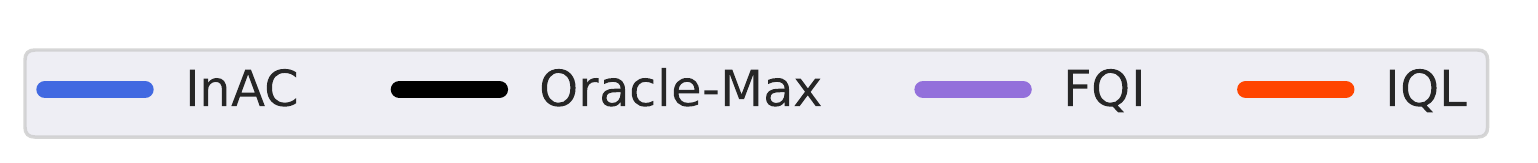}
       \subfigure[Initialize weights to $10$]{\includegraphics[width=\linewidth]{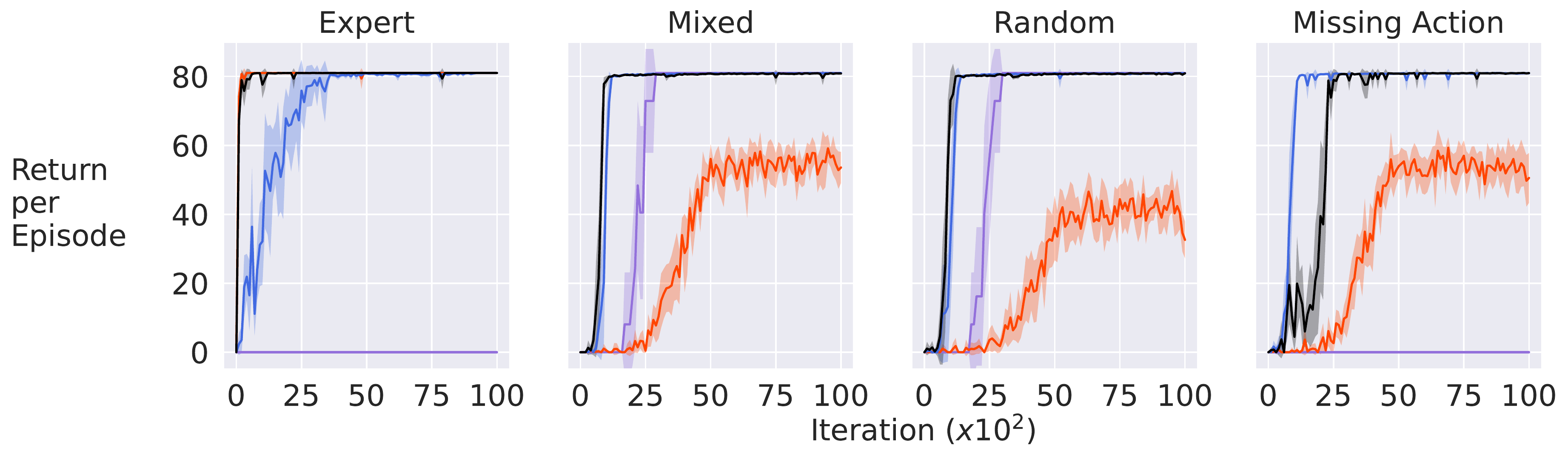}}
       \subfigure[Initialize weights to zero]{\includegraphics[width=\linewidth]{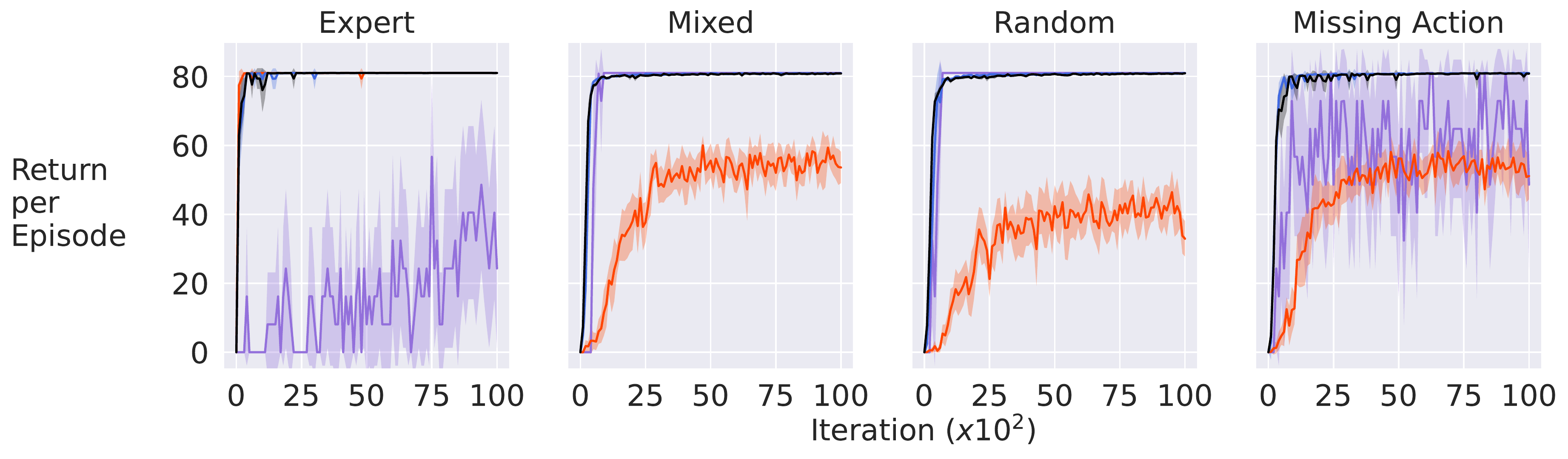}}
       \subfigure[Initialize weights to $-20$]{\includegraphics[width=\linewidth]{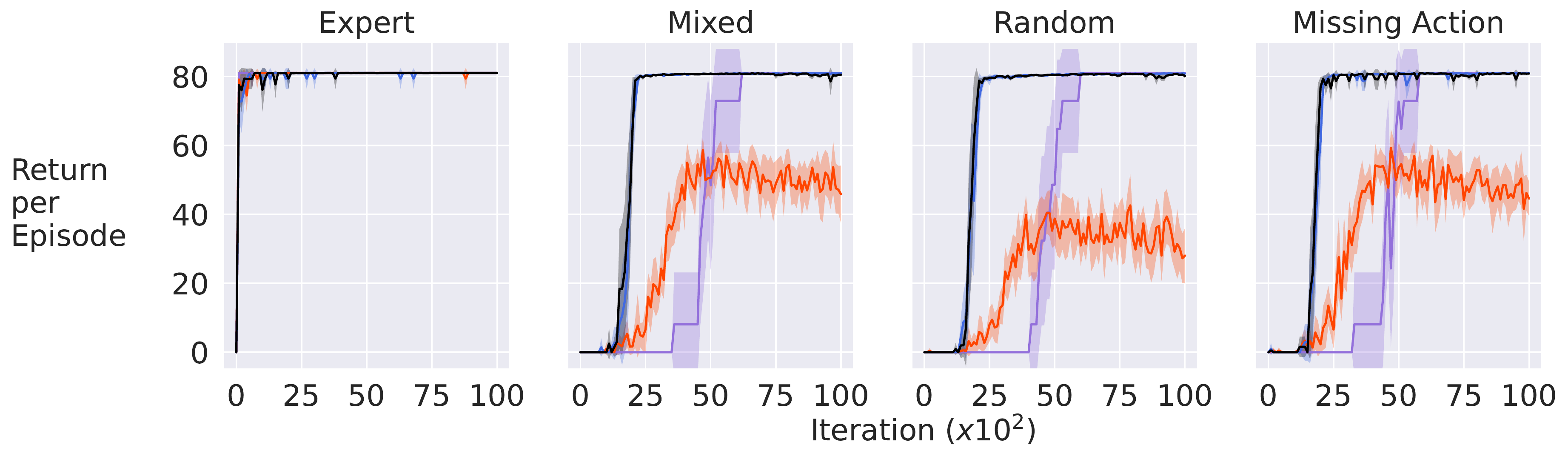}}
	\caption{Learning curves under different initialization on our four room gridworld tabular domain. (a) used $10$ to initialize weights, (b) used $0$, and (c) used $-20$. InAC learns reasonable policy with 1) expert trajectories, 2) missing action trajectories, 3) mixed trajectories, and 4) random trajectories, where 3) and 4) have full state-action coverage. The results were averaged over $10$ random seeds, except that CQL had $5$ seeds. The shaded area indicates $95\%$ confidence interval.}
	\label{fg:apdx_tabular_curve}
\end{figure}

\begin{figure}[t]
	\centering
	\vspace{-0.8cm}
	\subfigure[Expert]{\includegraphics[width=0.6\linewidth]{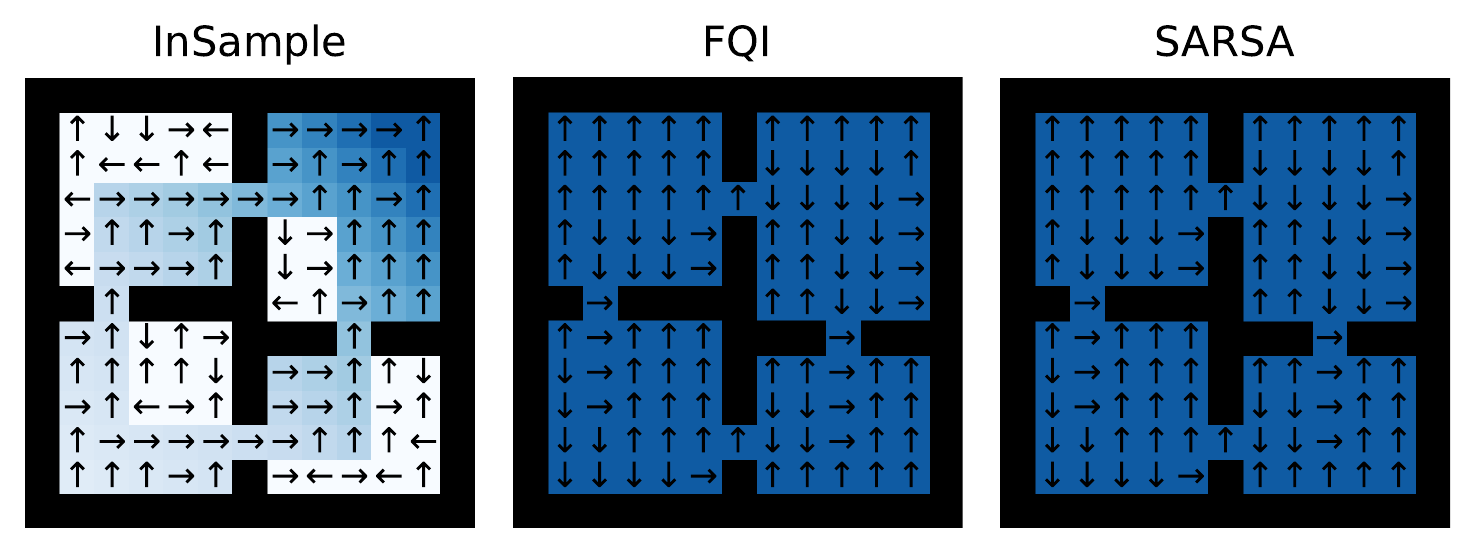}}
	\subfigure[Missing Action]{\includegraphics[width=0.6\linewidth]{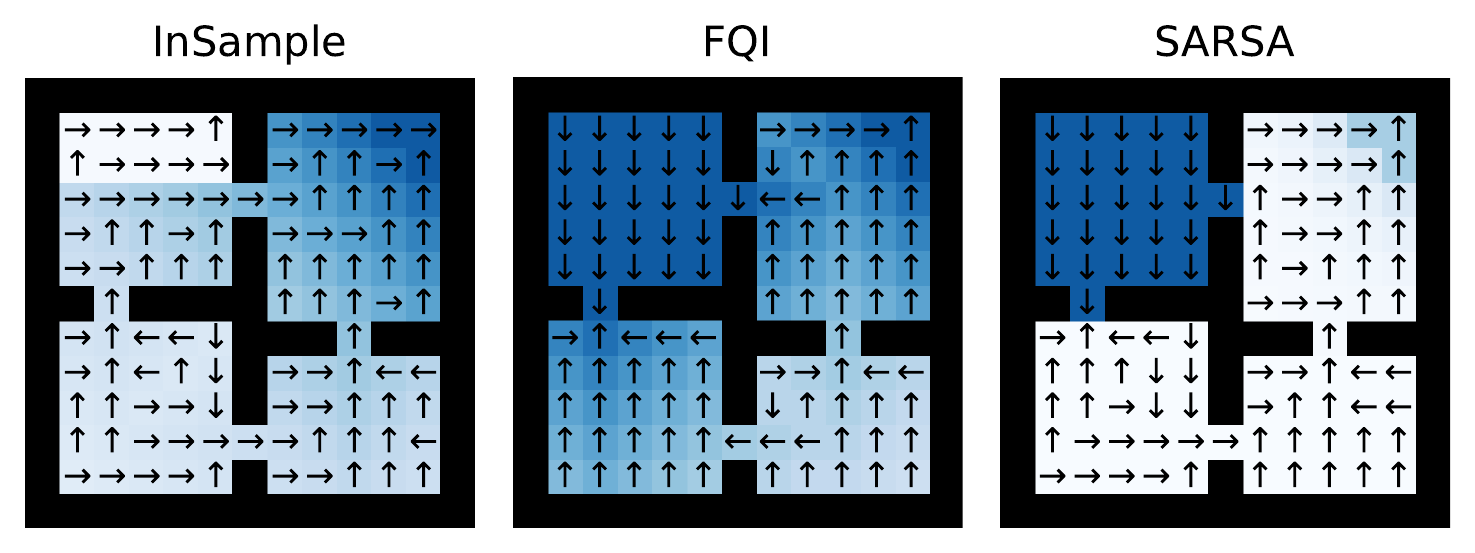}}
	%\subfigure[Mixed]{\includegraphics[width=0.6\linewidth]{}}
	%\subfigure[Random]{\includegraphics[width=0.6\linewidth]{}}
	\caption{Visualization of the learned policy of each algorithms and the estimated values at each state. The blue colors indicate the action value of the corresponding policy and the arrow indicates the action taken by the learned policy. A deeper color refers to a higher action value. It is clear that both FQI and SARSA have serious overestimation and found an incorrect policy when the offline data lacks action coverage (i.e., on the Expert and Missing-action offline data). %In contrast, SARSA is mostly doing policy evaluation as it has only a single policy improvement step; hence it performs bad despite 
		% While our InSample method finds the optimal policy. 
		%Our InSample method (\textbf{is it InAC???}) learns reasonable policy with 1) expert trajectories, 2) missing action trajectories, 3) mixed trajectories, and 4) random trajectories, where 3) and 4) have full state-action coverage. From left to right in each row, each subplot indicates the policy that \newMethod, FQI, and Sarsa converged after $10000$ iterations (\textbf{how can we know if an algorithm converges or not from this figure?}). 
	}
	\label{fg:apdx_tabular_policy}
\end{figure}

\begin{figure}[t]
	\centering
	\includegraphics[width=0.7\linewidth]{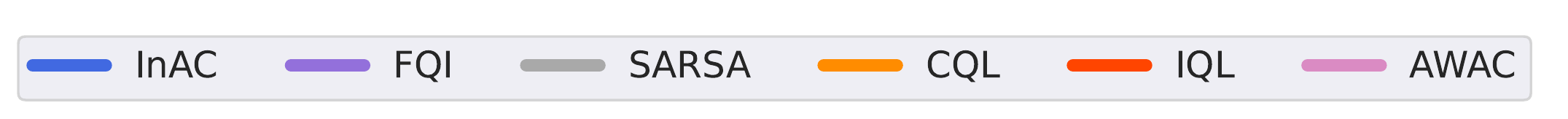}
	
	\subfigure[Acrobot]{\includegraphics[width=0.335\linewidth]{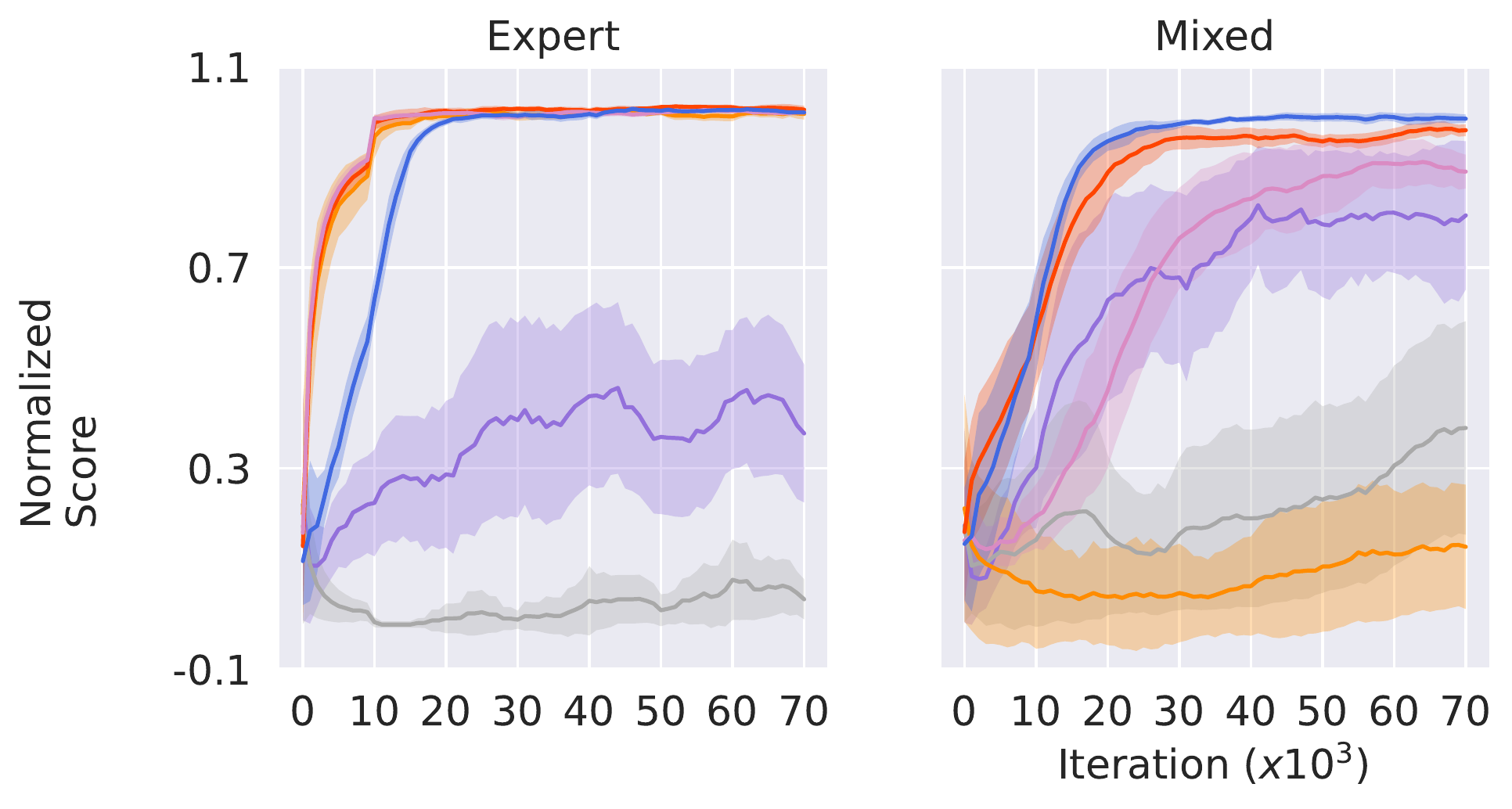}}
	\subfigure[Lunar Lander]{\includegraphics[width=0.3\linewidth]{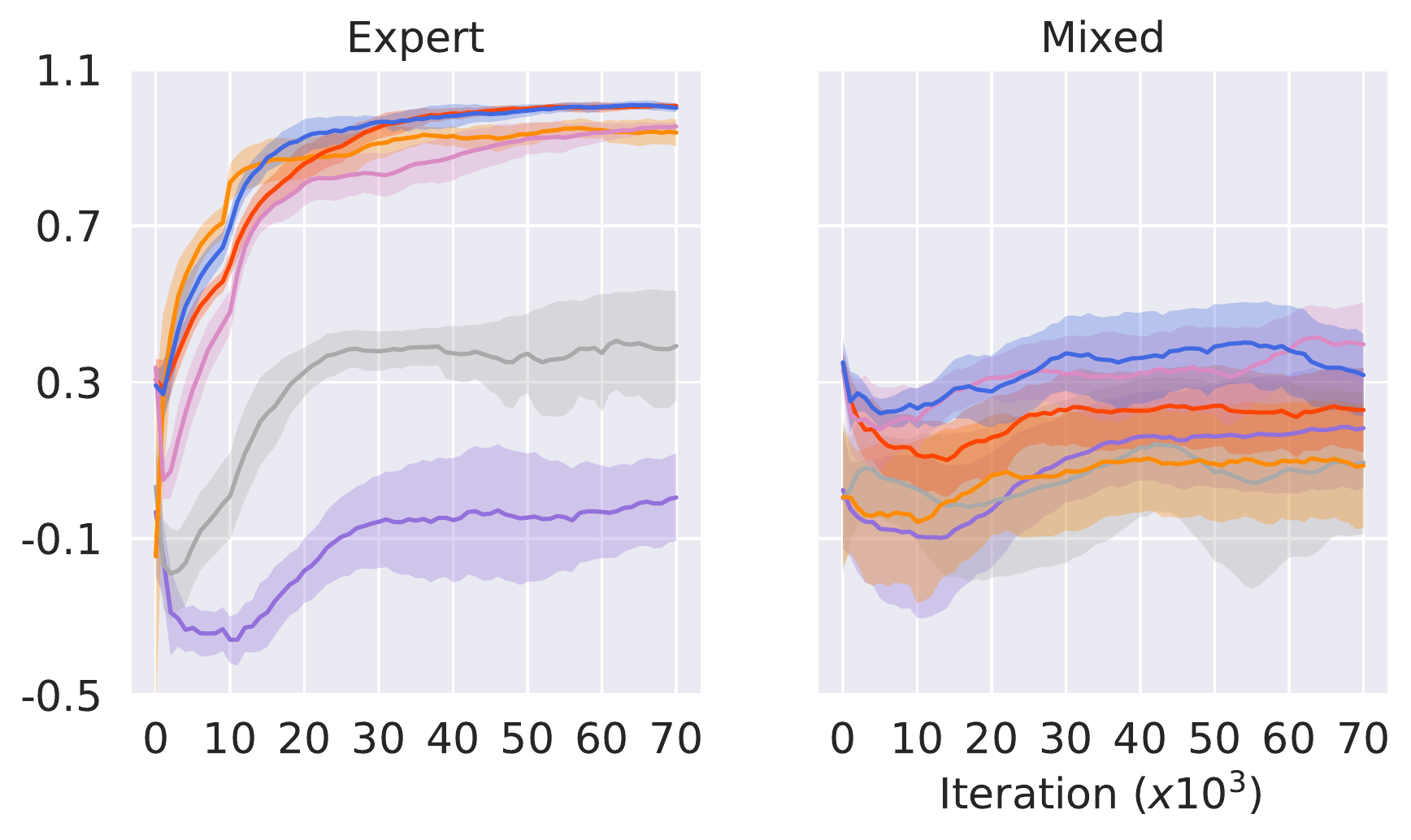}}
	\subfigure[Mountain Car]{\includegraphics[width=0.3\linewidth]{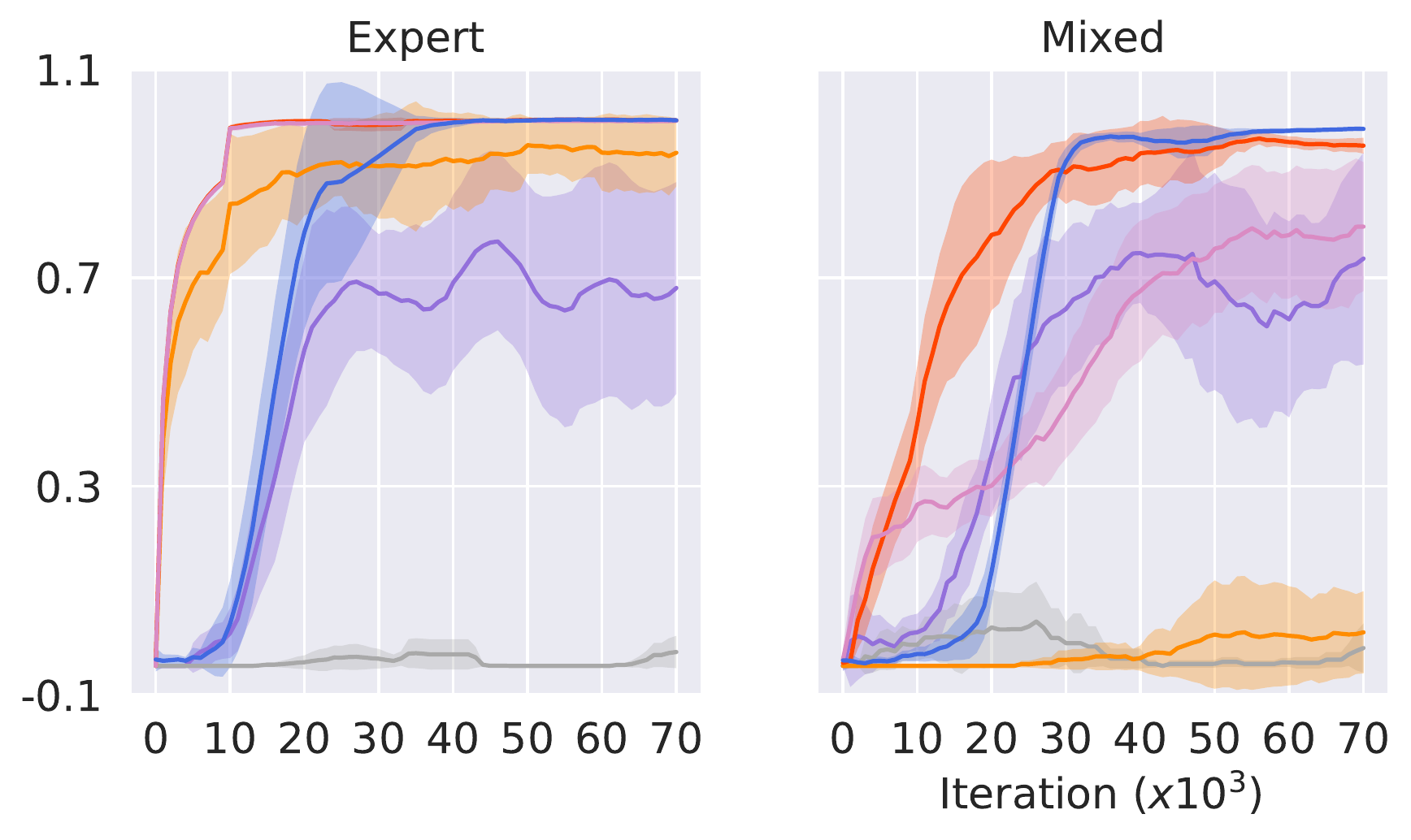}}
	\caption{Learning curves on the discrete action domains. InAC learns the best policy or a comparable policy to the strongest baseline. From left to right, we show the result in Acrobot, Lunar Lander, and Mountain Car. In each environment, we tested $2$ datasets: Expert and Mixed. In expert dataset, all trajectories were collected with the near-optimal policy, while in mixed dataset, $4\%$ trajectories were optimal and $96\%$ were collected with a randomly initialized neural network policy. The y-axis is a normalized return reflecting the performance. The return was normalized according to returns obtained by a well trained DQN agent (upper bound) and a randomly initialized network (lower bound). The higher the normalized value, the better the performance. The curves were smoothed with window length$10$. The results were averaged over $10$ random seeds. The shaded area indicates $95\%$ confidence interval.}
	\label{fg:apdx_discrete_curve}
\end{figure}

\begin{figure}[t]
	\centering
	\includegraphics[width=\linewidth]{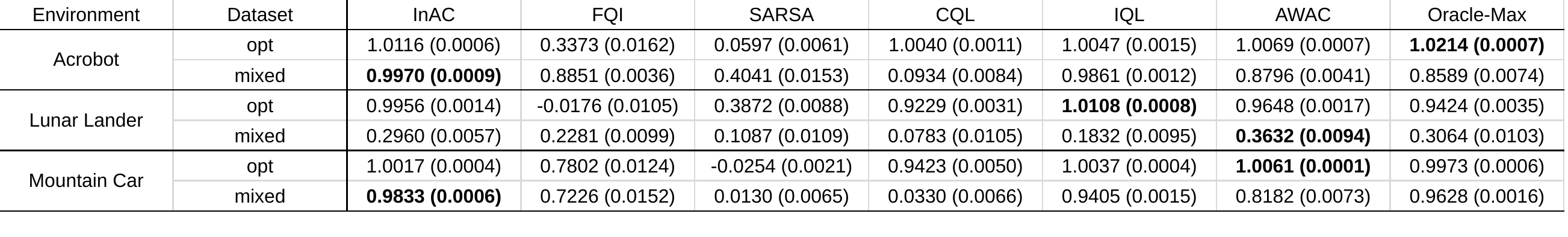}
	\caption{The offline-trained final performance of each algorithm in discrete action space environments. The number in bracket is the standard error. Scores are normalized. The bold numbers are the best performance in the same setting. Performance was averaged over $10$ random seeds.}
	\label{fg:apdx_discrete_table}
\end{figure}

\begin{figure}[t]
	\centering
	\includegraphics[width=\linewidth]{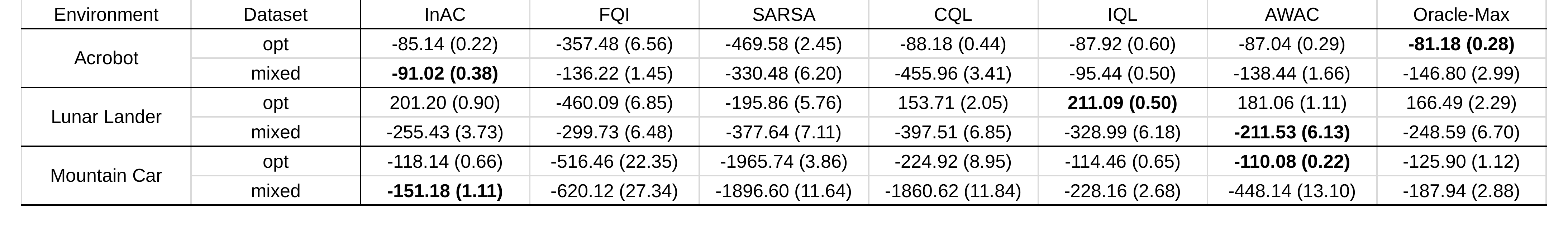}
	\caption{The offline-trained final performance of each algorithm in discrete action space environments. This table reports the return per episode before normalization. The number in bracket is the standard error. The bold numbers are the best performance in the same setting. Performance was averaged over $10$ random seeds.}
	\label{fg:apdx_discrete_table_abs}
\end{figure}

\begin{figure}[t]
	\centering
	\includegraphics[width=0.6\linewidth]{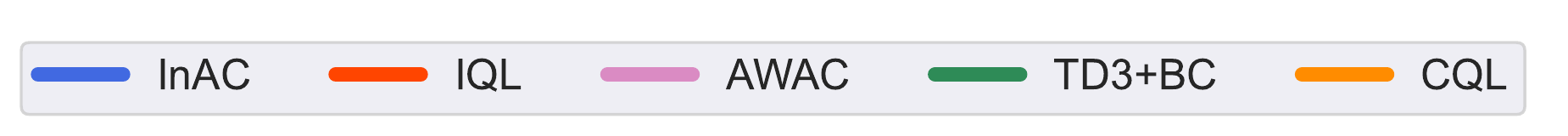}
	\includegraphics[width=\linewidth]{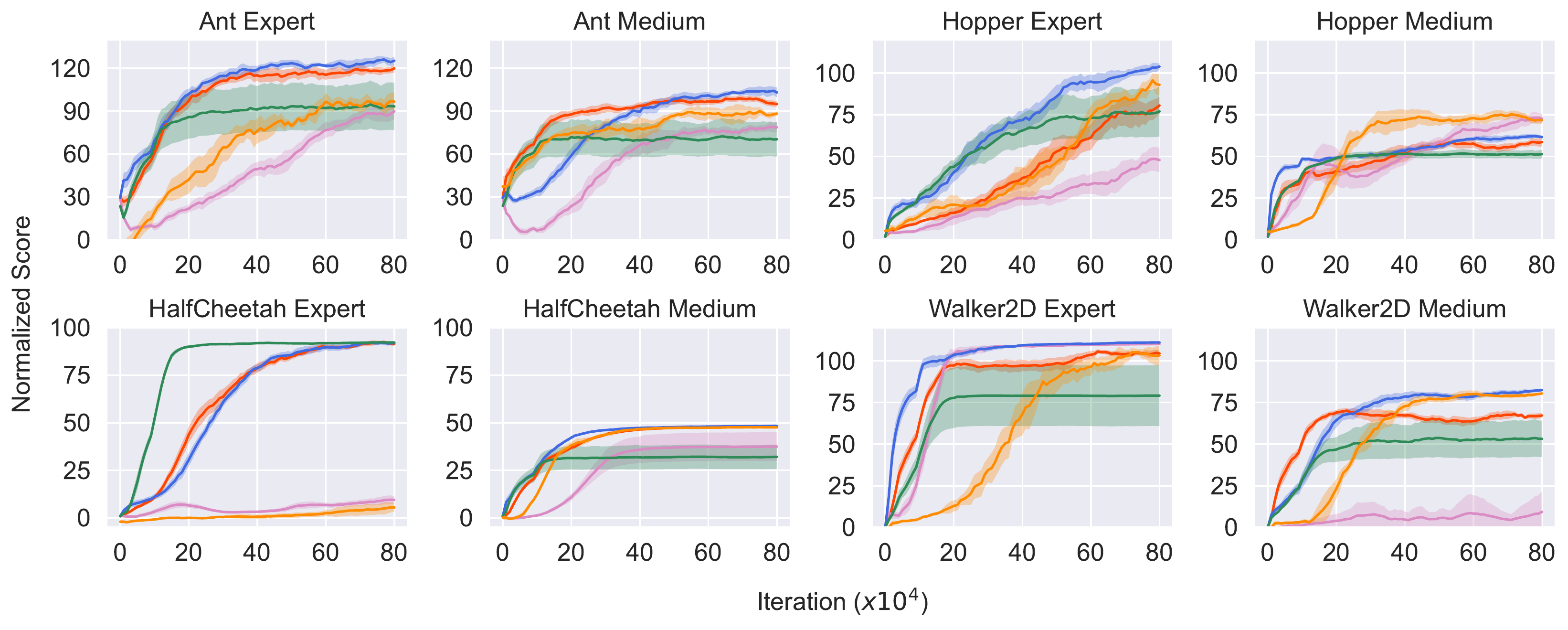}
	\caption{Learning curve on the mujoco tasks. InAC learns the best policy or a comparable policy to the strongest baseline. The results were averaged over $10$ random seeds, except that CQL had $5$ seeds. The shaded area indicates $95\%$ confidence interval.}
	\label{fg:apdx_continuous_curve}
\end{figure}

\begin{figure}[t]
	\centering
	\includegraphics[width=\linewidth]{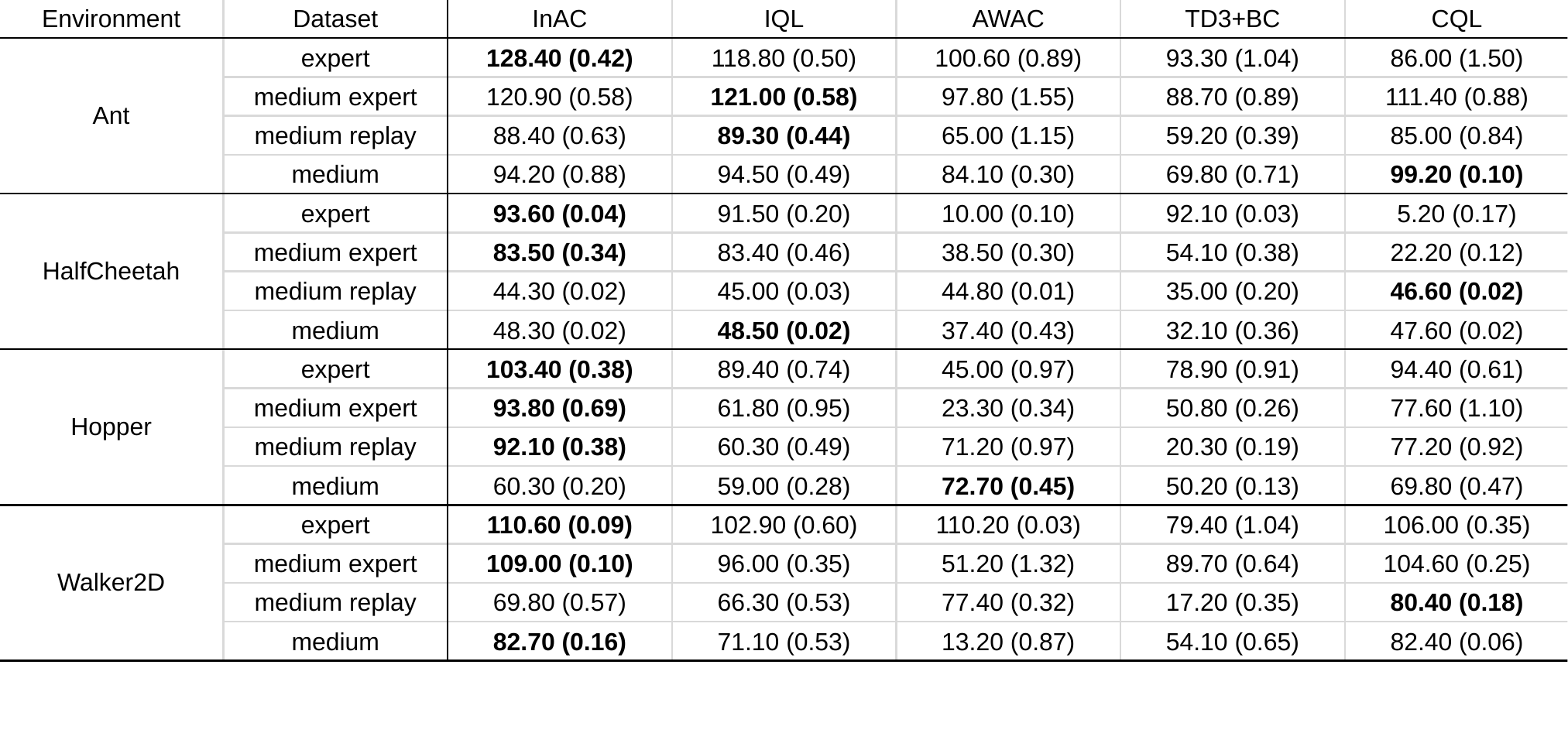}
	\caption{The final performance in continuous action space environments. The number in bracket is the standard error. Scores are normalized. The bold numbers are the best performance in the same setting. Performance was averaged over $10$ random seeds, except that CQL had $5$ seeds.}
	\label{fg:apdx_continuous_table}
\end{figure}

\begin{figure}[t]
	\centering
	\includegraphics[width=\linewidth]{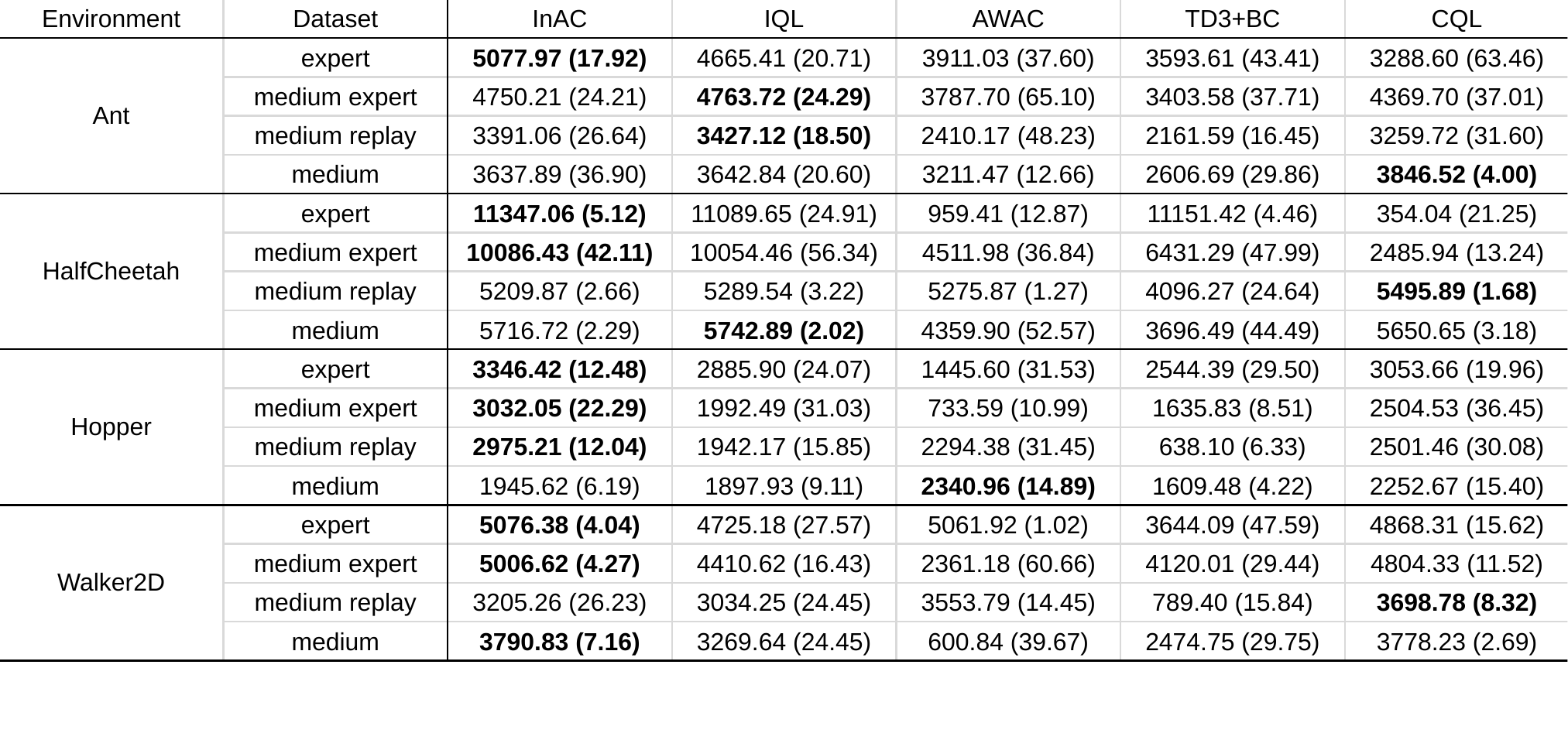}
	\caption{The absolute final performance in continuous action space environments. This table reports the score before normalization. The number in bracket is the standard error. The bold numbers are the best performance in the same setting. Performance was averaged over $10$ random seeds, except that CQL had $5$ seeds.}
	\label{fg:apdx_continuous_table_abs}
\end{figure}

\begin{figure}[t]
	\centering
	\includegraphics[width=\linewidth]{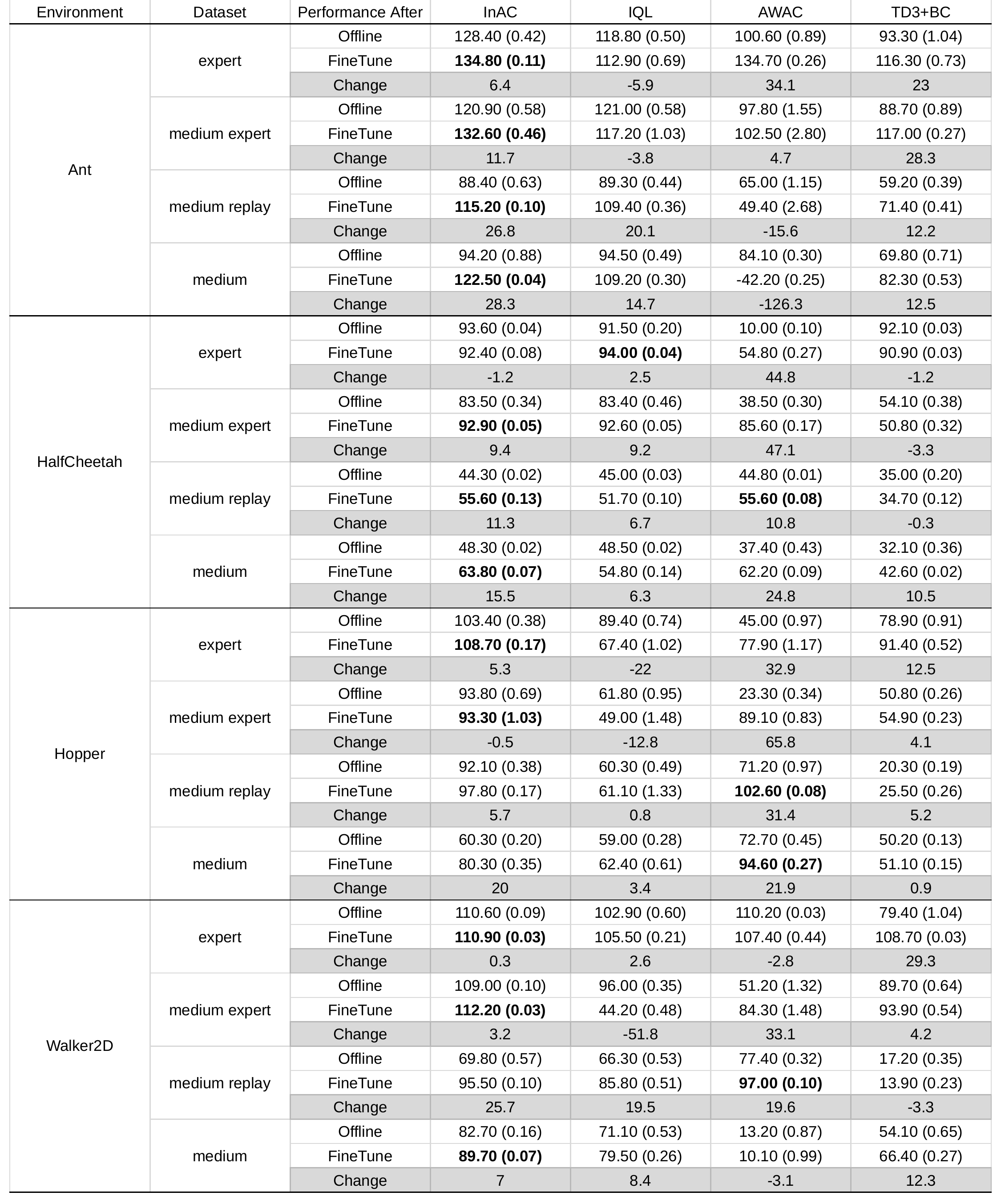}
	\caption{The performance changes during fine-tuning. The number in bracket is the standard error. Scores are normalized. Performance was averaged over $10$ random seeds.}
	\label{fg:apdx_finetune_table}
\end{figure}

\begin{figure}[t]
	\centering
	\includegraphics[width=\linewidth]{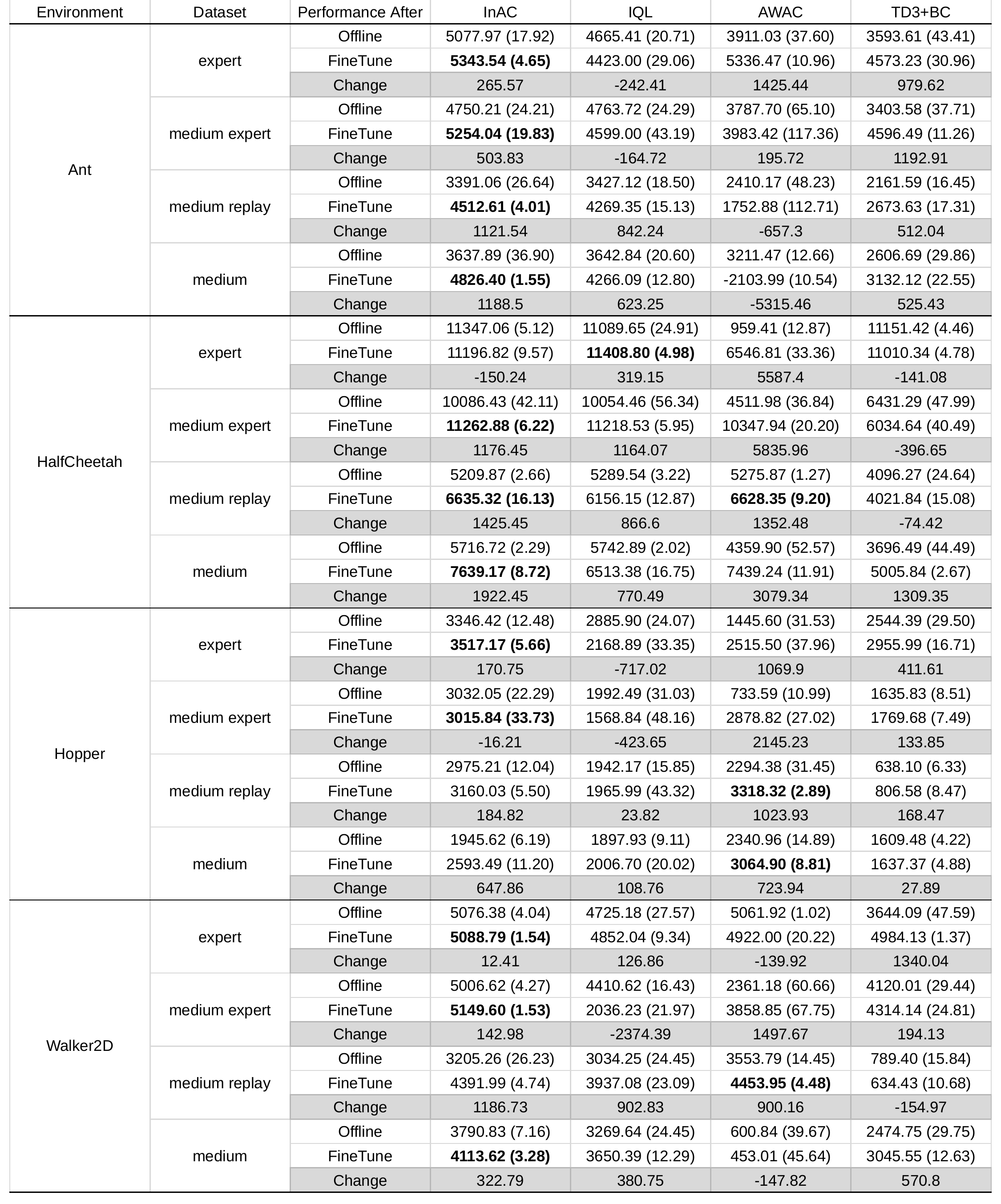}
	\caption{The performance changes during fine-tuning. This table reports the score before normalization. The number in bracket is the standard error. Performance was averaged over $10$ random seeds.}
	\label{fg:apdx_finetune_table_abs}
\end{figure}

\begin{figure}[t]
	\centering
	\includegraphics[width=0.5\linewidth]{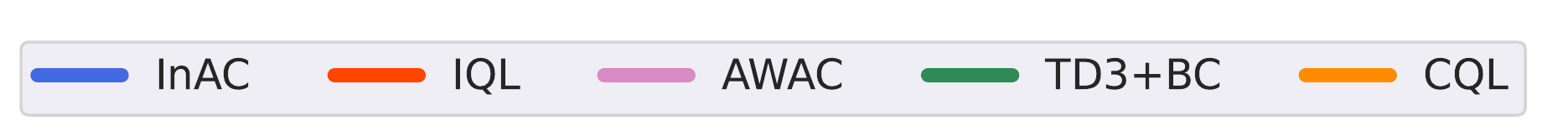}
	\includegraphics[width=\linewidth]{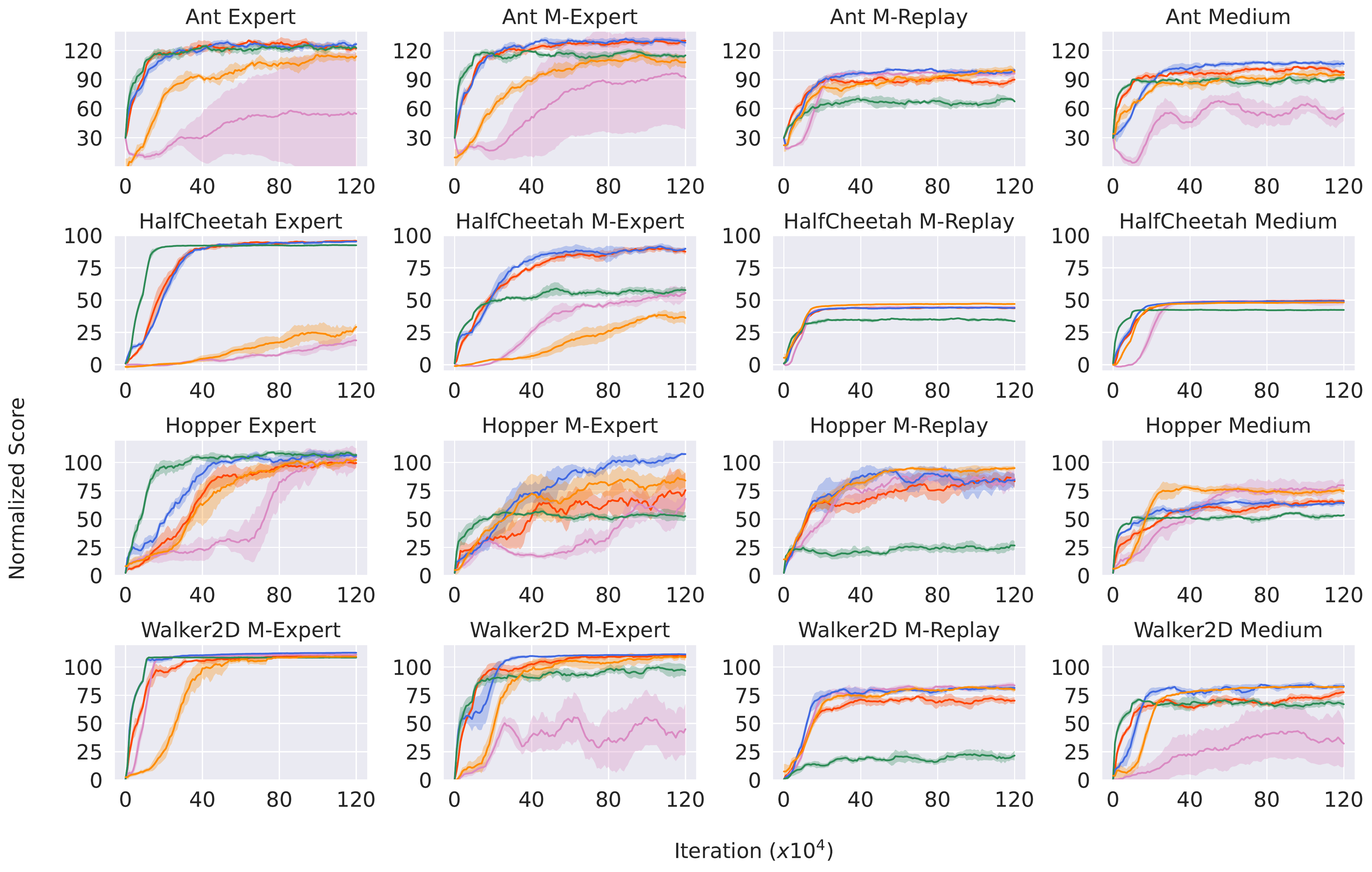}
	\caption{Offline learning curves with 1.2 million iterations. The x-axis is the number of iterations and the y-axis is the normalized score. Performance was averaged over $5$ random seeds, after using a smoothing window of size 10.}
	\label{fg:apdx_longer_offline_normalized}
\end{figure}

\begin{figure}[t]
	\centering
	\includegraphics[width=0.24\linewidth]{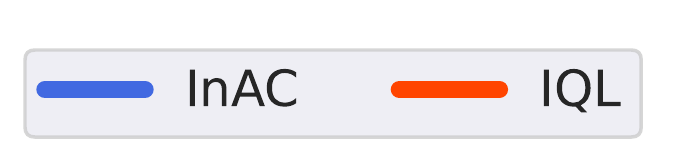}
	
	\includegraphics[width=0.6\linewidth]{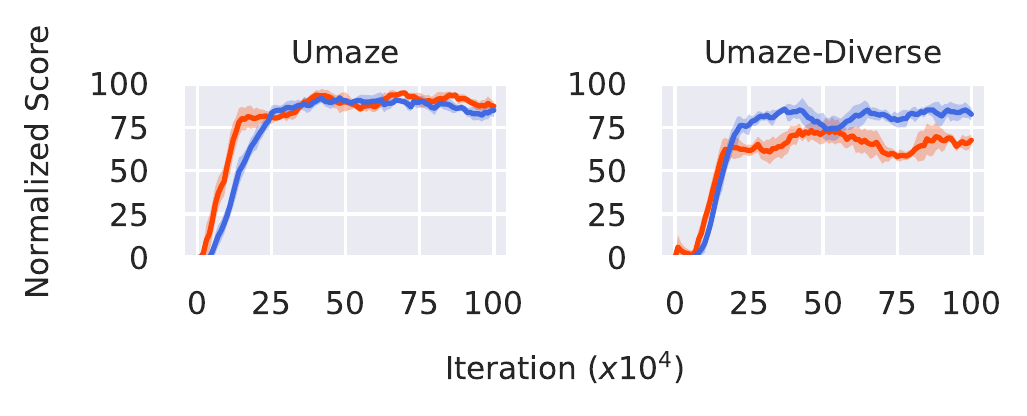}
	\caption{Offline learning curves with 1 million iterations. The x-axis is the number of iterations and the y-axis is the normalized score. Performance was averaged over $5$ random seeds, after using a smoothing window of size 10.}
	\label{fg:apdx_antmaze}
\end{figure}

\subsection{Reproducing Details} \label{appendix:reproduce-details} 

This section includes all experimental details to reproduce any empirical results in this paper. We use python version 3.9.6, gym version 0.10.0, pytorch version 1.10.0. 

\subsection{Reproducing Details on Tabular Domain} \label{sec-reproduce-tabular}

\textbf{Four room environment:} The environment is a $13\times13$ gridword, with walls separating the whole space into 4 rooms (as shown in Figure \ref{fg:apdx_tabular_policy}). The black area refers to the wall. The agent starts from the lower-left corner and learns to stay in the upper-right corner. When an agent runs into the wall, it returns to its previous state. The agent gets a $+1$ when it transits to the state in the upper-right corner and gets $0$ otherwise. The discount rate is $0.9$. Thus the upper bound of state value is $10$. In tabular experiments, each trajectory was limited to $100$ steps. $\tau$ was set to $0.01$. A mini-batch update was used. The agent sampled 100 transitions at each iteration. Among the $4$ datasets we used, mixed and random datasets used the random restart to ensure full state-action pairs coverage, while expert and missing-action datasets did not.

\textbf{Offline data collection:} We used value iterations to find the optimal policy ($10$k iterations). For the expert dataset, we collect $10$k transitions with the optimal policy. For the random dataset, we collected $10$k transitions with a random restart and equal probability of taking each action. The mixed dataset consists of $100$ transitions from the expert dataset and $9900$ transitions from the random dataset. The missing action dataset is constructed by all going-down transitions in the upper-left room from the mixed dataset.

\textbf{Algorithm parameter sweep:} The learning rate of InAC, Oracle-Max, and FQI was swept in $[0.1, 0.03, 0.01, 0.003, 0.001]$. Sarsa had a larger range: $[0.1, 0.03, 0.01, 0.003, 0.001, 0.0003, 0.0001, 0.00003]$. The $\tau$ of in-sample methods was fixed to 0.01. 
	
\subsection{Reproducing Details of Deep RL Algorithms} \label{sec-reproduce-deeprl}

%\textbf{Describe the NN architectures used for each function approximator, such as policy network, state-value network, state-action value network, whenever applicable.}
\textbf{Network architecture:} In mujoco tasks, we used $2$ hidden layers with $256$ nodes each for all neural networks. In discrete action environments, we used $2$ hidden layers with $64$ nodes each. 

\textbf{Offline data generation details:} In continuous control tasks, we used the datasets provided by D4RL. In discrete control tasks, we used a well-trained DQN agent to collect data. The DQN agent had 2 hidden layers with 64 nodes on each, with FTA~\citep{pan2021fuzzy} activation function on the last hidden layer and ReLU on others. In Acrobot, the agent was trained for 40k steps with batch size 64. In Lunar Lander and Mountain Car, we trained the agent for 500k and 60k steps separately, with other settings the same as in Acrobot. The expert dataset contains 50k transitions collected with the fixed policy learned by the DQN agent. The mixed dataset has 2k (4\%) near-optimal transitions and 48k (96\%) transitions collected with a randomly initialized policy.

\textbf{Offline training details:} In all tasks, we used minibatch sampling, and the mini-batch size was set to $100$. We used the ADAM optimizer and ReLU activation function. The target network is updated by using Polyak average: $0.995 \times target\_weight + 0.005 \times learning\_weight$. We trained the agent for $0.8$ million iterations and $70$k iterations in mujoco and discrete action environments respectively.

\textbf{Fine Tuning details:} We kept all settings as same as in offline learning, and used the learned policy as initialization. The offline data was filled into the buffer at the beginning of fine-tuning. New interactions would be appended to the buffer later in fine-tuning. No data were removed. The fine-tuning had $0.8$M steps. 

\textbf{Policy evaluation details:} The policy was evaluated for $5$ episodes in the true environment with a timeout setting. Acrobot and Lunar Lander had timeout=$500$, Mountain Car used $2000$, and mujoco tasks used $1000$. The numbers reported were averaged over $10$ random seeds. %, except that the sensitivity plot of $\tau$ (Figure \ref{fg:tau_sensit}) used $5$ random seeds.

\textbf{Algorithm parameter setting:}
 
Mujoco tasks: For all algorithms, the learning rate was swept in $\{3\times10^{-4}, 1\times10^{-4}, 3\times10^{-5}\}$. 
	InAC swept $\tau$ in $\{1.0, 0.5, 0.33, 0.1, 0.01\}$.
	AWAC swept $\lambda$ in $\{1.0, 0.5, 0.33, 0.1, 0.01\}$.
	IQL swept expectile in $\{0.9, 0.7\}$ and temperature in $\{10.0, 3.0\}$. The number came from what was reported in the original IQL paper.
	TD3+BC used $\alpha=2.5$ as in the original paper.
	CQL-SAC used automatic entropy tuning as in the original paper.

Discrete action environments: 
	For all algorithms, the learning rate was swept in $\{0.003, 0.001, 0.0003, 0.0001, 3e-5, 1e-5\}$.
	For InAC, $\tau$ was swept in $\{1.0, 0.5, 0.1, 0.05, 0.01\}$.
	IQL had the same parameter sweeping range as in mujoco tasks.
	AWAC used $\lambda=1.0$ as in the original paper.
	CQL used $\alpha=5.0$.

\end{document}